\newcommand{\tsn}[1]{{\left\vert\kern-0.25ex\left\vert\kern-0.25ex\left\vert #1 
    \right\vert\kern-0.25ex\right\vert\kern-0.25ex\right\vert}}
\definecolor{darkred}{RGB}{150,0,0}
\definecolor{darkgreen}{RGB}{0,150,0}
\definecolor{darkblue}{RGB}{0,0,200}
\newtheorem{theorem}{Theorem}[section]
\newtheorem{claim}{Claim}
\newtheorem{assumption}{Assumption}
\newtheorem{lemma}[theorem]{Lemma}
\newtheorem{definition}[theorem]{Definition}
\newcommand{\eps}{\varepsilon}
\newcommand{\tin}[1]{\|{#1}\|_{\ell_\infty}}
\newcommand{\tone}[1]{\|{#1}\|_{\ell_1}}
\newcommand{\eb}{\vct{e}}
\newcommand{\be}{\epsilon}
\newcommand{\Scp}{\mathcal{S}_+}
\newcommand{\Scn}{\mathcal{S}_-}
\newcommand{\el}{L}
\newcommand{\bp}{\beta}
\newcommand{\bn}{\alpha}
\newcommand{\beq}{\begin{equation}}
\newcommand{\eeq}{\end{equation}}
\newcommand{\nn}{\nonumber}
\newcommand{\la}{\lambda}
\newcommand{\A}{{\mtx{A}}}
\newcommand{\Ub}{{\mtx{U}}}
\newcommand{\B}{{{\mtx{B}}}}
\newcommand{\Lc}{{\cal{L}}}
\newcommand{\Jc}{{\cal{J}}}
\newcommand{\Dc}{{\cal{D}}}
\newcommand{\Pb}{{\mtx{P}}}
\newcommand{\Cb}{{\mtx{C}}}
\newcommand{\bSi}{{\boldsymbol{{\Sigma}}}}
\newcommand{\onebb}{{\mathbf{1}}}
\newcommand{\Iden}{{\mtx{I}}}
\newcommand{\distas}{\overset{\text{i.i.d.}}{\sim}}
\newcommand{\order}[1]{{\cal{O}}(#1)}
\newcommand{\smn}[1]{{\sigma_{\min}(#1)}}
\newcommand{\tn}[1]{\|{#1}\|_{\ell_2}}
\newcommand{\tf}[1]{\|{#1}\|_{F}}
\newcommand{\bteta}{\boldsymbol{\theta}}
\newcommand{\Sc}{\mathcal{S}}
\newcommand{\pa}{{\partial}}
\newcommand{\Nn}{\mathcal{N}}
\newcommand{\vb}{\vct{v}}
\newcommand{\Jb}{\mtx{J}}
\newcommand{\Ic}{{\mathcal{I}}}
\newcommand{\cb}{\mtx{c}}
\newcommand{\w}{\vct{w}}
\newcommand{\g}{{\vct{g}}}
\newcommand{\rbb}{\vct{\bar{r}}}
\newcommand{\rt}{\vct{\tilde{r}}}
\newcommand{\opnorm}[1]{\left\|#1\right\|}
\newcommand{\fronorm}[1]{\left\|#1\right\|_{F}}
\newcommand{\twonorm}[1]{\left\|#1\right\|_{\ell_2}}
\newcommand{\infnorm}[1]{\left\|#1\right\|_{\ell_\infty}}
\newcommand{\abs}[1]{\left|#1\right|}
\newcommand{\x}{\vct{x}}
\newcommand{\rb}{\vct{r}}
\newcommand{\y}{\vct{y}}
\newcommand{\W}{\mtx{W}}
\newcommand{\bgl}{{~\big |~}}
\definecolor{emmanuel}{RGB}{255,127,0}
\newcommand{\pb}{{\vct{p}}}
\newcommand{\R}{\mathbb{R}}
\newcommand{\Pro}{\mathbb{P}}
\newcommand{\E}{\operatorname{\mathbb{E}}}
\newcommand{\grad}[1]{{\nabla\Lc(#1)}}
\newcommand{\vct}[1]{\bm{#1}}
\newcommand{\mtx}[1]{\bm{#1}}
\newcommand{\Pc}{{\cal{P}}}
\newcommand{\X}{{\mtx{X}}}
\numberwithin{equation}{section} 
\def \endprf{\hfill {\vrule height6pt width6pt depth0pt}\medskip}
\newenvironment{proof}{\noindent {\bf Proof} }{\endprf\par}
\begin{document}

\title{Gradient Descent with Early Stopping is Provably Robust\\to Label Noise for Overparameterized Neural Networks}
\author{Mingchen Li\thanks{{Department of Computer Science and Engineering, University of California, Riverside, CA}}\quad \quad Mahdi Soltanolkotabi\thanks{Ming Hsieh Department of Electrical Engineering, University of Southern California, Los Angeles, CA}\quad  \quad Samet Oymak\thanks{{Department of Electrical and Computer Engineering, University of California, Riverside, CA}}}
\maketitle

\begin{abstract}
Modern neural networks are typically trained in an over-parameterized regime where the parameters of the model far exceed the size of the training data. Such neural networks in principle have the capacity to (over)fit any set of labels including pure noise. Despite this, somewhat paradoxically, neural network models trained via first-order methods continue to predict well on yet unseen test data. This paper takes a step towards demystifying this phenomena. Under a rich dataset model, we show that gradient descent is provably robust to noise/corruption on a constant fraction of the labels despite overparameterization. In particular, we prove that: (i) In the first few iterations where the updates are still in the vicinity of the initialization gradient descent only fits to the correct labels essentially ignoring the noisy labels. (ii) to start to overfit to the noisy labels network must stray rather far from from the initialization which can only occur after many more iterations. Together, these results show that gradient descent with early stopping is provably robust to label noise and shed light on the empirical robustness of deep networks as well as commonly adopted heuristics to prevent overfitting.
\end{abstract}

\section{Introduction}

This paper focuses on an intriguing phenomena: overparameterized neural networks are surprisingly robust to label noise when first order methods with early stopping is used to train them. To observe this phenomena consider Figure \ref{mnistacc} where we perform experiments on the MNIST data set. Here, we corrupt a fraction of the labels of the training data by assigning their label uniformly at random. We then fit a four layer model via stochastic gradient descent and plot various performance metrics in Figures \ref{mnistacca} and \ref{mnistaccb}. Figure \ref{mnistacca} (blue curve) shows that indeed with a sufficiently large number of iterations the neural network does in fact perfectly fit the corrupted training data. However, Figure \ref{mnistacca} also shows that such a model does not generalize to the test data (yellow curve) and the accuracy with respect to the ground truth labels degrades (orange curve). These plots clearly demonstrate that the model overfits with many iterations. In Figure \ref{mnistaccb} we repeat the same experiment but this time stop the updates after a few iterations (i.e.~use early stopping). In this case the train accuracy degrades linearly (blue curve). However, perhaps unexpected, the test accuracy (yellow curve) remains high even with a significant amount of corruption. This suggests that with early stopping the model does not overfit and generalizes to new test data. Even more surprising, the train accuracy (orange curve) with respect to the ground truth labels continues to stay around $100\%$ even when $50\%$ of the labels are corrupted (see also \cite{guan2018said} and \cite{rolnick2017deep} for related empirical experiments). That is, with early stopping overparameterized neural networks even correct the corrupted labels! These plots collectively demonstrate that overparameterized neural networks when combined with early stopping have unique generalization and robustness capabilities. As we detail further in Section \ref{sec numer} this phenomena holds (albeit less pronounced) for richer data models and architectures. 

\begin{figure}[t!]
\begin{centering}
\begin{subfigure}[t]{3.1in}
\begin{tikzpicture}
\node at (0,0) {\includegraphics[height=0.7\linewidth,width=1\linewidth]{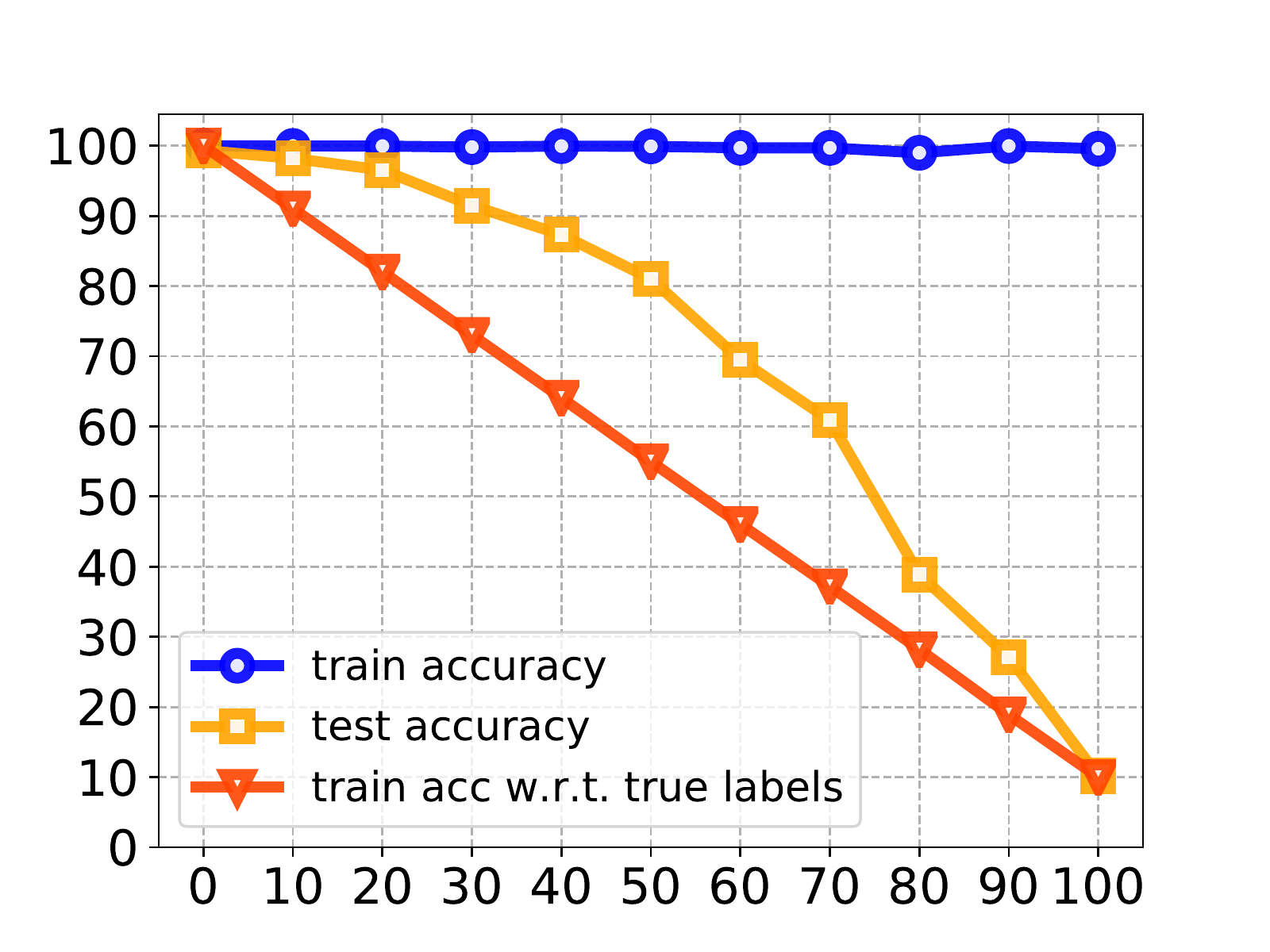}};
\node at (0.1,-2.85) {\small{Fraction of labels corrupted (\%)}};
\node[rotate=90] at (-4.1,0) {Accuracy (\%)};
\end{tikzpicture}\vspace{-6pt}
\caption{Trained model after many iterations}
\label{mnistacca}
\end{subfigure}
\end{centering}\hspace{-5pt}
\begin{centering}
\begin{subfigure}[t]{3.1in}
\begin{tikzpicture}
\node at (0,0) {\includegraphics[height=0.7\linewidth,width=1\linewidth]{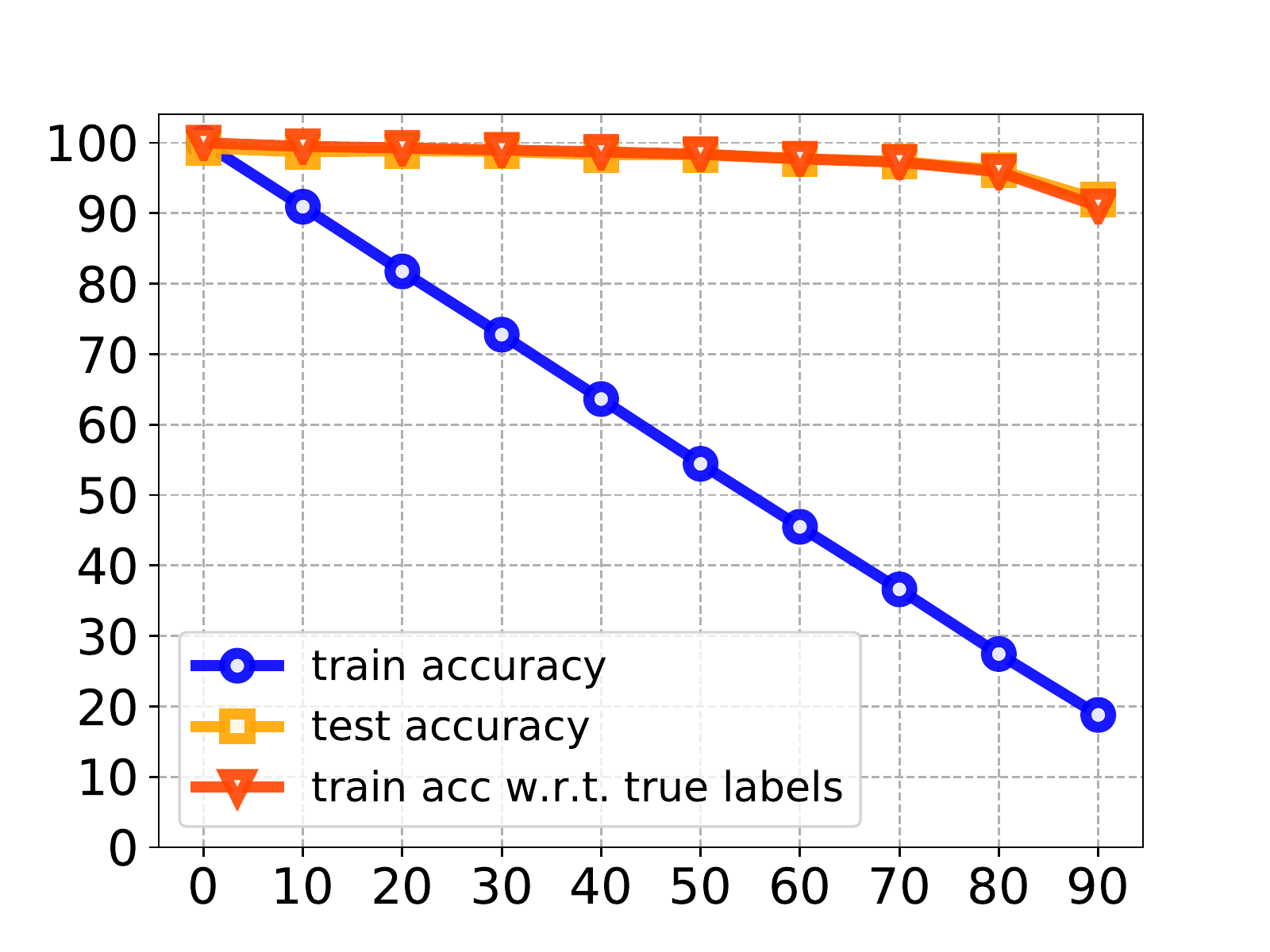}};
\node at (0.1,-2.85) {\small{Fraction of labels corrupted (\%)}};
\node[rotate=90] at (-4.1,0) {Accuracy (\%)};
\end{tikzpicture}\vspace{-6pt}
\caption{Trained model with early stopping}\label{mnistaccb}
\end{subfigure}
\end{centering}\vspace{-0.15cm}\caption{\small{In these experiments we use a $4$ layer neural network consisting of two convolution layers followed by two fully-connected layers to train MNIST with various amounts of random corruption on the labels. In this architecture the convolution layers have width $64$ and $128$ kernels, and the fully-connected layers have $256$ and $10$ outputs, respectively. Overall, there are $4.8$ million trainable parameters. {We use 50k samples for training, 10k samples for validation, and we test the performance on a 10k test dataset.} We depict the training accuracy both w.r.t. the corrupted and uncorrupted labels as well as the test accuracy. (a) Shows the performance after 200 epochs of Adadelta where near perfect fitting to the corrupted data is achieved. (b) Shows the performance with early stopping. We observe that with early stopping the trained neural network is robust to label corruption.}}\label{mnistacc}
\vspace{-0.3cm}
\end{figure}

This paper aims to demystify the surprising robustness of overparameterized neural networks when early stopping is used. We show that gradient descent is indeed provably robust to noise/corruption on a {\em{constant fraction of the labels}} in such over-parameterized learning scenarios. In particular, under a fairly expressive dataset model and focusing on one-hidden layer networks, we show that after a few iterations (a.k.a.~\emph{early stopping}), gradient descent finds a model (i) that is within a small neighborhood of the point of initialization and (ii) only fits to the correct labels essentially ignoring the noisy labels. We complement these findings by proving that if the network is trained to overfit to the noisy labels, then the solution found by gradient descent must stray rather far from the initial model. Together, these results highlight the key features of a solution that {\em{generalizes well}} vs. a solution that {\em{fits well}}.\vspace{-1pt}

Our theoretical results further highlight the role of {\em{the distance between final and initial network weights}} as a key feature that determines noise robustness vs. overfitting. This is inherently connected to the commonly used early stopping heuristic for DNN training as this heuristic helps avoid models that are too far from the point of initialization. In the presence of label noise, we show that gradient descent {\em{implicitly}} ignores the noisy labels as long as the model parameters remain close to the initialization. Hence, our results help explain why early stopping improves robustness and helps prevent overfitting. Under proper normalization, the required distance between the final and initial network and the predictive accuracy of the final network is independent of the size of the network such as number of hidden nodes. Our extensive numerical experiments corroborate our theory and verify the surprising robustness of DNNs to label noise. Finally, we would like to note that while our results show that solutions found by gradient descent are inherently robust to label noise, specialized techniques such as $\ell_1$ penalization or sample reweighting are known to further improve robustness. Our theoretical framework may enable more rigorous understandings of the benefits of such heuristics when training overparameterized models.




\subsection{Prior Art}
Our work is connected to recent advances on theory for deep learning as well as heuristics and theory surrounding outlier robust optimization.

\noindent {\bf{Robustness to label corruption:}} DNNs have the ability to fit to pure noise \cite{zhang2016understanding}, however they are also empirically observed to be highly resilient to label noise and generalize well despite large corruption \cite{rolnick2017deep}. In addition to early stopping, several heuristics have been proposed to specifically deal with label noise \cite{reed2014training,malach2017decoupling,zhang2018generalized,scott2013classification,khetan2017learning,han2018co}. See also \cite{frenay2014comprehensive,menon2018learning,ren2018learning,shen2018iteratively} for additional work on dealing with label noise in classification tasks. Label noise is also connected to outlier robustness in regression which is a traditionally well-studied topic. In the context of robust regression and high-dimensional statistics, much of the focus is on regularization techniques to automatically detect and discard outliers by using tools such as $\ell_1$ penalization \cite{chen2013robust,li2013compressed,balakrishnan2017computationally,liu2018high,bhatia2015robust,candes2011robust,foygel2014corrupted}. We would also like to note that there is an interesting line of work that focuses on developing robust algorithms for corruption not only in the labels but also input data \cite{diakonikolas2018sever,prasad2018robust,klivans2018efficient}. Finally, noise robustness is particularly important in safety critical domains. Noise robustness of neural nets has been empirically investigated by Hinton and coauthors in the context of automated medical diagnosis \cite{guan2018said}.

\noindent {\bf{Overparameterized neural networks:}} Intriguing properties and benefits of overparameterized neural networks has been the focus of a growing list of publications \cite{zhang2016understanding,soltanolkotabi2018theoretical,brutzkus2017sgd,chizat2018global,arora2018optimization, Ji:2018aa, venturi2018spurious, Zhu:2018aa, Soudry:2016aa, Brutzkus:2018aa, azizan2018stochastic, neyshabur2018towards}. A recent line of work \cite{li2018learning,allen2018learning,allen2018convergence,du2018gradient,zou2018stochastic, du2018gradient2, anon2019overparam} shows that overparameterized neural networks can fit the data with random initialization if the number of hidden nodes are polynomially large in the size of the dataset. This line of work however is not informative about the robustness of the trained network against corrupted labels. Indeed, such theory predicts that (stochastic) gradient descent will eventually fit the corrupted labels. In contrast, our focus here is not in finding a global minima, rather a solution that is robust to label corruption. In particular, we show that with early stopping we fit to the correct labels without overfitting to the corrupted training data. Our result also differs from this line of research in another way. The key property utilized in this research area is that the Jacobian of the neural network is well-conditioned at a random initialization if the dataset is sufficiently diverse (e.g.~if the points are well-separated). In contrast, in our model the Jacobian is approximately low-rank with the rank of the Jacobian corresponding to different clusters/classes within the dataset. We harness this low-rank nature to prove that gradient descent is robust to label corruptions. We further utilize this low-rank structure to explain why neural networks can work with much more modest amounts of overparameterization where the number of parameters grow with rank rather than the sample size. Furthermore, our numerical experiments verify that the Jacobian matrix of real datasets (such as CIFAR10) indeed exhibit low-rank structure. This is closely related to the observations on the Hessian of deep networks which is empirically observed to be low-rank \cite{sagun2017empirical,chaudhari2016entropy}.  An equally important question for understanding the convergence behavior of optimization algorithms for overparameterized models is understanding their generalization capabilities. This is the subject of a few interesting recent papers \cite{Arora:2018aa, Bartlett:2017aa, Golowich:2017aa, song2018mean, Brutzkus:2017aa, oymak2019generalization, Belkin:2018aa, Liang:2018aa, Belkin:2018ab}. While in this paper we do not tackle generalization in the traditional sense, we do show that solutions found by gradient descent are robust to label noise/corruption which demonstrates their predictive capabilities and in turn suggests better generalization.

\subsection{Models}
 \begin{figure*} 
\centering
\includegraphics[scale=1]{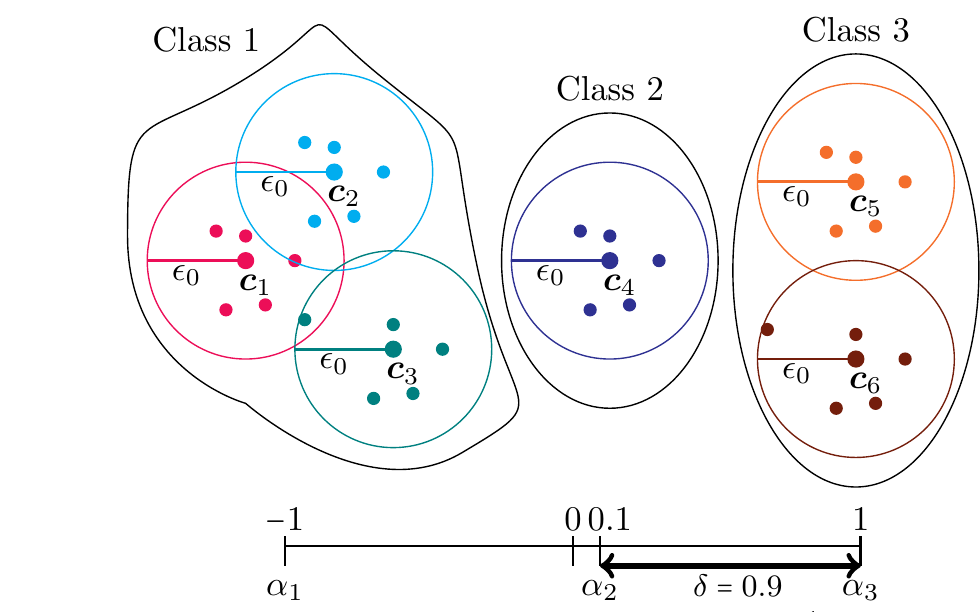}
\caption{Visualization of the input/label samples and classes according to the clusterable model in Definition \ref{cdata}. In the depicted example there are $K=6$ clusters, $\bar{K}=3$ classes.  In this example the number of data points is $n=30$ with each cluster containing $5$ data points. The labels associated to classes $1$, $2$, and $3$ are $\alpha_1=-1$, $\alpha_2=0.1$, and $\alpha_3=1$, respectively so that $\delta=0.9$. We note that the placement of points are exaggerated for clarity. In particular, per definition the cluster center and data points all have unit Euclidean norm.}
 \label{mod}
\end{figure*}

We first describe the dataset model used in our theoretical results. In this model we assume that the input samples $\vct{x}_1,\vct{x}_2,\ldots,\vct{x}_n\in\R^d$ come from $K$ clusters which are located on the unit Euclidean ball in $\R^d$. We also assume our dataset consists of $\bar{K}\le K$ classes where each class can be composed of multiple clusters. We consider a deterministic dataset with $n$ samples with roughly balanced clusters each consisting on the order of ${n/K}$ samples.\footnote{This is for ease of exposition rather than a particular challenge arising in the analysis.} Finally, while we allow for multiple classes, in our model we assume the labels are scalars and take values in $[-1,1]$ interval. Each unit Euclidean norm $\x$ is assigned to one of these class labels as described next. We formally define our dataset model below and provide an illustration in Figure \ref{mod}.

\begin{definition} [$(\eps_0,\delta)$ Clusterable dataset] \label{cdata} A clusterable dataset of size $n$ consisting of input/label pairs $\{(\vct{x}_i,y_i)\}_{i=1}^n\in\R^d\times \R$ is described as follows. The input data have unit Euclidean norm and originate from $K$ clusters with the $\ell$th cluster containing $n_\ell$ data points. The number of points originating from each cluster is well-balanced in the sense that $c_{low}\frac{n}{K}\le n_\ell\le c_{up}\frac{n}{K}$ with $c_{low}$ and $c_{up}$ two numerical constants obeying $0<c_{low}<c_{up}$. We use $\{\cb_\ell\}_{\ell=1}^K\subset\R^d$ to denote the cluster centers which are unit Euclidean norm vectors. The input data points $\vct{x}$ that belong to the $\ell$-th cluster obey $\twonorm{\vct{x}-\vct{c}_\ell}\le \eps_0$, with $\eps_0>0$ denoting the input noise level.

	The labels $y_i$ belong to one of $\bar{K}\le K$ classes. Specifically, $y_i\in\{\alpha_1,\alpha_2,\ldots,\alpha_{\bar{K}}\}$ with $\{\alpha_\ell\}_{\ell=1}^{\bar{K}}\in[-1,1]$ denoting the labels associated with each class. All the elements of the same cluster belong to the same class and hence have the same label. However, a class can contain multiple clusters. Finally, the labels/classes are separated in the sense that\vspace{-4pt}
	\begin{align}
	|\alpha_r-\alpha_s|\geq \delta\quad\text{for}\quad r\neq s,\label{alpha eq}
	\end{align}
	with $\delta>0$ denoting the label separation and for any two clusters $\ell,\ell'$ belonging to two different classes we have $\twonorm{\vct{c}_\ell-\vct{c}_{\ell'}}\ge 2\eps_0$.
\end{definition}\vspace{-6pt}

In the data model above $\{\cb_\ell\}_{\ell=1}^K$ are the $K$ cluster centers that govern the input distribution. We note that in this model different clusters can be assigned to the same label. Hence, this setup is rich enough to model data which is not linearly separable: e.g.~over $\R^2$, we can assign cluster centers $(0,1)$ and $(0,-1)$ to label $1$ and cluster centers $(1,0)$ and $(-1,0)$ to label $-1$. Note that the maximum number of classes are dictated by the separation $\delta$, in particular, ${\bar{K}}\leq \frac{2}{\delta}+1$. Our dataset model is inspired from mixture models and is also related to the setup of \cite{li2018learning} which provides polynomial guarantees for learning shallow networks. Next, we introduce our noisy/corrupted dataset model.\vspace{-2pt}
\begin{definition}[$(\rho,\eps_0,\delta)$ corrupted dataset] \label{noisy model}Let $\{(\x_i,\widetilde{y}_i)\}_{i=1}^n$ be an $(\eps_0,\delta)$ clusterable dataset with $\alpha_1$, $\alpha_2$, $\ldots, \alpha_{\bar{K}}$ denoting the $\bar{K}$ possible class labels. A $(\rho,\eps_0,\delta)$ noisy/corrupted dataset $\{(\x_i,{y}_i)\}_{i=1}^n$ is generated from $\{(\x_i,\widetilde{y}_i)\}_{i=1}^n$ as follows. For each cluster $1\leq \ell\leq K$, at most $\rho n_\ell$ of the labels associated with that cluster (which contains $n_\ell$ points) is assigned to another label value chosen from $\{\alpha_\ell\}_{\ell=1}^{\bar{K}}$. We shall refer to the initial labels $\{\widetilde{y}_i\}_{i=1}^n$ as the ground truth labels.
\end{definition}\vspace{-2pt}
We note that this definition allows for a fraction $\rho$ of corruptions in each cluster. Next we define the ground truth label function.
\vspace{-0.1cm}
\begin{definition}[Ground truth label function]\label{labfunc} Consider the setting of Definition \ref{cdata} with cluster centers $\{\cb_\ell\}_{\ell=1}^K\subset\R^d$ and class labels $\{\alpha_\ell\}_{\ell=1}^{\bar{K}}$. We define the ground truth label function $\vct{x}\mapsto \widetilde{y}(\vct{x})$ as the function that maps a point $\vct{x}\in\R^d$ to a class label $\{\alpha_1,\alpha_2,\ldots,\alpha_{\bar{K}}\}$ by assigning to it the label corresponding to the closest cluster center. In mathematical terms\vspace{-2pt}
	\begin{align*}
	\widetilde{y}(\vct{x})=\text{label of }\vct{c}_{\hat{\ell}}\quad\text{where}\quad \hat{\ell}=\underset{1\le \ell\le K}{\arg\min} \twonorm{\vct{x}-\vct{c}_\ell}.
	\end{align*}
	In particular, when applied to the training data it yields the ground truth labels i.e.~$\tilde{y}(\vct{x}_i)=\tilde{y}_i$.
\end{definition}

\vspace{-0.2cm}




\noindent {\bf{Network model:}} We will study the ability of neural networks to learn this corrupted dataset model. To proceed, let us introduce our neural network model. We consider a network with one hidden layer that maps $\R^d$ to $\R$. Denoting the number of hidden nodes by $k$, this network is characterized by an activation function $\phi$, input weight matrix $\W\in\R^{k\times d}$ and output weight vector $\vb\in\R^k$. In this work, we will fix output $\vb$ to be a unit vector where half the entries are $1/\sqrt{k}$ and other half are $-1/\sqrt{k}$ to simplify exposition.\footnote{If the number of hidden units is odd we set one entry of $\vct{v}$ to zero.} We will only optimize over the weight matrix $\W$ which contains most of the network parameters and will be shown to be sufficient for robust learning. We will also assume $\phi$ has bounded first and second order derivatives, i.e.~$\abs{\phi'(z)},\abs{\phi''(z)}\le \Gamma$ for all $z$. The network's prediction at an input sample $\x$ is given by\vspace{-2pt}
\begin{align}
\vct{x}\mapsto f(\W,\x)=\vb^T\phi(\W\x),\label{nn model}
\end{align}
where the activation function $\phi$ applies entrywise. Given a dataset $\{(\x_i,y_i)\}_{i=1}^n$, we shall train the network via minimizing the empirical risk over the training data via a quadratic loss\vspace{-2pt}
\begin{align}
\Lc(\W)=\frac{1}{2}\sum_{i=1}^n (y_i-f(\W,\x_i))^2.\label{q loss}
\end{align}
In particular, we will run gradient descent with a constant learning rate $\eta$, starting from a random initialization $\W_0$ via the following gradient descent updates
\begin{align}
\W_{\tau+1}=\W_{\tau}-\eta \grad{\W_{\tau}}.\label{grad it}
\end{align}
\section{Main results}

Our main result shows that overparameterized neural networks, when trained via gradient descent using early stopping are fairly robust to label noise. Throughout, $\|\cdot\|$ denotes the largest singular value of a given matrix. $c$, $c_0$, $C$, $C_0$ etc. represent numerical constants. The ability of neural networks to learn from the training data, even without label corruption, naturally depends on the diversity of the input training data. Indeed, if two input data are nearly the same but have different uncorrupted labels reliable learning is difficult. We will quantify this notion of diversity via a notion of condition number related to a covariance matrix involving the activation $\phi$ and the cluster centers $\{\cb_\ell\}_{\ell=1}^K$.

\begin{definition} [Neural Net Cluster Covariance] \label{clust cov}Define the matrix of cluster centers
	\[
	\Cb=[\cb_1~\dots~\cb_K]^T\in\R^{K\times d}.
	\]
	Let $\g\sim\Nn(0,\Iden_d)$. Define the neural net covariance matrix $\bSi(\Cb)$ as
	\[
	\bSi(\Cb)=(\Cb\Cb^T)\bigodot \E_{\g}[\phi'(\Cb\g)\phi'(\Cb\g)^T].
	\]
	Here $\bigodot$ denotes the elementwise product. Also denote the minimum eigenvalue of $\bSi(\Cb)$ by $\la(\Cb)$.
\end{definition} 
\vspace{-0.2cm}
One can view $\bSi(\Cb)$ as an empirical kernel matrix associated with the network where the kernel is given by $\mathcal{K}(\vct{c}_i,\vct{c}_j)=\bSi_{ij}(\Cb)$. Note that $\bSi(\Cb)$ is trivially rank deficient if there are two cluster centers that are identical. In this sense, the minimum eigenvalue of $\bSi(\Cb)$ will quantify the ability of the neural network to distinguish between distinct cluster centers. The more distinct the cluster centers, the larger $\lambda(\mtx{C})$ is. Throughout we shall assume that $\la(\Cb)$ is strictly positive. Related assumptions are empirically and theoretically studied in earlier works by \cite{allen2018convergence,xie2016diverse,du2018gradient,du2018gradient2}. For instance, when the cluster centers are maximally diverse e.g.~uniformly at random from the unit sphere $\la(\mtx{C})$ scales like a constant (\cite{anon2019overparam}). Additionally, for ReLU activation, if the cluster centers are separated by a distance $\nu>0$, then $\la(\Cb)\geq  \frac{\nu}{100K^2}$ (\cite{zou2018stochastic,anon2019overparam}).


Now that we have a quantitative characterization of distinctiveness/diversity in place we are now ready to state our main result. Throughout we use $c_{\Gamma}, C_{\Gamma}$, etc.~to denote constants only depending on $\Gamma$. We note that this theorem is slightly simplified by ignoring logarithmic terms and precise dependencies on $\Gamma$.  We refer the reader to Theorem \ref{main thm robust22} for the precise statement.\vspace{-1pt}
\begin{theorem} [Main result] \label{main thm robust2}Consider a clusterable corrupted dataset of $\{(\x_i,y_i)\}_{i=1}^n\in\R^d\times \R$ per Definition \ref{noisy model} with cluster centers $\{\vct{c}_\ell\}_{\ell=1}^K$ aggregated as rows of a matrix $\mtx{C}\in\R^{K\times d}$. Also consider a one-hidden layer neural network with $k$ hidden units and one output of the form $\vct{x}\mapsto \vct{v}^T\phi\left(\mtx{W}\vct{x}\right)$ with $\mtx{W}\in\R^{k\times d}$ and $\vct{v}\in\R^k$. Also suppose the activation $\phi$ obeys $|\phi(0)|,|\phi'(z)|,|\phi''(z)|\leq \Gamma$ for all $z$ and some $\Gamma\geq 1$. Furthermore, fix half of the entries of $\vct{v}$ to $1/\sqrt{k}$ and the other half to $-1/\sqrt{k}$ and train only over $\mtx{W}$. Starting from an initial weight matrix $\mtx{W}_0$ with i.i.d.~$\mathcal{N}(0,1)$ entries, run gradient descent updates $\mtx{W}_{\tau+1}=\mtx{W}_\tau-\eta\nabla \mathcal{L}(\mtx{W}_\tau)$ on the least-squares loss \eqref{q loss} with step size $\eta= \bar{c}_{\Gamma}\frac{K}{n}\frac{1}{\opnorm{\mtx{C}}^2}$. Furthermore, assume the number of hidden units obey
	\vspace{-4pt}\begin{align*}
	k\ge C_{\Gamma}\frac{K^2\|\Cb\|^4}{\la(\Cb)^4},
	\end{align*}
	with $\lambda(\mtx{C})$ be the minimum cluster eigenvalue per Definition \ref{clust cov}. Then as long as $\eps_0\le \widetilde{c}_{\Gamma}\delta\la(\Cb)^2/K^2$ and $\rho\le\frac{\delta}{8}$
	with probability at least $1-3/K^{100}-Ke^{-100d}$, after $T= c_{\Gamma}\frac{\|\Cb\|^2}{\lambda(\Cb)}$ iterations, the neural network $f(\mtx{W}_{T},\cdot)$ found by gradient descent predicts the true label function $\tilde{y}(\x)$ per Definition \ref{labfunc} for all input $\vct{x}\in\R^d$ that lie within $\eps_0$ neighborhood of a cluster center $\cb_1,\dots,\cb_K$
	\vspace{-2pt}\begin{align}
	\arg\min_{\alpha_\ell:1\leq \ell\leq \bar{K}}|f(\W_{T},\x)-\alpha_\ell|=\widetilde{y}(\x).\label{pls satisfy this eq2}
	\end{align}
	Eq. \eqref{pls satisfy this eq2} applies to all training samples. Finally, for all $0\leq \tau\leq T$, the distance to initialization obeys\vspace{-3pt}
	\[
	\tf{\W_\tau-{\W}_0} \le \bar{C}_{\Gamma} \left(\sqrt{\frac{K}{\lambda(C)}}+\frac{K^2}{\lambda(C) \|\Cb\|^2}\tau\eps_0\right).
	\]
\end{theorem}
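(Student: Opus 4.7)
The plan is to exploit the low-rank structure of the Jacobian of the shallow network at random initialization, which (thanks to the clusterable model) has a dominant $K$-dimensional subspace tied to the cluster centers $\vct{c}_1,\dots,\vct{c}_K$. The key object will be the Jacobian $\Jb(\W)\in\R^{n\times kd}$ whose rows are $\nabla_{\W}f(\W,\x_i)$. At initialization $\W_0$ with i.i.d.\ Gaussian entries, I would first show via standard concentration (Lipschitz Gaussian process / Bernstein) that $\Jb(\W_0)\Jb(\W_0)^T$ is close to its expectation, which essentially factors through the cluster-center Gram matrix $\bSi(\Cb)$. This yields two quantitative facts: the top $K$ singular values of $\Jb(\W_0)$ are of order $\sqrt{n\lambda(\Cb)/K}$ and the information subspace (spanned by left singular vectors with these large singular values) is within $O(\eps_0)$ of the subspace $\Sc$ of vectors in $\R^n$ that are constant on each cluster.

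Next, I decompose the label vector $\y=\tilde{\y}+\vct{e}$, where $\tilde{\y}\in\Sc$ are the ground truth labels and $\vct{e}$ is the corruption, supported on at most $\rho n_\ell$ entries per cluster. The signal $\tilde{\y}$ lies essentially entirely in the information subspace, while $\vct{e}$ is spread across the spectrum: its component in the information subspace is at most roughly $\rho\le\delta/8$ fraction of the signal, and the label gap $\delta$ in \eqref{alpha eq} provides the slack needed so that correct rounding survives a small residual. I would track the residual $\rb_\tau=\y-f(\W_\tau)$ under the gradient iteration \eqref{grad it} as $\rb_{\tau+1}=\rb_\tau-\eta\Jb(\W_\tau)\Jb(\W_\tau)^T\rb_\tau+\vct{h}_\tau$, where $\vct{h}_\tau$ collects second-order Taylor terms controlled by $\Gamma$ and the trajectory radius. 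Projecting onto the information vs.\ nuisance subspaces gives: on the information side, a contraction rate $1-\eta\lambda(\Cb)n/K$, while on the nuisance side the residual is nearly preserved. Choosing $T\asymp\|\Cb\|^2/\lambda(\Cb)$ (so that $\eta T \cdot n\lambda(\Cb)/K \asymp \log$) makes the information component of $\rb_T$ negligible while the nuisance component (which absorbs the corruption) stays essentially frozen at $O(\rho)$.

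The distance bound will come from telescoping $\W_{\tau+1}-\W_0=-\eta\sum_{s\le\tau}\nabla\Lc(\W_s)$ and using $\|\nabla\Lc(\W_s)\|_F\le\|\Jb(\W_s)\|\,\|\rb_s\|_{\ell_2}$, splitting the residual into its fast-decaying signal component (whose geometric sum gives the $\sqrt{K/\lambda(\Cb)}$ term) and a slow nuisance component plus an $O(\eps_0)$ signal residual (giving the $\tau\eps_0$ term). Converting the bound on $\rb_T$ into the rounding statement \eqref{pls satisfy this eq2} uses the label-gap $\delta$: since the residual on each clean sample is at most $\delta/2$, the nearest class label must be $\tilde{y}(\x)$; for arbitrary $\x$ within $\eps_0$ of a cluster center, I would invoke the Lipschitz property of $f(\W_T,\cdot)$, controlled by $\Gamma$ and $\|\W_T\|$ (bounded via the distance-to-initialization estimate and standard spectral bounds on the Gaussian $\W_0$).

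The main obstacle is propagating the spectral structure of $\Jb(\W_0)$ along the trajectory: the clean decomposition into signal and nuisance subspaces is only exact at initialization, and I have to show that throughout the $T$ iterations the perturbation $\|\Jb(\W_\tau)-\Jb(\W_0)\|$ is small enough (which is where the overparameterization $k\gtrsim K^2\|\Cb\|^4/\lambda(\Cb)^4$ and the smoothness constant $\Gamma$ enter) that the contraction rate $1-\eta\lambda(\Cb)n/K$ survives. This forces a careful inductive argument of the form ``if the weights stay within radius $R$ of $\W_0$ then the Jacobian does not deviate, hence the residual contracts, hence the weights stay within radius $R$,'' closing the loop via Gronwall-type bookkeeping. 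A secondary technical nuisance is verifying that the input-noise level $\eps_0$ only affects the signal subspace alignment by $O(\eps_0 K^2/\lambda(\Cb)^2)$, which is why the hypothesis $\eps_0\lesssim \delta\lambda(\Cb)^2/K^2$ appears.
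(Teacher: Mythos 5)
Your overall strategy (approximately low-rank/bimodal Jacobian, signal vs.\ nuisance residual decomposition, contraction at rate $1-\eta n\la(\Cb)/K$, telescoped distance bound) is close in spirit to the paper's general theory, but note that the paper never actually runs the bimodal analysis for $\eps_0>0$: its meta-theorem is proved only for an \emph{exactly} low-rank Jacobian, and Theorem \ref{main thm robust2} is then obtained by a coupling argument (Theorem \ref{robust path}) that runs a second, fictitious gradient descent on the perfectly clustered inputs $\tilde{\X}$ from the \emph{same} initialization and bounds $\tf{\W_\tau-\tilde{\W}_\tau}$ and $\tn{f(\W_\tau,\X)-f(\tilde{\W}_\tau,\tilde{\X})}$ growing like $\tau\eta\eps_0$. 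Your direct route would require you to prove a genuinely new bimodal version of that meta-theorem in which the signal subspace rotates with $\W_\tau$ and the nuisance component is only \emph{approximately} frozen; the cross-subspace leakage accumulated over $T$ iterations is exactly the delicate bookkeeping the paper sidesteps by coupling. That part of your plan is plausible but substantially underspecified.

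The concrete gap is your last step. To turn the residual bound into the prediction guarantee \eqref{pls satisfy this eq2} for \emph{all} $\x$ within $\eps_0$ of a cluster center (and to get exactly zero training errors rather than an $O(\eps_0 n)$ fraction), you propose to use the Lipschitz constant of $f(\W_T,\cdot)$, ``controlled by $\Gamma$ and $\|\W_T\|$.'' But that constant is $\Gamma\opnorm{\W_T}\gtrsim\Gamma\sqrt{k}$ since $\opnorm{\W_0}\approx 2(\sqrt{k}+\sqrt{d})$, so the induced error over the $\eps_0$-ball is of order $\Gamma\sqrt{k}\,\eps_0$, which is not $\le\delta/4$ under the theorem's hypothesis $\eps_0\lesssim\delta\la(\Cb)^2/K^2$ (no dependence on $k$, and $k$ can be arbitrarily large). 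This is why the paper needs Theorem \ref{double pert2}: a mean-value argument shows the \emph{centered} difference $\left(f(\W,\x)-f(\W_0,\x)\right)-\left(f(\W,\tilde{\x})-f(\W_0,\tilde{\x})\right)$ is Lipschitz in $\x$ with constant $\order{\Gamma\opnorm{\W-\W_0}}$, and a chaining/union bound over the $K$ balls controls the initialization term by $\order{\Gamma\eps_0\sqrt{d}}$, so the effective local Lipschitz constant is $\order{\Gamma(\opnorm{\W-\W_0}+\sqrt{d})}$ rather than $\Gamma\sqrt{k}$; combined with the coupling bound on $\tf{\W_T-\tilde{\W}_T}$ and the $\ell_\infty$ bound $|f(\tilde{\W}_T,\x_i)-\widetilde{y}_i|\le 4\rho$ from the clean analysis, this yields the uniform statement. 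Without this (or an equivalent) argument, your final rounding step fails, and your per-sample training guarantee also does not follow from the $\ell_2$ control you establish.
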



\noindent Theorem \ref{main thm robust2} shows that gradient descent with early stopping is robust and predicts correct labels despite constant corruption. $\eps_0$ neighborhood of the cluster centers can be viewed as the test data since it corresponds to the support of the input distribution where data is sampled from. Further properties are discussed below.

\vspace{-0pt}
\noindent\textbf{Robustness.} The solution found by gradient descent with early stopping degrades gracefully as the label corruption level $\rho$ grows. In particular, as long as $\rho\leq \delta/8$, the final model is able to correctly classify any input data. In particular, when applied to the training data \eqref{pls satisfy this eq2} yields
$\arg\min_{\alpha_\ell:1\leq \ell\leq \bar{K}}|f(\W_{\tau},\x)-\alpha_\ell|=\widetilde{y}_i$ so that the network labels are identical to the ground truth labels completely removing the corruption on the training data. In our setup, intuitively the label gap obeys $\delta\sim \frac{1}{\bar{K}}$, hence, we prove robustness to\vspace{-0pt}
\[
\text{Total number of corrupted labels}\lesssim\frac{n}{\bar{K}}.
\]
This result is independent of number of clusters and only depends on number of classes. An interesting future direction is to improve this result to allow on the order of $n$ corrupted labels. 

\vspace{-0pt}
\noindent\textbf{Early stopping time.} We only need few iterations to find a good model: Using proposed step size, the iteration number is at most order $K$ and typically scales as $\max(1,K/d)$ up to condition numbers. 

\vspace{-0pt}
\noindent\textbf{Modest overparameterization.} Our result applies as soon as the number of hidden units in the network exceeds $K^2\|\Cb\|^4$ which lies between $K^2$ and $K^4$ which is independent of the sample size $n$. This can be interpreted as network having enough capacity to fit the cluster centers $\{\cb_\ell\}_{\ell=1}^K$ and their true labels. If cluster centers are incoherent (e.g.~random) and $K\geq d$, the required number of parameters in the network ($k\times d$) scales as $dK^2\|\Cb\|^4\lesssim K^4$.

\vspace{-0cm}
\noindent\textbf{Distance from initialization.} Another feature of Theorem \ref{main thm robust2} is that the network weights do not stray far from the initialization as the distance between the initial model and the final model (at most) grows with the square root of the number of clusters ($\sqrt{K}$). Intuitively, more clusters correspond to a richer data distribution, hence we need to travel further away to find a viable model. While our focus in this work is early stopping, the importance of distance to initialization motivates the use of $\ell_2$-regularization with respect to the initial point i.e.~solving the regularized empirical risk minimization\vspace{-0pt}
\begin{align}
\W_{\text{ridge}}=\arg\min_{\W}\frac{1}{2}\sum_{i=1}^n (y_i-f(\W,\x_i))^2+\la \tf{\W-\W_0}^2\label{ridge}
\end{align}
where $\W_0$ is the point of initialization for the gradient based algorithm that will be used to solve \eqref{ridge}.

\vspace{-5pt}
\subsection{To (over)fit to corrupted labels requires straying far from initialization}\vspace{-5pt}
In this section we wish to provide further insight into why early stopping enables robustness and generalizable solutions. Our main insight is that while a neural network maybe expressive enough to fit a corrupted dataset, the model has to travel a longer distance from the point of initialization as a function of the distance from the cluster centers $\eps_0$ and the amount of corruption. We formalize this idea as follows. Suppose (1) two input points are close to each other (e.g.~they are from the same cluster), (2) but their labels are different, hence the network has to map them to distant outputs. Then, the network has to be large enough so that it can amplify the small input difference to create a large output difference. Our first result formalizes this for a randomly initialized network. Our random initialization picks $\W$ with i.i.d.~standard normal entries which ensures that the network is isometric i.e.~given input $\x$, $\E[f(\W,\x)^2]=\order{\tn{\x}^2}$.
\begin{theorem}\label{double pert}
	Let $\vct{x}_1,\vct{x}_2\in\R^d$ be two vectors with unit Euclidean norm obeying $\twonorm{\vct{x}_2-\vct{x}_1}\le \epsilon_0$. Let $f(\mtx{W},\vct{x})=\vct{v}^T\phi\left(\mtx{W}\vct{x}\right)$ where $\vb$ is fixed, $\mtx{W}\in\R^{k\times d}$, and $k\ge c d$ with $c>0$ a fixed constant. Assume $\abs{\phi'},\abs{\phi''}\le \Gamma$. Let $y_1$ and $y_2$ be two scalars satisfying $\abs{y_2-y_1}\ge \delta$. Suppose $\W_0\distas\Nn(0,1)$. Then, with probability $1-2e^{-(k+d)}-2e^{-\frac{t^2}{2}}$, for any $\mtx{W}\in\R^{k\times d}$ such that $\fronorm{\mtx{W}-\mtx{W}_0}\le c\sqrt{k}$ and
	\vspace{-0.1cm}
	\begin{align*}
	f(\mtx{W},\vct{x}_1)=y_1\quad\text{and}\quad f(\mtx{W},\vct{x}_2)=y_2,
	\end{align*}
	holds, we have $\opnorm{\W-\W_0}\ge \frac{\delta}{C\Gamma\eps_0}-\frac{t}{1000}$.
\end{theorem}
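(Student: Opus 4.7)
The plan is to decompose $y_2 - y_1 = h(\W_0) + (h(\W) - h(\W_0))$, where $h(\W) := f(\W,\vct{x}_2) - f(\W,\vct{x}_1)$, and bound each piece on the order of $\Gamma \eps_0$ times an appropriate factor, then rearrange to extract the lower bound on $\opnorm{\W - \W_0}$. The first piece will be controlled by sub-Gaussian concentration exploiting the symmetric structure of $\vct{v}$; the second by a mean-value-theorem Lipschitz estimate in $\W$.

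For the first piece, I would write $h(\W_0) = \sum_{i=1}^k v_i Z_i$ with $Z_i := \phi(g_i) - \phi(g'_i)$, $g_i := \vct{w}_{0,i}^T \vct{x}_2$, $g'_i := \vct{w}_{0,i}^T \vct{x}_1$. Independence of the rows of $\W_0$ makes the $Z_i$ independent; the fact that both $g_i$ and $g'_i$ have marginal distribution $\mathcal{N}(0,1)$ makes each $Z_i$ mean zero; and the pointwise bound $|Z_i| \le \Gamma |g_i - g'_i|$ together with $\mathrm{Var}(g_i - g'_i) = \twonorm{\vct{x}_2 - \vct{x}_1}^2 \le \eps_0^2$ shows that $Z_i$ is sub-Gaussian with $\psi_2$-norm $\lesssim \Gamma \eps_0$. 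A Hoeffding inequality for weighted sums of sub-Gaussians combined with $\twonorm{\vct{v}} = 1$ then yields $|h(\W_0)| \le C_1 \Gamma \eps_0 \, t$ with probability at least $1 - 2e^{-t^2/2}$.

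For the second piece, I would interpolate $\W(s) := \W_0 + s\Delta$ with $\Delta := \W - \W_0$, $\vct{z} := \vct{x}_2 - \vct{x}_1$, and apply the fundamental theorem of calculus to obtain
\[
h(\W) - h(\W_0) = \int_0^1 \vct{v}^T\bigl[\phi'(\W(s)\vct{x}_2) \odot \Delta \vct{z}\bigr]\,ds + \int_0^1 \vct{v}^T\bigl[(\phi'(\W(s)\vct{x}_2) - \phi'(\W(s)\vct{x}_1)) \odot \Delta \vct{x}_1\bigr]\,ds.
\]
The first integrand is bounded by $\Gamma \opnorm{\Delta} \eps_0$ using $\twonorm{\vct{v}} = 1$ and $|\phi'| \le \Gamma$. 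For the second, MVT applied to $\phi'$ gives $|\phi'(\vct{w}_i(s)^T \vct{x}_2) - \phi'(\vct{w}_i(s)^T \vct{x}_1)| \le \Gamma |\vct{w}_i(s)^T \vct{z}|$, and Cauchy-Schwarz with $|v_i| = 1/\sqrt{k}$ bounds the integrand by $(\Gamma/\sqrt{k}) \twonorm{\W(s)\vct{z}} \twonorm{\Delta \vct{x}_1} \le (\Gamma/\sqrt{k})(\opnorm{\W_0} + \opnorm{\Delta}) \eps_0 \opnorm{\Delta}$. The standard spectral bound $\opnorm{\W_0} \lesssim \sqrt{k+d}$ with probability at least $1 - 2e^{-(k+d)}$, combined with $k \ge cd$ and the hypothesis $\fronorm{\Delta} \le c\sqrt{k}$ (hence $\opnorm{\Delta} \le c\sqrt{k}$), then gives $|h(\W) - h(\W_0)| \le C_2 \Gamma \eps_0 \opnorm{\Delta}$.

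Combining the two estimates on the intersection of the two high-probability events yields $\delta \le |y_2 - y_1| \le C_1 \Gamma \eps_0 t + C_2 \Gamma \eps_0 \opnorm{\Delta}$, which rearranges to the claimed lower bound after absorbing constants. The main subtlety is obtaining the $\Gamma \eps_0$ scale for $h(\W_0)$: a naive Frobenius-Lipschitz-in-$\W_0$ argument produces only a $\Gamma$-Lipschitz constant, so one must instead exploit the proximity $\twonorm{\vct{x}_2 - \vct{x}_1} \le \eps_0$ through the reduced variance of $g_i - g'_i$ to obtain the sharper $\eps_0$ scaling needed for the final bound to be meaningful when $\delta/(\Gamma \eps_0)$ is large.
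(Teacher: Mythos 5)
Your proposal is correct and follows essentially the same route as the paper's proof: you decompose $y_2-y_1$ into the initialization term $f(\W_0,\x_2)-f(\W_0,\x_1)$, controlled via sub-Gaussian concentration at scale $\Gamma\eps_0 t$ (exploiting the $\pm 1/\sqrt{k}$ structure of $\vb$ and the proximity $\twonorm{\x_2-\x_1}\le\eps_0$), plus a deviation term controlled by a bound of the form $C\Gamma\eps_0\opnorm{\W-\W_0}$ using $\opnorm{\W_0}\lesssim \sqrt{k}+\sqrt{d}$, $\fronorm{\W-\W_0}\lesssim\sqrt{k}$, and $k\gtrsim d$, exactly as in the paper. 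The only difference is cosmetic: where the paper invokes a bivariate mean value theorem producing a $\phi''$ term at an intermediate point $(\widetilde{\W},\widetilde{\x})$, you integrate along $\W_0+s(\W-\W_0)$ and use the Lipschitzness of $\phi'$, which yields the same two bounding terms.
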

In words, this result shows that in order to fit to a dataset with a {\em{single corrupted label}}, a randomly initialized network has to traverse a distance of at least $\delta/\eps_0$.  Lemma \ref{simple pert} in the supplementary clarifies the role of the corruption amount $\rho$ and shows that more label corruption within a fixed class requires a model with a larger norm in order to fit the labels. 
\noindent\textbf{Can we really overfit to corruption?}~A natural question is whether early stopping is necessary i.e.~can we perfectly interpolate to the corrupted dataset model of Definition \ref{noisy model}. The recent works \cite{du2018gradient2,allen2018convergence,anon2019overparam} on neural net optimization answers this affirmatively. In particular, as long as no two input samples are not identical, sufficiently wide neural networks trained with gradient descent can provably and perfectly interpolate a corrupted dataset. 
\subsection{Key Technical Ideas}\vspace{-3pt}\label{keyidea}
Our key proof idea is that semantically meaningful datasets (such as the clusterable dataset model) should have a low-dimensional representation. We use Jacobian mapping of the neural network to capture such structure in data. Specifically, we leverage the approximate low-rankness of the Jacobian matrix\vspace{-5pt}
\[
\Jc(\W)=\begin{bmatrix}\frac{\pa f(\x_1,\W)}{\pa \W}~\dots ~\frac{\pa f(\x_n,\W)}{\pa \W}\end{bmatrix}^T.
\]
We show that the optimization is implicitly decomposed into two stages which corresponds to the column subspaces induced by the large and small singular values of the Jacobian. First, denoting the overall network prediction by $f(\W)=[f(\W,\x_1)~\dots~(\W,\x_n)]^T$, we represent the residual as\vspace{-2pt}
\[
\underbrace{\y-f(\W_\tau)}_{\text{corrupted residual}}=\underbrace{\tilde{\y}-f(\W_\tau)}_{\text{clean residual}}+\underbrace{\y-\tilde{\y}}_{\text{label corruption}}
\]
Under our dataset model, we prove that clean residual is aligned with the large singular subspace whereas label noise is aligned with the small subspace. As a result, gradient descent learns the useful information (clean residual) in few iterations whereas it takes much longer to overfit to noise justifying the use of early stopping. 

\section{Numerical experiments}\label{sec numer}

\begin{figure}[t!]

\begin{centering}
\begin{subfigure}[t]{3.1in}
\begin{tikzpicture}
\node at (0,0) {\includegraphics[height=0.7\linewidth,width=1\linewidth]{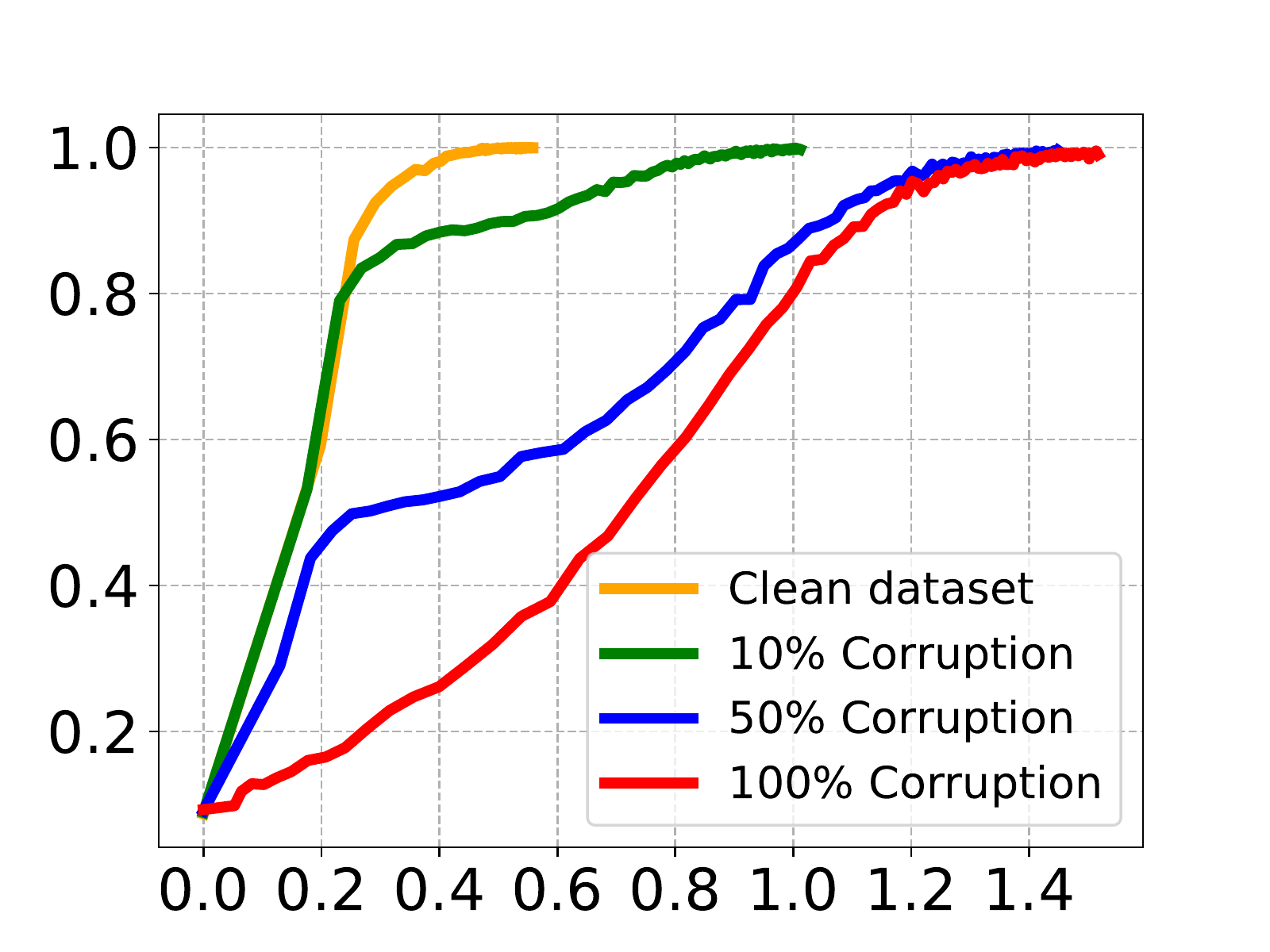}};
\node at (0.1,-2.85) {\small{Distance from initialization}};
\node[rotate=90] at (-4.1,0) {Train accuracy};
\end{tikzpicture}\vspace{-6pt}
\caption{Training accuracy}
\label{fig5a}
\end{subfigure}
\end{centering}\hspace{-5pt}
\begin{centering}
\begin{subfigure}[t]{3.1in}
\begin{tikzpicture}
\node at (0,0) {\includegraphics[height=0.7\linewidth,width=1\linewidth]{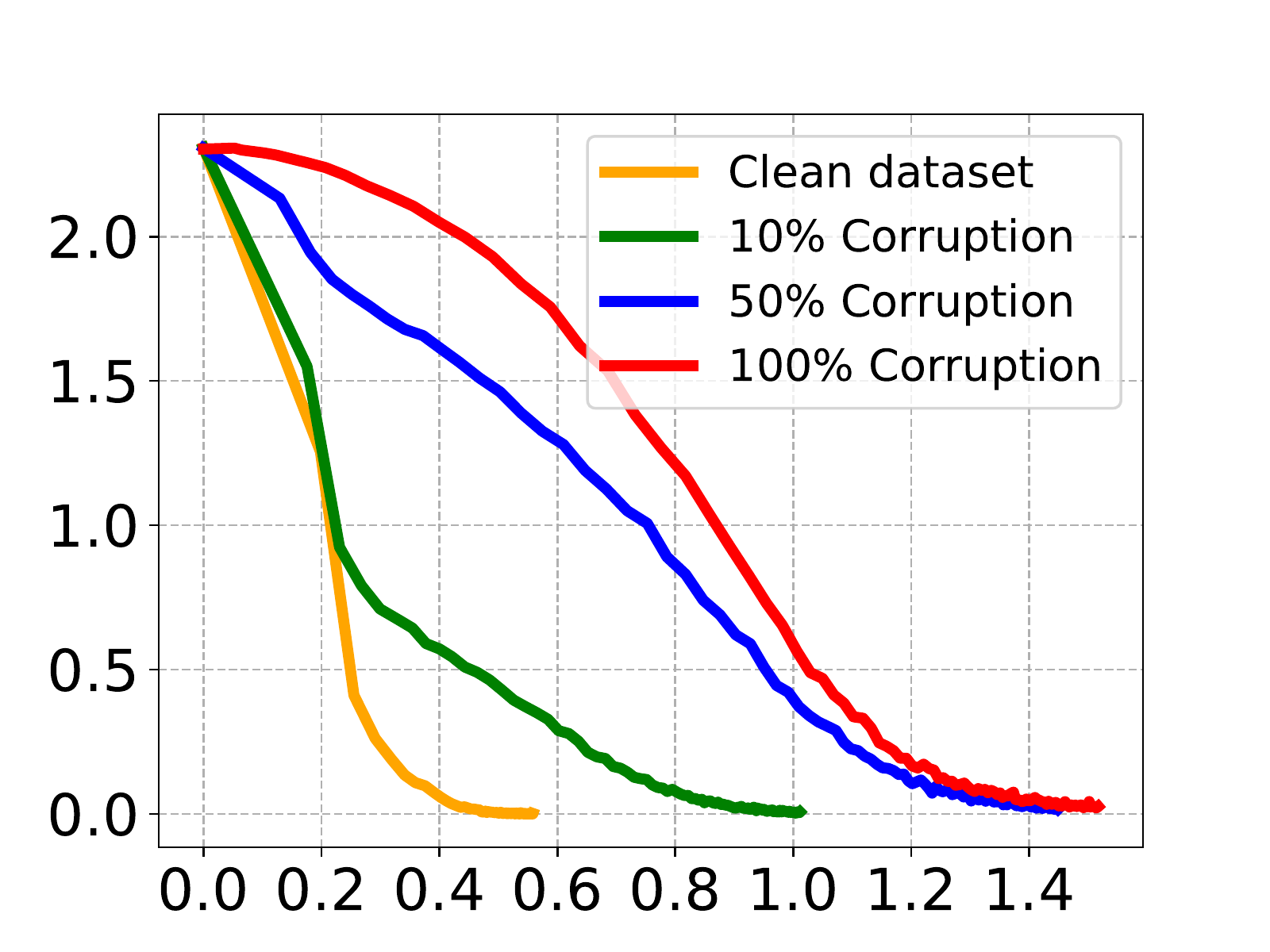}};
\node at (0.1,-2.85) {\small{Distance from initialization}};
\node[rotate=90] at (-4.1,0) {Loss};
\end{tikzpicture}\vspace{-6pt}
\caption{Training loss}\label{fig5b}
\end{subfigure}
\end{centering}\vspace{-0.2cm}\caption{We depict the training accuracy of a LENET model trainined on 3000 samples from MNIST as a function of relative distance from initialization. Here, the x-axis keeps track of the distance between the current and initial weights of all layers combined.}\label{fig5}
\vspace{-0.2cm}

\end{figure}

We conduct several experiments to investigate the robustness capabilities of deep networks to label corruption. In our first set of experiments, we explore the relationship between loss, accuracy, and amount of label corruption on the MNIST dataset to corroborate our theory. Our next experiments study the distribution of the loss and the Jacobian on the CIFAR-10 dataset. Finally, we simulate our theoretical model by generating data according to the corrupted data model of Definition \ref{noisy model} and verify the robustness capability of gradient descent with early stopping in this model.

In Figure \ref{fig5}, we train the same model used in Figure \ref{mnistacc} with $n=3,000$ MNIST samples for different amounts of corruption. Our theory predicts that more label corruption leads to a larger distance to initialization. To probe this hypothesis, Figure \ref{fig5a} and \ref{fig5b} visualizes training accuracy and training loss as a function of the distance from the initialization. These results demonstrate that the distance from initialization gracefully increase with more corruption. 

\begin{figure}[t!]
	
	\begin{centering}
		\begin{subfigure}[t]{2.15in}
			\begin{tikzpicture}
			\node at (0,0) {\includegraphics[height=0.7\linewidth,width=1\linewidth]{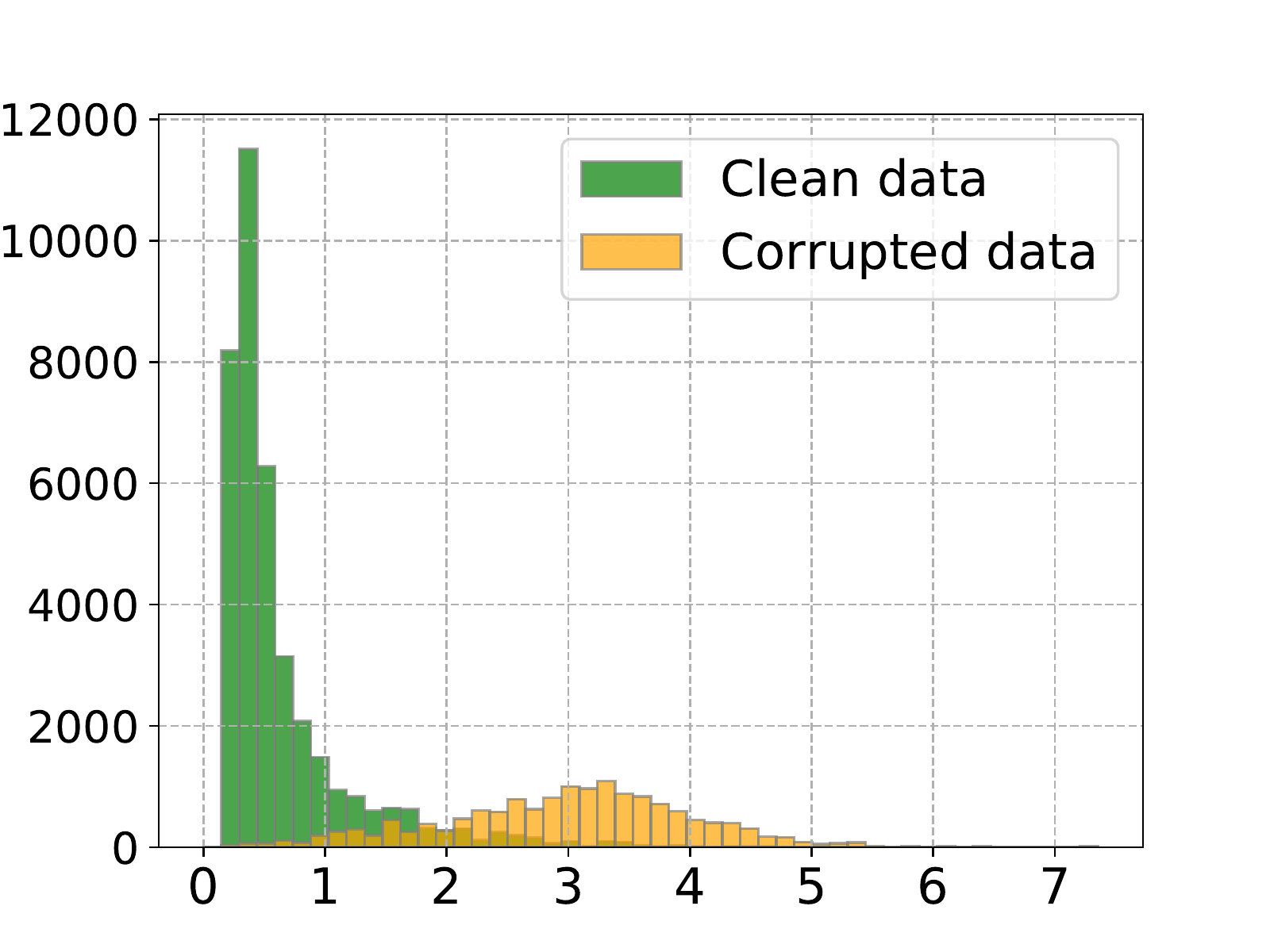}};
			\node at (0.1,-2.05) {\small{Cross entropy loss}};
			\end{tikzpicture}\vspace{-7pt}
			\caption{30\% corruption}\label{fig1a}
		\end{subfigure}
	\end{centering}\hspace{-4pt}
	\begin{centering}
		\begin{subfigure}[t]{2.15in}
			\begin{tikzpicture}
			\node at (0,0) {\includegraphics[height=0.7\linewidth,width=1\linewidth]{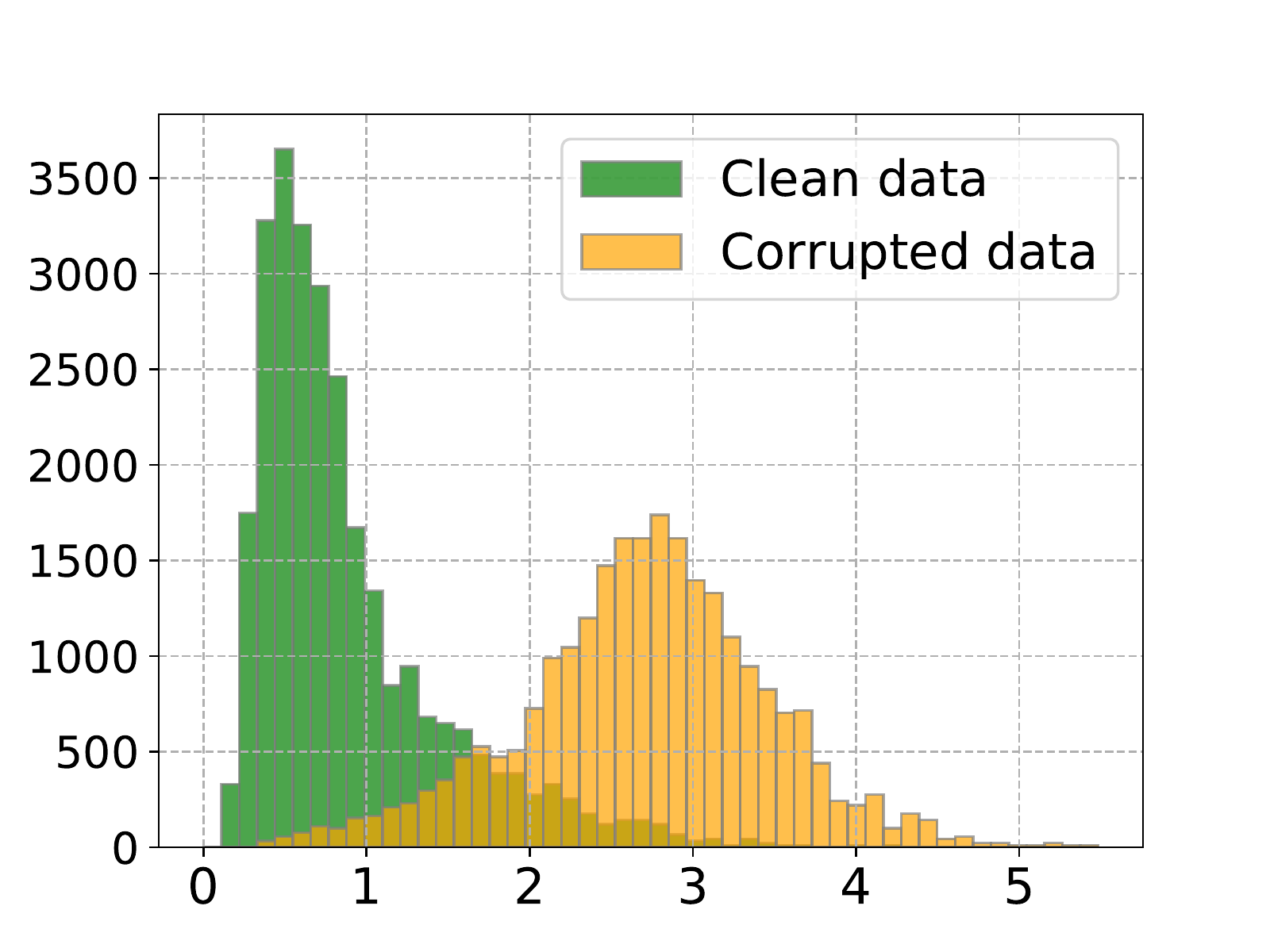}};
			\node at (0.1,-2.05) {\small{Cross entropy loss}};
			\end{tikzpicture}\vspace{-7pt}
			\caption{50\% corruption}\label{fig1b}
		\end{subfigure}
	\end{centering}\hspace{-5pt}
	\begin{centering}
		\begin{subfigure}[t]{2.15in}
			\begin{tikzpicture}
			\node at (0,0) {\includegraphics[height=0.7\linewidth,width=1\linewidth]{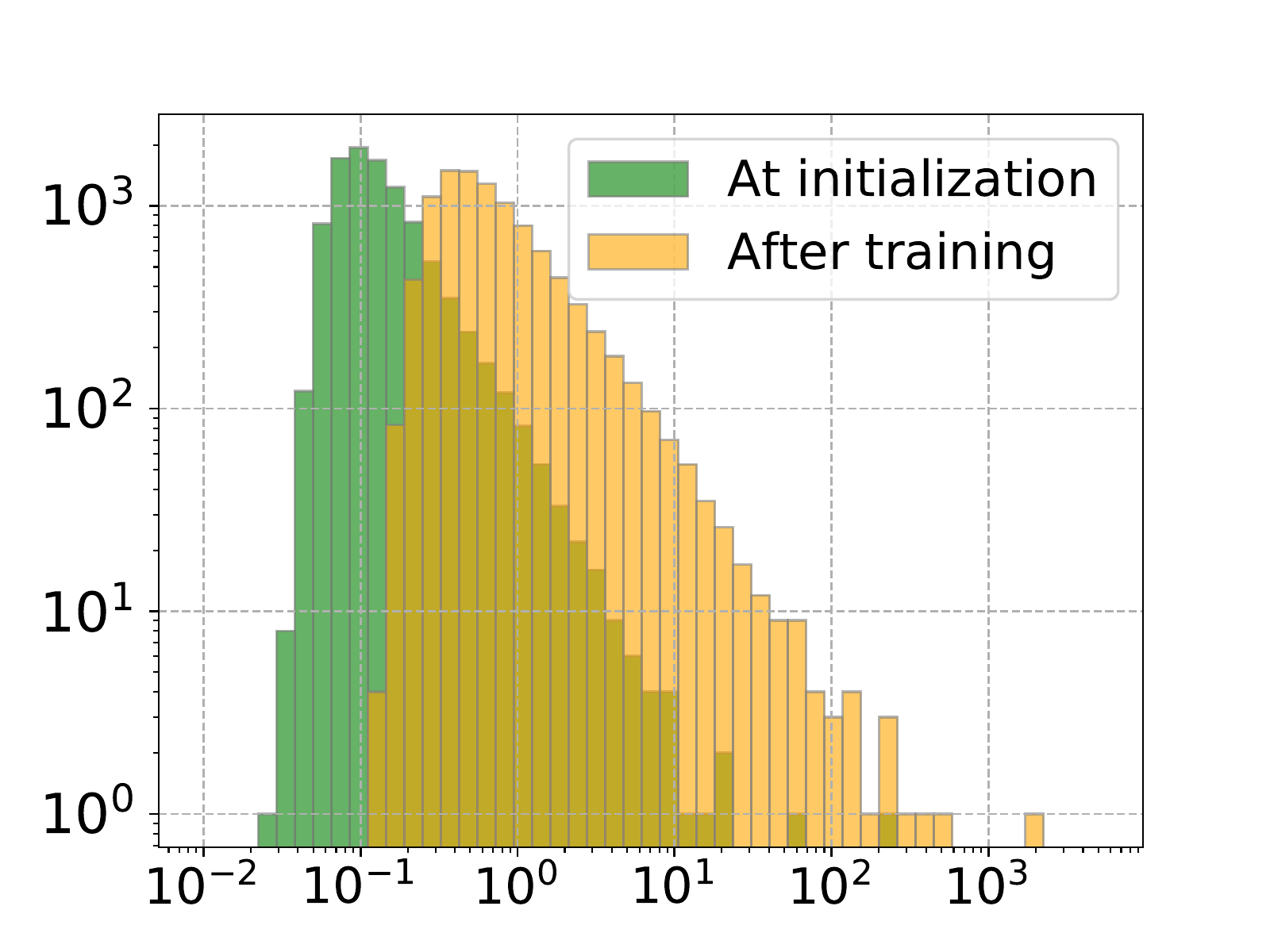}};
			\node at  (0.1,-2.05) {\small{Singular value}};
			\end{tikzpicture}\vspace{-7pt}
			\caption{Class Airplane and Automobile}\label{fig1c}
		\end{subfigure}
	\end{centering}

	\vspace{-0.2cm}\caption{(a)(b) Are histograms of the cross entropy loss of individual data points based on a model trained on 50,000 samples from CIFAR-10 with early stopping. The loss distribution of clean and corrupted data are separated but gracefully overlap as corruption increases. (c) is histogram of singular values obtained by forming the Jacobian by taking partial derivatives of class Airplane and Automobile on $10000$ samples.}\label{fig1}
	\vspace{-0.2cm}
\end{figure}



Next, we study the distribution of the individual sample losses on the CIFAR-10 dataset. We conducted two experiments using Resnet-20 with cross entropy loss\footnote{We used cross entropy as it is the standard classification loss however least-squares achieves similar accuracy.}. In Figure \ref{fig1a} and \ref{fig1b} we assess the noise robustness of gradient descent where we used all 50,000 samples with either 30\% random corruption or 50\% random corruption. Theorem \ref{main thm robust} predicts that when the corruption level is small, the loss distribution of corrupted vs. clean samples should be separable. Figure \ref{fig1a} shows that when 30\% of the data is corrupted the distributions are approximately separable. When we increase the shuffling amount to 50\% in Figure \ref{fig1b}, the training loss on the clean data increases as predicted by our theory and the distributions start to gracefully overlap. 

As we briefly discussed in Section \ref{keyidea} (see proofs in the supplementary for more extensive discussion), our technical framework utilizes a bimodal prior on the Jacobian matrix \eqref{jacob eq} of the model. We now further investigate this hypothesis. For a binary class task, size of the Jacobian matrix is sample size ($n$) $\times$ total number of parameters in the model ($p$). The neural network model we used for CIFAR 10 has around $p=270,000$ parameters in total. In Figure \ref{fig1c} we illustrate the singular value histogram of binary Jacobian model where the training classes are Airplane and Automobile. We trained the model with all samples and focus on the histogram of all training data ($n=10,000$) before and after the training. In particular, only 10 to 20 singular values are larger than $0.1\times$ the top one. This is consistent with earlier works that studied the Hessian spectrum. Another intriguing finding is that the distribution of before and after training are fairly close to each other highlighting that even at random initialization, the Jacobian spectrum exhibits bimodal structure.


In Figure \ref{fig4}, we turn our attention to verifying our findings for the corrupted dataset model of Definition \ref{noisy model}. We generated $K=2$ classes where the associated clusters centers are generated uniformly at random on the unit sphere of $\R^{d=20}$. We also generate the input samples at random around these two clusters uniformly at random on a sphere of radius $\eps_0=0.5$ around the corresponding cluster center. Hence, the clusters are guaranteed to be at least $1$ distance from each other to prevent overlap. Overall we generate $n=400$ samples ($200$ per class/cluster). Here, $\bar{K}=K=2$ and the class labels are $0$ and $1$. We picked a network with $k=1000$ hidden units and trained on a data set with  $400$ samples where 30\% of the labels were corrupted. Figure \ref{fig4a} plots the trajectory of training error and highlights the model achieves good classification in the first few iterations and ends up overfitting later on. In Figures \ref{fig8a} and \ref{fig8b}, we focus on the loss distribution of \ref{fig4a} at iterations $80$ and $4500$. In this figure, we visualize the loss distribution of clean and corrupted data. Figure \ref{fig8a} highlights the loss distribution with early stopping and implies that the gap between corrupted and clean loss distributions is surprisingly resilient despite a large amount of corruption and the high-capacity of the model. In Figure \ref{fig8b}, we repeat plot after many more iterations at which point the model overfits. This plot shows that the distribution of the two classes overlap demonstrating that the model has overfit the corruption and lacks generalization/robustness.

\begin{figure}[t!]
	\begin{centering}
		\begin{subfigure}[t]{2.15in}
			\begin{tikzpicture}
			\node at (0,0) {\includegraphics[height=0.7\linewidth,width=1\linewidth]{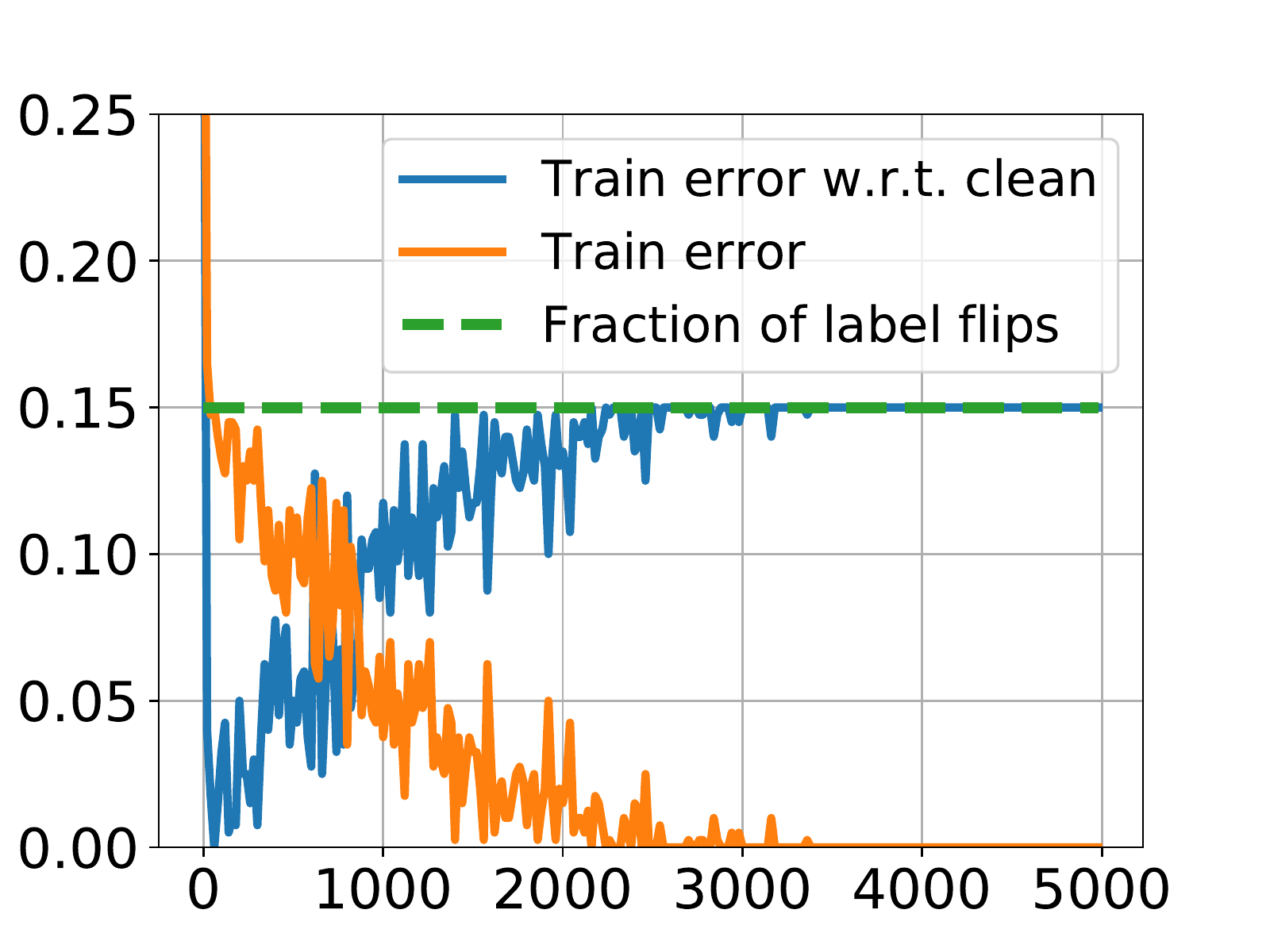}};
			\node at (0.1,-2.05) {\small{Iteration}};
			\node[rotate=90] at (-2.95,0) {\small{Classification error}};
			\end{tikzpicture}\vspace{-6pt}
			\caption{Fraction of incorrect predictions}
			\label{fig4a}
		\end{subfigure}
	\end{centering}
	\begin{centering}
		\begin{subfigure}[t]{2.15in}
			\begin{tikzpicture}
			\node at (0,0) {\includegraphics[height=0.7\linewidth,width=1\linewidth]{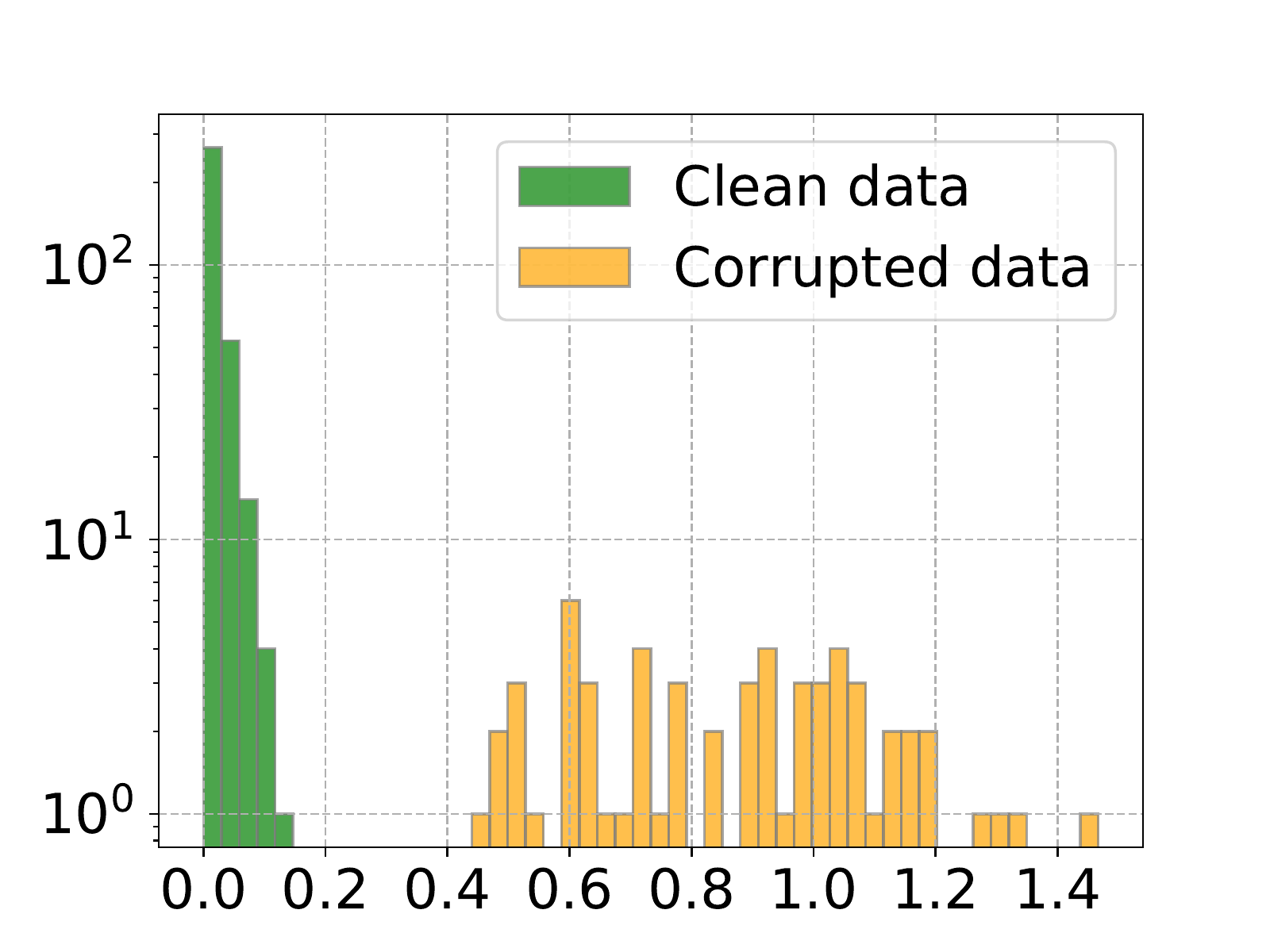}};
			\node at (0.1,-2.05) {\small{Least squares loss}};
			\node[rotate=90] at (-2.8,0) {\small{Loss histogram}};
			\end{tikzpicture}\vspace{-6pt}
			\caption{Loss histogram at iteration 80}\label{fig8a}
		\end{subfigure}
	\end{centering}
	\begin{centering}
		\begin{subfigure}[t]{2.15in}
			\begin{tikzpicture}
			\node at (0,0) {\includegraphics[height=0.7\linewidth,width=1\linewidth]{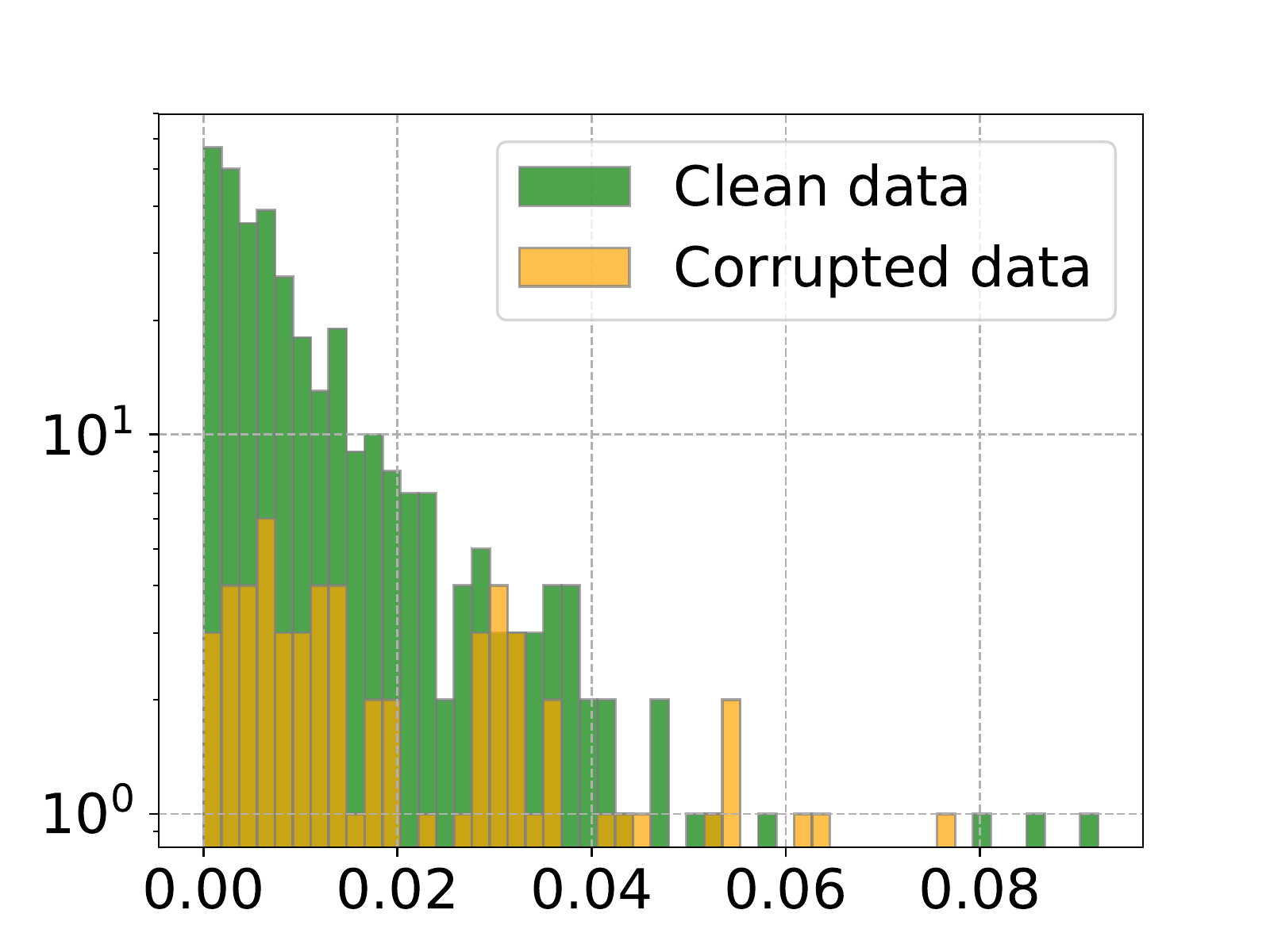}};
			\node at (0.1,-2.05) {\small{Least squares loss}};
			\node[rotate=90] at (-2.8,0) {\small{Loss histogram}};
			\end{tikzpicture}\vspace{-6pt}
			\caption{Loss histogram at iteration 4500}\label{fig8b}
		\end{subfigure}
	\end{centering}\vspace{-0.1cm}\caption{We experiment with the corrupted dataset model of Definition \ref{noisy model}. We picked $K=2$ classes and set $n=400$ and $\eps_0=0.5$. Trained 30\% corrupted data with $k=1000$ hidden units. In average 15\% of labels actually flip which is highlighted by the dashed green line.}\label{fig4}
	\vspace{-0.3cm}
\end{figure}
\section{Conclusions}
\vspace{-0.2cm}
In this paper, we studied the robustness of overparameterized neural networks to label corruption from a theoretical lens. We provided robustness guarantees for training networks with gradient descent when early stopping is used and complemented these guarantees with lower bounds. Our results point to the distance between final and initial network weights as a key feature to determine robustness vs. overfitting which is inline with weight decay and early stopping heuristics. We also carried out extensive numerical experiments to verify the theoretical predictions as well as technical assumptions. While our results shed light on the intriguing properties of overparameterized neural network optimization, it would be appealing (i) to extend our results to deeper network architecture, (ii) to more complex data models, and also (iii) to explore other heuristics that can further boost the robustness of gradient descent methods.

\section*{Acknowledgements}
Authors would like to thank Yoav Freund for pointing out an issue with numerical experiments in the first draft of this manuscript. M. Soltanolkotabi is supported by the Packard Fellowship in Science and Engineering, a Sloan Research Fellowship in Mathematics, an NSF-CAREER under award \#1846369, the Air Force Office of Scientific Research Young Investigator Program (AFOSR-YIP)
under award \#FA9550-18-1-0078, an NSF-CIF award \#1813877, and a Google faculty research award. 
\newpage
\bibliographystyle{acm}
\bibliography{Bibfiles}

\begin{thebibliography}{10}

\bibitem{allen2018learning}
{\sc Allen-Zhu, Z., Li, Y., and Liang, Y.}
\newblock Learning and generalization in overparameterized neural networks,
  going beyond two layers.
\newblock {\em arXiv preprint arXiv:1811.04918\/} (2018).

\bibitem{allen2018convergence}
{\sc Allen-Zhu, Z., Li, Y., and Song, Z.}
\newblock A convergence theory for deep learning via over-parameterization.
\newblock {\em arXiv preprint arXiv:1811.03962\/} (2018).

\bibitem{arora2018optimization}
{\sc Arora, S., Cohen, N., and Hazan, E.}
\newblock On the optimization of deep networks: Implicit acceleration by
  overparameterization.
\newblock {\em arXiv preprint arXiv:1802.06509\/} (2018).

\bibitem{Arora:2018aa}
{\sc Arora, S., Ge, R., Neyshabur, B., and Zhang, Y.}
\newblock Stronger generalization bounds for deep nets via a compression
  approach.

\bibitem{azizan2018stochastic}
{\sc Azizan, N., and Hassibi, B.}
\newblock Stochastic gradient/mirror descent: Minimax optimality and implicit
  regularization.
\newblock {\em arXiv preprint arXiv:1806.00952\/} (2018).

\bibitem{balakrishnan2017computationally}
{\sc Balakrishnan, S., Du, S.~S., Li, J., and Singh, A.}
\newblock Computationally efficient robust sparse estimation in high
  dimensions.
\newblock In {\em Conference on Learning Theory\/} (2017), pp.~169--212.

\bibitem{Bartlett:2017aa}
{\sc Bartlett, P., Foster, D.~J., and Telgarsky, M.}
\newblock Spectrally-normalized margin bounds for neural networks.

\bibitem{Belkin:2018aa}
{\sc Belkin, M., Hsu, D., and Mitra, P.}
\newblock Overfitting or perfect fitting? risk bounds for classification and
  regression rules that interpolate.

\bibitem{Belkin:2018ab}
{\sc Belkin, M., Rakhlin, A., and Tsybakov, A.~B.}
\newblock Does data interpolation contradict statistical optimality?

\bibitem{bhatia2015robust}
{\sc Bhatia, K., Jain, P., and Kar, P.}
\newblock Robust regression via hard thresholding.
\newblock In {\em Advances in Neural Information Processing Systems\/} (2015),
  pp.~721--729.

\bibitem{Brutzkus:2018aa}
{\sc Brutzkus, A., and Globerson, A.}
\newblock Over-parameterization improves generalization in the xor detection
  problem.

\bibitem{brutzkus2017sgd}
{\sc Brutzkus, A., Globerson, A., Malach, E., and Shalev-Shwartz, S.}
\newblock Sgd learns over-parameterized networks that provably generalize on
  linearly separable data.
\newblock {\em arXiv preprint arXiv:1710.10174\/} (2017).

\bibitem{Brutzkus:2017aa}
{\sc Brutzkus, A., Globerson, A., Malach, E., and Shalev-Shwartz, S.}
\newblock Sgd learns over-parameterized networks that provably generalize on
  linearly separable data.

\bibitem{candes2011robust}
{\sc Cand{\`e}s, E.~J., Li, X., Ma, Y., and Wright, J.}
\newblock Robust principal component analysis?
\newblock {\em Journal of the ACM (JACM) 58}, 3 (2011), 11.

\bibitem{chaudhari2016entropy}
{\sc Chaudhari, P., Choromanska, A., Soatto, S., LeCun, Y., Baldassi, C.,
  Borgs, C., Chayes, J., Sagun, L., and Zecchina, R.}
\newblock Entropy-sgd: Biasing gradient descent into wide valleys.
\newblock {\em arXiv preprint arXiv:1611.01838\/} (2016).

\bibitem{chen2013robust}
{\sc Chen, Y., Caramanis, C., and Mannor, S.}
\newblock Robust sparse regression under adversarial corruption.
\newblock In {\em International Conference on Machine Learning\/} (2013),
  pp.~774--782.

\bibitem{chizat2018global}
{\sc Chizat, L., and Bach, F.}
\newblock On the global convergence of gradient descent for over-parameterized
  models using optimal transport.
\newblock {\em arXiv preprint arXiv:1805.09545\/} (2018).

\bibitem{diakonikolas2018sever}
{\sc Diakonikolas, I., Kamath, G., Kane, D.~M., Li, J., Steinhardt, J., and
  Stewart, A.}
\newblock Sever: A robust meta-algorithm for stochastic optimization.
\newblock {\em arXiv preprint arXiv:1803.02815\/} (2018).

\bibitem{du2018gradient2}
{\sc Du, S.~S., Lee, J.~D., Li, H., Wang, L., and Zhai, X.}
\newblock Gradient descent finds global minima of deep neural networks.
\newblock {\em arXiv preprint arXiv:1811.03804\/} (2018).

\bibitem{du2018gradient}
{\sc Du, S.~S., Zhai, X., Poczos, B., and Singh, A.}
\newblock Gradient descent provably optimizes over-parameterized neural
  networks.
\newblock {\em arXiv preprint arXiv:1810.02054\/} (2018).

\bibitem{foygel2014corrupted}
{\sc Foygel, R., and Mackey, L.}
\newblock Corrupted sensing: Novel guarantees for separating structured
  signals.
\newblock {\em IEEE Transactions on Information Theory 60}, 2 (2014),
  1223--1247.

\bibitem{frenay2014comprehensive}
{\sc Fr{\'e}nay, B., Kab{\'a}n, A., et~al.}
\newblock A comprehensive introduction to label noise.
\newblock In {\em ESANN\/} (2014).

\bibitem{Golowich:2017aa}
{\sc Golowich, N., Rakhlin, A., and Shamir, O.}
\newblock Size-independent sample complexity of neural networks.

\bibitem{guan2018said}
{\sc Guan, M.~Y., Gulshan, V., Dai, A.~M., and Hinton, G.~E.}
\newblock Who said what: Modeling individual labelers improves classification.
\newblock In {\em Thirty-Second AAAI Conference on Artificial Intelligence\/}
  (2018).

\bibitem{han2018co}
{\sc Han, B., Yao, Q., Yu, X., Niu, G., Xu, M., Hu, W., Tsang, I., and
  Sugiyama, M.}
\newblock Co-teaching: Robust training of deep neural networks with extremely
  noisy labels.
\newblock In {\em Advances in Neural Information Processing Systems\/} (2018),
  pp.~8536--8546.

\bibitem{hochreiter1997flat}
{\sc Hochreiter, S., and Schmidhuber, J.}
\newblock Flat minima.
\newblock {\em Neural Computation 9}, 1 (1997), 1--42.

\bibitem{Ji:2018aa}
{\sc Ji, Z., and Telgarsky, M.}
\newblock Gradient descent aligns the layers of deep linear networks.

\bibitem{khetan2017learning}
{\sc Khetan, A., Lipton, Z.~C., and Anandkumar, A.}
\newblock Learning from noisy singly-labeled data.
\newblock {\em arXiv preprint arXiv:1712.04577\/} (2017).

\bibitem{klivans2018efficient}
{\sc Klivans, A., Kothari, P.~K., and Meka, R.}
\newblock Efficient algorithms for outlier-robust regression.
\newblock {\em arXiv preprint arXiv:1803.03241\/} (2018).

\bibitem{li2013compressed}
{\sc Li, X.}
\newblock Compressed sensing and matrix completion with constant proportion of
  corruptions.
\newblock {\em Constructive Approximation 37}, 1 (2013), 73--99.

\bibitem{li2018learning}
{\sc Li, Y., and Liang, Y.}
\newblock Learning overparameterized neural networks via stochastic gradient
  descent on structured data.
\newblock In {\em Advances in Neural Information Processing Systems\/} (2018),
  pp.~8168--8177.

\bibitem{Liang:2018aa}
{\sc Liang, T., and Rakhlin, A.}
\newblock Just interpolate: Kernel "ridgeless" regression can generalize.

\bibitem{liu2018high}
{\sc Liu, L., Shen, Y., Li, T., and Caramanis, C.}
\newblock High dimensional robust sparse regression.
\newblock {\em arXiv preprint arXiv:1805.11643\/} (2018).

\bibitem{malach2017decoupling}
{\sc Malach, E., and Shalev-Shwartz, S.}
\newblock Decoupling" when to update" from" how to update".
\newblock In {\em Advances in Neural Information Processing Systems\/} (2017),
  pp.~960--970.

\bibitem{menon2018learning}
{\sc Menon, A.~K., van Rooyen, B., and Natarajan, N.}
\newblock Learning from binary labels with instance-dependent noise.
\newblock {\em Machine Learning\/} (2018), 1--35.

\bibitem{neyshabur2018towards}
{\sc Neyshabur, B., Li, Z., Bhojanapalli, S., LeCun, Y., and Srebro, N.}
\newblock Towards understanding the role of over-parametrization in
  generalization of neural networks.
\newblock {\em arXiv preprint arXiv:1805.12076\/} (2018).

\bibitem{oymak2019generalization}
{\sc Oymak, S., Fabian, Z., Li, M., and Soltanolkotabi, M.}
\newblock Generalization guarantees for neural networks via harnessing the
  low-rank structure of the jacobian.
\newblock {\em arXiv preprint arXiv:1906.05392\/} (2019).

\bibitem{shortest}
{\sc Oymak, S., and Soltanolkotabi, M.}
\newblock Overparameterized nonlinear learning: Gradient descent takes the
  shortest path?
\newblock {\em arXiv preprint arXiv:1812.10004\/} (2018).

\bibitem{anon2019overparam}
{\sc Oymak, S., and Soltanolkotabi, M.}
\newblock Towards moderate overparameterization: global convergence guarantees
  for training shallow neural networks.
\newblock {\em arXiv preprint arXiv:1902.04674\/} (2019).

\bibitem{prasad2018robust}
{\sc Prasad, A., Suggala, A.~S., Balakrishnan, S., and Ravikumar, P.}
\newblock Robust estimation via robust gradient estimation.
\newblock {\em arXiv preprint arXiv:1802.06485\/} (2018).

\bibitem{reed2014training}
{\sc Reed, S., Lee, H., Anguelov, D., Szegedy, C., Erhan, D., and Rabinovich,
  A.}
\newblock Training deep neural networks on noisy labels with bootstrapping.
\newblock {\em arXiv preprint arXiv:1412.6596\/} (2014).

\bibitem{ren2018learning}
{\sc Ren, M., Zeng, W., Yang, B., and Urtasun, R.}
\newblock Learning to reweight examples for robust deep learning.
\newblock {\em arXiv preprint arXiv:1803.09050\/} (2018).

\bibitem{rolnick2017deep}
{\sc Rolnick, D., Veit, A., Belongie, S., and Shavit, N.}
\newblock Deep learning is robust to massive label noise.
\newblock {\em arXiv preprint arXiv:1705.10694\/} (2017).

\bibitem{sagun2017empirical}
{\sc Sagun, L., Evci, U., Guney, V.~U., Dauphin, Y., and Bottou, L.}
\newblock Empirical analysis of the hessian of over-parametrized neural
  networks.
\newblock {\em arXiv preprint arXiv:1706.04454\/} (2017).

\bibitem{Schur1911}
{\sc Schur, J.}
\newblock Bemerkungen zur theorie der beschr{\"a}nkten bilinearformen mit
  unendlich vielen ver{\"a}nderlichen.
\newblock {\em Journal f{\"u}r die reine und angewandte Mathematik 140\/}
  (1911), 1--28.

\bibitem{scott2013classification}
{\sc Scott, C., Blanchard, G., and Handy, G.}
\newblock Classification with asymmetric label noise: Consistency and maximal
  denoising.
\newblock In {\em Conference On Learning Theory\/} (2013), pp.~489--511.

\bibitem{shen2018iteratively}
{\sc Shen, Y., and Sanghavi, S.}
\newblock Iteratively learning from the best.
\newblock {\em arXiv preprint arXiv:1810.11874\/} (2018).

\bibitem{soltanolkotabi2018theoretical}
{\sc Soltanolkotabi, M., Javanmard, A., and Lee, J.~D.}
\newblock Theoretical insights into the optimization landscape of
  over-parameterized shallow neural networks.
\newblock {\em IEEE Transactions on Information Theory\/} (2018).

\bibitem{song2018mean}
{\sc Song, M., Montanari, A., and Nguyen, P.}
\newblock A mean field view of the landscape of two-layers neural networks.
\newblock In {\em Proceedings of the National Academy of Sciences\/} (2018),
  vol.~115, pp.~E7665--E7671.

\bibitem{Soudry:2016aa}
{\sc Soudry, D., and Carmon, Y.}
\newblock No bad local minima: Data independent training error guarantees for
  multilayer neural networks.

\bibitem{talagrand2006generic}
{\sc Talagrand, M.}
\newblock {\em The generic chaining: upper and lower bounds of stochastic
  processes}.
\newblock Springer Science \& Business Media, 2006.

\bibitem{venturi2018spurious}
{\sc Venturi, L., Bandeira, A., and Bruna, J.}
\newblock Spurious valleys in two-layer neural network optimization landscapes.
\newblock {\em arXiv preprint arXiv:1802.06384\/} (2018).

\bibitem{xie2016diverse}
{\sc Xie, B., Liang, Y., and Song, L.}
\newblock Diverse neural network learns true target functions.
\newblock {\em arXiv preprint arXiv:1611.03131\/} (2016).

\bibitem{zhang2016understanding}
{\sc Zhang, C., Bengio, S., Hardt, M., Recht, B., and Vinyals, O.}
\newblock Understanding deep learning requires rethinking generalization.
\newblock {\em International Conference on Learning Representations\/} (2016).

\bibitem{zhang2018generalized}
{\sc Zhang, Z., and Sabuncu, M.~R.}
\newblock Generalized cross entropy loss for training deep neural networks with
  noisy labels.
\newblock {\em arXiv preprint arXiv:1805.07836\/} (2018).

\bibitem{Zhu:2018aa}
{\sc Zhu, Z., Soudry, D., Eldar, Y.~C., and Wakin, M.~B.}
\newblock The global optimization geometry of shallow linear neural networks.

\bibitem{zou2018stochastic}
{\sc Zou, D., Cao, Y., Zhou, D., and Gu, Q.}
\newblock Stochastic gradient descent optimizes over-parameterized deep relu
  networks.
\newblock {\em arXiv preprint arXiv:1811.08888\/} (2018).

\end{thebibliography}
\newpage
\section{Improvements for perfectly cluster-able data}
We would like to note that in the limit of $\epsilon_0\rightarrow 0$ where the input data set is perfectly clustered one can improve the amount of overparamterization. Indeed, the result above is obtained via a perturbation argument from this more refined result stated below. 
%

\begin{theorem} [Training with perfectly clustered data] \label{main thm robust}Consier the setting and assumptions of Theorem \ref{main thm robust} with $\epsilon_0=0$. Starting from an initial weight matrix $\mtx{W}_0$ selected at random with i.i.d.~$\mathcal{N}(0,1)$ entries we run gradient descent updates of the form $\mtx{W}_{\tau+1}=\mtx{W}_\tau-\eta\nabla \mathcal{L}(\mtx{W}_\tau)$ on the least-squares loss \eqref{q loss} with step size $\eta\leq \frac{K}{2c_{up}n\Gamma^2\opnorm{\Cb}^2}$. Furthermore, assume the number of hidden nodes obey
\begin{align*}
k\ge C\Gamma^4\frac{K\log(K)\|\Cb\|^2}{\la(\Cb)^2},
\end{align*}
with $\lambda(\mtx{C})$ is the minimum cluster per Definition \ref{clust cov}. Then, with probability at least $1-2/K^{100}$ over randomly initialized $\W_0\distas\Nn(0,1)$, the iterates $\W_{\tau}$ obey the following properties.
\begin{itemize}
\item The distance to initial point $\W_0$ is upper bounded by
\[
\tf{\W_{\tau}-\W_0}\leq c\Gamma\sqrt{\frac{K\log K}{\la(\Cb)}}.
\]
\item After $\tau\geq\tau_0:= c{\frac{K}{\eta n\la(\Cb)}}\log \left(\frac{\Gamma\sqrt{n\log K}}{\rho}\right)$ iterations, the entrywise predictions of the learned network with respect to the  {\em{ground truth labels}} $\{\widetilde{y}_i\}_{i=1}^n$ satisfy 
\[
|f(\W_{\tau},\x_i)-\widetilde{y}_i|\leq 4\rho,
\]
for all $1\leq i\leq n$. Furthermore, if the noise level $\rho$ obeys $\rho\leq \delta/8$ the network predicts the correct label for all samples i.e.~
\begin{align}
\arg\min_{\alpha_\ell:1\leq \ell\leq \bar{K}}|f(\W_{\tau},\x_i)-\alpha_\ell|=\widetilde{y}_i\quad\text{for}\quad i=1,2,\ldots,n.\label{pls satisfy this eq}
\end{align}
\end{itemize}
\end{theorem}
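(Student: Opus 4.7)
The plan is to split the proof into (i) a reduction to a cluster-level problem, (ii) spectral control of the Jacobian at initialization, and (iii) a linearized gradient-descent analysis closed by a bootstrap.

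\textbf{Reduction.} When $\eps_0=0$ every $\x_i$ in cluster $\ell$ equals $\cb_\ell$, so $f(\W,\x_i)$ depends only on the cluster index. Let $\bar f_\ell(\W):=f(\W,\cb_\ell)$, let $\bar y_\ell$ denote the average of $y_i$ over indices in cluster $\ell$, and set $D:=\diag(n_1,\ldots,n_K)$. Then $\grad{\W}=-\bar{\Jc}(\W)^T D(\bar{\y}-\bar f(\W))$, where $\bar{\Jc}(\W)\in\R^{K\times kd}$ stacks $\mathrm{vec}(\nabla f(\W,\cb_\ell))^T$. Since at most $\rho n_\ell$ labels in cluster $\ell$ are corrupted and all labels lie in $[-1,1]$, one has $|\bar y_\ell-\widetilde y_\ell|\le 2\rho$; this $2\rho$ gap is the only price paid for corruption and will drive the $4\rho$ prediction bound.

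\textbf{Jacobian spectrum at $\W_0$.} I would next show via matrix concentration that $\|\bar{\Jc}(\W_0)\bar{\Jc}(\W_0)^T-\bSi(\Cb)\|\le \la(\Cb)/2$ with probability $1-2/K^{100}$, under the stated $k\gtrsim \Gamma^4 K\log K\|\Cb\|^2/\la(\Cb)^2$. Indeed, one checks $\E_{\W_0}\bar{\Jc}(\W_0)\bar{\Jc}(\W_0)^T=\bSi(\Cb)$, and applies a matrix Bernstein/Hoeffding bound using $|\phi'|\le\Gamma$ and $\tn{\cb_\ell}=1$. Consequently $\lambda_{\min}(\bar{\Jc}_0\bar{\Jc}_0^T)\ge\la(\Cb)/2$ and $\|\bar{\Jc}_0\|\lesssim \Gamma\|\Cb\|$.

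\textbf{Linearized dynamics, distance, and label recovery.} I would then track the reweighted residual $\tilde r_\tau:=D^{1/2}(\bar{\y}-\bar f(\W_\tau))$. Linearizing $\bar f$ gives $\tilde r_{\tau+1}=(I-\eta A_\tau)\tilde r_\tau+\text{(h.o.t.)}$ with $A_\tau:=D^{1/2}\bar{\Jc}(\W_\tau)\bar{\Jc}(\W_\tau)^T D^{1/2}$. Under balanced clusters and the Jacobian bounds above, $cn\la(\Cb)/K\le \lambda_{\min}(A_\tau)\le\|A_\tau\|\le Cn\Gamma^2\|\Cb\|^2/K$, and the prescribed step size enforces $\|I-\eta A_\tau\|\le 1-c\eta n\la(\Cb)/K$. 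Since $\bar f(\W_0)$ has $O(\Gamma)$-subgaussian coordinates and $\|\bar{\y}\|_\infty\le 1$, a union bound gives $\tn{\tilde r_0}\lesssim \Gamma\sqrt{n\log K}$, so geometric decay delivers $\|\bar{\y}-\bar f(\W_\tau)\|_\infty\le (n_{\min})^{-1/2}\tn{\tilde r_\tau}\le\rho$ after $\tau\ge\tau_0\asymp (K/(\eta n\la(\Cb)))\log(\Gamma\sqrt{n\log K}/\rho)$ steps. Telescoping the update and invoking the pseudoinverse identity $\bar{\Jc}_0^T D^{1/2}A_0^{-1}=(D^{1/2}\bar{\Jc}_0)^+$ gives
\[
\tf{\W_\tau-\W_0}\;\le\;\tn{\tilde r_0}/\sqrt{\lambda_{\min}(A_0)}\;\lesssim\;\Gamma\sqrt{K\log K/\la(\Cb)},
\]
matching the claimed distance bound. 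Combining per-cluster convergence with the $2\rho$ label-average discrepancy from the first paragraph yields $|f(\W_\tau,\x_i)-\widetilde y_i|\le \rho+2\rho\le 4\rho$; when $\rho\le\delta/8$ this is $\le\delta/2$, so by label separation \eqref{alpha eq} we get $\arg\min_{\alpha_\ell}|f(\W_\tau,\x_i)-\alpha_\ell|=\widetilde y_i$, establishing \eqref{pls satisfy this eq}.

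\textbf{Main obstacle.} The hard part will be closing the bootstrap: the spectral bounds on $A_\tau$ and the higher-order terms in the linearization must be controlled uniformly in $\tau$ so that the single-step analysis at $\W_0$ propagates along the whole trajectory. The distance bound just derived, together with $|\phi''|\le\Gamma$, perturbs $\bar{\Jc}(\W_\tau)$ by at most $\lesssim \Gamma\|\Cb\|\,\tf{\W_\tau-\W_0}/\sqrt{k}$, and the assumed $k\gtrsim \Gamma^4 K\log K\|\Cb\|^2/\la(\Cb)^2$ is tuned exactly so that this perturbation is $\ll \sqrt{\la(\Cb)}$, closing the loop. A pleasant feature is that the relevant spectral gap is the cluster-level $\la(\Cb)$ rather than a sample-level quantity, which is what permits the mild $K\log K$ overparametrization.
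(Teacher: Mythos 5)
Your proposal is correct in outline and is, at its core, the paper's own argument in different bookkeeping: your reduction to the $K$-dimensional weighted system (cluster-averaged labels $\bar y_\ell$, weights $D=\diag(n_1,\dots,n_K)$) is exactly the paper's projection onto the support subspace $\Scp$ of Definition \ref{supp space} (projecting onto $\Scp$ \emph{is} within-cluster averaging, so your bound $|\bar y_\ell-\widetilde y_\ell|\le 2\rho$ is Lemma \ref{label noise lem}); your concentration of $\bar{\Jc}(\W_0)\bar{\Jc}(\W_0)^T$ around $\bSi(\Cb)$ is the initialization bound imported in Theorem \ref{JLlem} and lifted to the duplicated data in Theorem \ref{JLcor}; your subgaussian bound $\order{\Gamma\sqrt{n\log K}}$ on the initial residual is Lemma \ref{upresz}; and your bootstrap controlling the Jacobian along the trajectory via $|\phi''|\le\Gamma$ and the stated $k$ is how the paper verifies Assumptions \ref{lrank} and \ref{spert} over a ball of radius $R$ and then invokes the meta Theorem \ref{grad noise}. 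The final $3\rho\le 4\rho$ and the $\rho\le\delta/8$ classification step coincide with the paper's.

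The one step you should not gloss over is the distance bound, which is also what your bootstrap leans on. The telescoping-plus-pseudoinverse identity you invoke is exact only for a path-independent (constant) Jacobian; for the actual nonlinear iteration the easy substitute---summing $\tf{\W_{\tau+1}-\W_\tau}\le\eta\beta\tn{\tilde r_\tau}$ over the geometric decay, with $\beta\asymp\Gamma\|\Cb\|\sqrt{n/K}$---only yields $\tf{\W_\tau-\W_0}\lesssim\Gamma^2\|\Cb\|\sqrt{K\log K}/\la(\Cb)$. That is weaker than the claimed $\Gamma\sqrt{K\log K/\la(\Cb)}$ by a factor $\Gamma\|\Cb\|/\sqrt{\la(\Cb)}\ge 1$, and when fed back into your perturbation estimate $\Gamma\|\Cb\|\tf{\W_\tau-\W_0}/\sqrt{k}\ll\sqrt{\la(\Cb)}$ it would require $k\gtrsim\Gamma^6\|\Cb\|^4K\log K/\la(\Cb)^3$, so your bootstrap would not close under the stated $k\gtrsim\Gamma^4K\log K\|\Cb\|^2/\la(\Cb)^2$. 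The paper obtains the sharp bound $R\asymp\tn{\rb_0}/\bn$ by a coupled induction, namely the invariant \eqref{close}, $\tfrac{1}{4}\bn\tn{\bteta_\tau-\bteta_0}+\tn{\Pi_{\Scp}(\rb_\tau)}\le\tn{\Pi_{\Scp}(\rb_0)}$, which charges each step's movement $\eta\tn{\Jc^T\rb_\tau}$ against the guaranteed decrease of the residual (using $\tn{\Jc^T\rb_\tau}\ge\bn\tn{\Pi_{\Scp}(\rb_\tau)}$). You need this invariant (or an equivalent weighted-energy argument) to make your third paragraph rigorous; once it is in place, the rest of your plan goes through exactly as in the paper's combination of Theorem \ref{grad noise} with Theorem \ref{JLcor}, Lemma \ref{upresz} and Lemma \ref{label noise lem}.
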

This result shows that in the limit $\epsilon_0\rightarrow 0$ where the data points are perfectly clustered, the required amount of overparameterization can be reduced from $kd\gtrsim K^4$ to $kd\gtrsim K^2$. In this sense this can be thought of a nontrivial analogue of \cite{anon2019overparam} where the number of data points are replaced with the number of clusters and the condition number of the data points is replaced with a cluster condition number. This can be interpreted as ensuring that the network has enough capacity to fit the cluster centers $\{\cb_\ell\}_{\ell=1}^K$ and the associated true labels.
Interestingly, the robustness benefits continue to hold in this case. However, in this perfectly clustered scenario there is no need for early stopping and a robust network is trained as soon as the number of iterations are sufficiently large. In fact, in this case given the clustered nature of the input data the network never overfits to the corrupted data even after many iterations.

\section{To (over)fit to corrupted labels requires straying far from initialization}
\begin{lemma} \label{simple pert}Let $\cb\in\R^d$ be a cluster center. Consider $2s$ data points $\{\x_i\}_{i=1}^s$ and $\{\widetilde{\x}_i\}_{i=1}^{s}$ in $\R^d$ generated i.i.d.~around $\cb$ according to the following distribution
\[
\cb+\g\quad\text{with}\quad\g\sim\Nn(0,\frac{\eps_0^2}{d}\Iden_d).
\]
Assign $\{\x_i\}_{i=1}^s$ with labels $y_i=y$ and $\{\widetilde{\x}_i\}_{i=1}^s$ with labels $\widetilde{y}_i=\widetilde{y}$ and assume these two labels are $\delta$ separated i.e.~$\abs{y-\widetilde{y}}\ge \delta$. Also suppose $s\leq d$ and $|\phi'|\leq \Gamma$. Then, any $\W\in\R^{k\times d}$ satisfying
\[
f(\W,\x_i)=y_i\quad\text{and}\quad f(\W,\widetilde{\x}_i)=\widetilde{y}_i\quad\text{for}\quad i=1,\dots,s,
\]
obeys $\tf{\W}\geq \frac{\sqrt{s}\delta}{5\Gamma\eps_0}$ with probability at least $1-e^{-d/2}$. 
\end{lemma}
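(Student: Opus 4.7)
\textbf{Proof plan for Lemma \ref{simple pert}.} The goal is to show that any $\W$ interpolating the two labels on the $2s$ perturbed samples must be large in Frobenius norm. I would start by extracting a pointwise bound from the label gap. For each $i$, the interpolation constraint forces $f(\W,\x_i)-f(\W,\widetilde{\x}_i)=y-\widetilde{y}$, whose magnitude is at least $\delta$. Using $f(\W,\x)=\vb^T\phi(\W\x)$ and the fundamental theorem of calculus applied coordinate-wise to $\phi$, I can write $\phi(\W\x_i)-\phi(\W\widetilde{\x}_i)=\mtx{D}_i\W(\x_i-\widetilde{\x}_i)$ for some diagonal matrix $\mtx{D}_i$ with entries bounded by $\Gamma$. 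Combining this with $\|\vb\|=1$ and Cauchy--Schwarz yields the pointwise bound $\|\W(\x_i-\widetilde{\x}_i)\|\ge \delta/\Gamma$ for every $i=1,\dots,s$.

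Next, I aggregate these $s$ inequalities. Squaring and summing,
\[
\sum_{i=1}^s \|\W(\x_i-\widetilde{\x}_i)\|^2 \;\ge\; \frac{s\delta^2}{\Gamma^2}.
\]
Forming the matrix $\mtx{Z}\in\R^{d\times s}$ whose $i$-th column is $\x_i-\widetilde{\x}_i$, the left-hand side equals $\fronorm{\W\mtx{Z}}^2$, which is at most $\fronorm{\W}^2\opnorm{\mtx{Z}}^2$. Thus the claim reduces to a high-probability upper bound on $\opnorm{\mtx{Z}}$.

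The difference vectors $\x_i-\widetilde{\x}_i$ are i.i.d.\ $\mathcal{N}(0,2\eps_0^2/d\cdot\Iden_d)$, so $\mtx{Z}=\sqrt{2/d}\,\eps_0\,\mtx{G}$ with $\mtx{G}\in\R^{d\times s}$ a standard Gaussian matrix. Applying the standard Gordon/Davidson--Szarek bound $\opnorm{\mtx{G}}\le \sqrt{d}+\sqrt{s}+t$ with probability $1-2e^{-t^2/2}$ and choosing $t\asymp\sqrt{d}$, together with $s\le d$, gives $\opnorm{\mtx{Z}}\le 3\sqrt{2}\,\eps_0$ on an event of probability at least $1-e^{-d/2}$. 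Plugging back produces
\[
\fronorm{\W}^2 \;\ge\; \frac{s\delta^2}{\Gamma^2\opnorm{\mtx{Z}}^2} \;\ge\; \frac{s\delta^2}{18\,\Gamma^2\eps_0^2},
\]
and since $\sqrt{18}<5$ this yields the desired $\fronorm{\W}\ge \sqrt{s}\,\delta/(5\Gamma\eps_0)$.

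\textbf{Main obstacle.} The computational content is routine; the only delicate step is matching the probability $1-e^{-d/2}$ stated in the lemma while keeping the leading constant small enough that $\sqrt{18}\le 5$. This is a matter of carefully selecting the deviation parameter $t$ in the Gaussian matrix concentration inequality and absorbing the factor of $2$ in the failure probability into the regime $s\le d$; no new ideas are needed beyond standard non-asymptotic random matrix theory.
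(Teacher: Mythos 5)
Your proposal is correct and follows essentially the same route as the paper: the paper simply performs your per-sample Lipschitz bound in aggregated form, bounding $\sqrt{s}\delta\le\tn{\y-\tilde{\y}}\le\Gamma\tf{\W(\X-\tilde{\X})}\le\Gamma\opnorm{\X-\tilde{\X}}\tf{\W}$ and then invoking the same spectral-norm concentration for the Gaussian difference matrix (yielding $\opnorm{\X-\tilde{\X}}\le 3\sqrt{2}\eps_0\le 5\eps_0$ under $s\le d$). The only cosmetic point is the probability bookkeeping you flag yourself: using the one-sided tail $\Pro(\opnorm{\mtx{G}}>\sqrt{d}+\sqrt{s}+t)\le e^{-t^2/2}$ with $t=\sqrt{d}$ gives exactly $1-e^{-d/2}$, so no factor of $2$ needs absorbing.
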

Unlike Theorem \ref{double pert} this result lower bounds the network norm in lieu of the distance to the initialization $\mtx{W}_0$. However, using the triangular inequality we can in turn get a guarantee on the distance from initialization $\W_0$ via triangle inequality as long as $\tf{\W_0}\lesssim \order{{\sqrt{s}\delta/\eps_0}}$ (e.g.~by choosing a small $\eps_0$).

The above Theorem implies that the model has to traverse a distance of at least 
\[
\tf{\W_{\tau}-\W_0}\gtrsim  \sqrt{\frac{\rho n}{K}}\frac{\delta }{\eps_0},
\]
to perfectly fit corrupted labels. In contrast, we note that the conclusions of the upper bound in Theorem \ref{main thm robust2} show that to be able to fit to the uncorrupted true labels the distance to initialization grows at most by $\tau\eps_0$ after $\tau$ iterates. This demonstrates that there is a gap in the required distance to initialization for {\em{fitting enough to generalize}} and {\em{overfitting}}. To sum up, our results highlight that, one can find a network with good generalization capabilities and robustness to label corruption  within a small neighborhood of the initialization and that the size of this neighborhood is independent of the corruption. However, to fit to the corrupted labels, one has to travel much more, increasing the search space and likely decreasing generalization ability. Thus, early stopping can enable robustness without overfitting by restricting the distance to the initialization. 
\section{Technical Approach and General Theory}\label{sec gen thy}

In this section, we outline our approach to proving robustness of overparameterized neural networks. Towards this goal, we consider a general formulation where we aim to fit a general nonlinear model of the form $\vct{x}\mapsto f(\vct{\theta},\vct{x})$ with $\vct{\theta}\in\R^p$ denoting the parameters of the model. For instance in the case of neural networks $\vct{\theta}$ represents its weights. Given a data set of $n$ input/label pairs $\{(\x_i,y_i)\}_{i=1}^n\subset\R^d\times \R$, we fit to this data by minimizing a nonlinear least-squares loss of the form
\begin{align*}
\Lc(\bteta)=\frac{1}{2}\sum_{i=1}^n (y_i-f(\bteta,\x_i))^2.
\end{align*}
which can also be written in the more compact form
\begin{align*}
\Lc(\bteta)=\frac{1}{2}\twonorm{f(\vct{\theta})-\y}^2\quad\text{with}\quad f(\vct{\theta}):=
\begin{bmatrix}
f(\vct{\theta},\vct{x}_1)\\
f(\vct{\theta},\vct{x}_2)\\
\vdots\\
f(\vct{\theta},\vct{x}_n)
\end{bmatrix}.
\end{align*}
To solve this problem we run gradient descent iterations with a constant learning rate $\eta$ starting from an initial point $\bteta_0$. These iterations take the form
\begin{align}
\label{GD}
\vct{\theta}_{\tau+1}=\vct{\theta}_{\tau}-\eta\nabla\mathcal{L}(\vct{\theta}_\tau)\quad\text{with}\quad\nabla\mathcal{L}(\vct{\theta})=\mathcal{J}^T(\vct{\theta})\left(f(\vct{\theta})-\vct{y}\right).
\end{align}
Here, $\mathcal{J}(\vct{\theta})$ is the $n\times p$ Jacobian matrix associated with the nonlinear mapping $f$ defined via
\begin{align}
\Jc(\bteta)=\begin{bmatrix}\frac{\pa f(\bteta,\x_1)}{\pa \bteta}~\dots~\frac{\pa f(\bteta,\x_n)}{\pa \bteta}\end{bmatrix}^T.\label{jacob eq}
\end{align}
\subsection{Bimodal jacobian structure}
Our approach is based on the hypothesis that the nonlinear model has a Jacobian matrix with {\em{bimodal spectrum where few singular values are large and remaining singular values are small}}. This assumption is inspired by the fact that realistic datasets are clusterable in a proper, possibly nonlinear, representation space. Indeed, one may argue that one reason for using neural networks is to automate the learning of such a representation (essentially the input to the softmax layer). We formalize the notion of bimodal spectrum below.
\begin{assumption}[Bimodal Jacobian] \label{lrank2} Let $\bp\geq\bn\geq \be>0$ be scalars. Let $f:\R^p\rightarrow\R^n$ be a nonlinear mapping and consider a set $\mathcal{D}\subset\R^p$ containing the initial point $\vct{\theta}_0$ (i.e.~$\vct{\theta}_0\in\mathcal{D}$). Let $\Scp\subset\R^n$ be a subspace  and $\Scn$ be its complement. We say the mapping $f$ has a Bimodal Jacobian with respect to the complementary subpspaces $\Scp$ and $\Scn$ as long as the following two assumptions hold for all $\vct{\theta}\in\mathcal{D}$.
\begin{itemize}
\item {\bf{Spectrum over $\Scp$:}} For all $\vb\in\Scp$ with unit Euclidian norm we have
\[
\bn\leq \twonorm{\Jc^T(\bteta)\vb}\leq \bp.
\]
\item {\bf{Spectrum over $\Scn$:}} For all $\vb\in\Scn$ with unit Euclidian norm we have
\[
 \twonorm{\Jc^T(\bteta)\vb}\leq \be.
\]
\end{itemize}
We will refer to $\Scp$ as the {\em{signal subspace}} and $\Scn$ as the {\em{noise subspace}}.
\end{assumption}
When $\epsilon<<\alpha$ the Jacobian is approximately low-rank. An extreme special case of this assumption is where $\epsilon=0$ so that the Jacobian matrix is exactly low-rank. We formalize this assumption below for later reference.
\begin{assumption}[Low-rank Jacobian] \label{lrank} Let $\bp\geq\bn>0$ be scalars. Consider a set $\mathcal{D}\subset\R^p$ containing the initial point $\vct{\theta}_0$ (i.e.~$\vct{\theta}_0\in\mathcal{D}$). Let $\Scp\subset\R^n$ be a subspace and $\Scn$ be its complement. For all $\vct{\theta}\in\mathcal{D}$, $\vb\in\Scp$ and $\w\in\Scn$ with unit Euclidian norm, we have that
\begin{align}
\bn\leq \twonorm{\Jc^T(\bteta)\vb}\leq \bp\quad\text{and}\quad \twonorm{\Jc^T(\bteta)\w}=0.\nn
\end{align}
\end{assumption}
Our dataset model in Definition \ref{noisy model} naturally has a low-rank Jacobian when $\epsilon_0=0$ and each input example is equal to one of the $K$ cluster centers $\{\cb_{\ell}\}_{\ell=1}^K$. In this case, the Jacobian will be at most rank $K$ since each row will be in the span of $\big\{\frac{\pa f(\cb_{\ell},\bteta)}{\pa\bteta}\big\}_{\ell=1}^K$. The subspace $\Scp$ is dictated by the {\em{membership}} of each cluster as follows: Let $\Lambda_{\ell}\subset\{1,\dots,n\}$ be the set of coordinates $i$ such that $\x_i=\cb_{\ell}$. Then, subspace is characterized by
\[
\Scp=\{\vb\in\R^n\bgl \vb_{i_1}=\vb_{i_2}~~\text{for all}~~ i_1,i_2\in\Lambda_{\ell}~~\text{and}~~1\leq \ell\leq K\}.
\]
When $\epsilon_0>0$ and the data points of each cluster are not the same as the cluster center we have the bimodal Jacobian structure of Assumption \ref{lrank2} where over $\Scn$ the spectral norm is small but nonzero. 

In Section \ref{sec numer}, we verify that the Jacobian matrix of real datasets indeed have a bimodal structure i.e.~there are few large singular values and the remaining singular values are small which further motivate Assumption \ref{lrank}. This is inline with earlier papers which observed that Hessian matrices of deep networks have bimodal spectrum (approximately low-rank) \cite{sagun2017empirical} and is related to various results demonstrating that there are flat directions in the loss landscape \cite{hochreiter1997flat}.


\subsection{Meta result on learning with label corruption}
Define the $n$-dimensional residual vector $\rb$ where
$\rb(\bteta)=\begin{bmatrix}f(\x_1,\bteta)-\y_1&\ldots&f(\x_n,\bteta)-\y_n\end{bmatrix}^T$.
A key idea in our approach is that we argue that (1) in the absence of any corruption $\rb(\bteta)$ approximately lies on the subspace $\Scp$ and (2) if the labels are corrupted by a vector $\eb$, then $\eb$ approximately lies on the complement space. Before we state our general result we need to discuss another assumption and definition.
\begin{assumption}[Smoothness] \label{spert} The Jacobian mapping $\mathcal{J}(\vct{\theta})$ associated to a nonlinear mapping $f:\R^p\rightarrow\R^n$ is $L$-smooth if for all $\vct{\theta}_1,\vct{\theta}_2\in\R^p$ we have $\opnorm{\mathcal{J}(\vct{\theta}_2)-\mathcal{J}(\vct{\theta}_1)}\le \el\twonorm{\vct{\theta}_2-\vct{\theta}_1}$.\footnote{Note that, if $\frac{\partial \Jc(\bteta)}{\partial \bteta}$ is continuous, the smoothness condition holds over any compact domain (albeit for a possibly large $\el$).}
\end{assumption}

Additionally, to connect our results to the number of corrupted labels, we introduce the notion of subspace diffusedness defined below.
\begin{definition}[Diffusedness] \label{diff scp} $\Scp$ is $\gamma$ diffused if for any vector $\vb\in\Scp$ 
\begin{align*}
\tin{\vb}\leq\sqrt{\gamma/n}\tn{\vb},
\end{align*}
holds for some $\gamma>0$.
\end{definition}

The following theorem is our meta result on the robustness of gradient descent to sparse corruptions on the labels when the Jacobian mapping is exactly low-rank. Theorem \ref{main thm robust} for the perfectly clustered data ($\epsilon_0=0$) is obtained by combining this result with specific estimates developed for neural networks.
\begin{theorem} [Gradient descent with label corruption] \label{grad noise} Consider a nonlinear least squares problem of the form $\Lc(\bteta)=\frac{1}{2}\twonorm{f(\vct{\theta})-\y)}^2$ with the nonlinear mapping $f:\R^p\rightarrow\R^n$ obeying assumptions \ref{lrank} and \ref{spert} over a unit Euclidian ball of radius $\frac{4\tn{\rb_0}}{\bn}$ around an initial point $\bteta_0$ and $\y=[y_1~\dots~y_n]\in\R^n$ denoting the corrupted labels. Also let $\widetilde{\y}=[\widetilde{y}_1~\dots~\widetilde{y}_n]\in\R^n$ denote the uncorrupted labels and $\eb=\y-\widetilde{\y}$ the corruption. Furthermore, suppose the initial residual $f(\bteta_0)-\widetilde{\y}$ with respect to the uncorrupted labels obey $f(\bteta_0)-\widetilde{\y}\in\Scp$. Then, running gradient descent updates of the from \eqref{GD} with a learning rate $\eta\leq \frac{1}{2\bp^2}\min\left(1,\frac{\bn\beta}{L\twonorm{\vct{r}_0}}\right)$, all iterates obey
\[
\tn{\bteta_\tau-\bteta_0}\leq \frac{4\tn{\rb_0}}{\bn}.
\]
Furthermore, assume $\nu>0$ is a precision level obeying $\nu\geq \tin{\Pi_{\Scp}(\eb)}$. Then, after $\tau\geq \frac{5}{\eta\bn^2}\log \left(\frac{\tn{\rb_0}}{\nu}\right)$ iterations, $\bteta_\tau$ achieves the following error bound with respect to the true labels
\[
\tin{f(\bteta_\tau)-\widetilde{\y}}\leq 2\nu.
\]
Furthermore, if $\eb$ has at most $s$ nonzeros and $\Scp$ is $\gamma$ diffused per Definition \ref{diff scp}, then using $\nu=\tin{\Pi_{\Scp}(\eb)}$
\[
\tin{f(\bteta_\tau)-\widetilde{\y}}\leq 2\tin{\Pi_{\Scp}(\eb)}\le\frac{\gamma\sqrt{s}}{n}\tn{\eb}.
\]
\end{theorem}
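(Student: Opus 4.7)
The plan is to exploit Assumption \ref{lrank}'s rigid splitting $\R^n=\Scp\oplus\Scn$. Since $\Jc^T(\bteta)$ annihilates $\Scn$ for every $\bteta\in\Dc$, the range of $\Jc(\bteta)$ is contained in $\Scp$ throughout $\Dc$, and hence by the fundamental theorem of calculus $f(\bteta_\tau)-f(\bteta_0)=\int_0^1\Jc(\bteta_0+t(\bteta_\tau-\bteta_0))(\bteta_\tau-\bteta_0)\,dt\in\Scp$ along any iterate segment that stays in $\Dc$. Writing $\rb_\tau=f(\bteta_\tau)-\y$ and using the hypothesis $f(\bteta_0)-\widetilde{\y}\in\Scp$, this gives the crucial invariance $\Pi_{\Scn}(\rb_\tau)=-\Pi_{\Scn}(\eb)$ for every $\tau$. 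In particular the gradient collapses to $\Jc^T(\bteta_\tau)\rb_\tau=\Jc^T(\bteta_\tau)\rb_\tau^+$ with $\rb_\tau^+:=\Pi_{\Scp}(\rb_\tau)$, so gradient descent only operates on the signal residual and the noise component sits inert in $\Scn$.

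I would then carry out a joint induction on (i) the geometric contraction $\tn{\rb_{\tau+1}^+}^2\le(1-\eta\alpha^2)\tn{\rb_\tau^+}^2$ and (ii) the domain invariance $\bteta_\tau\in B(\bteta_0,4\tn{\rb_0}/\alpha)\subset\Dc$ that legalizes applying Assumption \ref{lrank} at every iterate. Expanding $\rb_{\tau+1}-\rb_\tau=-\eta\Jc(\bteta_\tau)\Jc^T(\bteta_\tau)\rb_\tau^{+}+\mathrm{err}$ by the fundamental theorem of calculus, the $L$-smoothness of $\Jc$ from Assumption \ref{spert} bounds the nonlinear remainder by $\tfrac{L}{2}\eta^2\tn{\Jc^T\rb_\tau^+}^2$; combined with the spectral sandwich $\alpha^2\tn{\rb_\tau^+}^2\le\tn{\Jc^T\rb_\tau^+}^2\le\beta^2\tn{\rb_\tau^+}^2$ and the stated step-size bound $\eta\le\tfrac{1}{2\beta^2}\min(1,\alpha\beta/(L\tn{\rb_0}))$, this yields (i). For (ii), the naive bound $\tn{\Delta\bteta_\tau}\le\eta\beta\tn{\rb_\tau^+}$ would introduce an undesirable $\beta/\alpha^2$ in the displacement; instead I combine the per-step descent $\eta\tn{\Jc^T\rb_\tau^+}^2\le\tn{\rb_\tau^+}^2-\tn{\rb_{\tau+1}^+}^2$ with the lower bound $\tn{\Jc^T\rb_\tau^+}\ge\alpha\tn{\rb_\tau^+}$ to factor out one copy of the residual: $\eta\alpha\tn{\Jc^T\rb_\tau^+}\tn{\rb_\tau^+}\le\tn{\rb_\tau^+}^2-\tn{\rb_{\tau+1}^+}^2\le 2\tn{\rb_\tau^+}(\tn{\rb_\tau^+}-\tn{\rb_{\tau+1}^+})$. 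Telescoping gives $\sum_{s}\eta\tn{\Jc^T\rb_s^+}\le 2\tn{\rb_0^+}/\alpha\le 2\tn{\rb_0}/\alpha$, and hence $\tn{\bteta_\tau-\bteta_0}\le 2\tn{\rb_0}/\alpha$ in the linearized regime; the constant loosens to $4$ after absorbing the second-order correction, closing the induction.

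Given (i) and (ii), the ground-truth error comes from the identity $f(\bteta_\tau)-\widetilde{\y}=\rb_\tau+\eb=\rb_\tau^++\Pi_{\Scp}(\eb)$, obtained by combining paragraph one's invariance with $\eb=\Pi_{\Scp}(\eb)+\Pi_{\Scn}(\eb)$. Therefore $\tin{f(\bteta_\tau)-\widetilde{\y}}\le\tin{\rb_\tau^+}+\tin{\Pi_{\Scp}(\eb)}\le\tn{\rb_\tau^+}+\nu$, and once $\tau\ge 5\eta^{-1}\alpha^{-2}\log(\tn{\rb_0}/\nu)$ the contraction (i) forces $\tn{\rb_\tau^+}\le\nu$, giving the $2\nu$ entrywise bound. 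For the sparse corollary, if $\eb$ is supported on $T\subset[n]$ with $|T|\le s$ and $\Scp$ is $\gamma$-diffused, then for any unit $\vb\in\Scp$ one has $|\langle\vb,\eb\rangle|\le\tn{\vb|_T}\tn{\eb}\le\sqrt{s}\tin{\vb}\tn{\eb}\le\sqrt{\gamma s/n}\tn{\eb}$, so $\tn{\Pi_{\Scp}(\eb)}\le\sqrt{\gamma s/n}\tn{\eb}$; a second application of diffusedness then gives $\tin{\Pi_{\Scp}(\eb)}\le\sqrt{\gamma/n}\,\tn{\Pi_{\Scp}(\eb)}\le\gamma\sqrt{s}\tn{\eb}/n$, which one feeds back into the previous display.

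The hard part is the coupling in paragraph two: the invariance of $\Pi_{\Scn}(\rb_\tau)$ presumes Assumption \ref{lrank} at every iterate, which presumes that the iterates lie in $\Dc$, which presumes the displacement bound, which in turn uses the contraction, which uses the invariance. All three must be established simultaneously with tight constants. Both the contraction rate and the displacement constant $4$ are sensitive to how sharply one controls the quadratic nonlinear correction, which is exactly what pins down the composite step-size requirement $\eta\le\tfrac{1}{2\beta^2}\min(1,\alpha\beta/(L\tn{\rb_0}))$ appearing in the hypotheses.
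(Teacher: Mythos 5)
Your proposal is correct and follows essentially the same route as the paper's proof: the same signal/noise projection of the residual with the invariance $\Pi_{\Scn}(\rb_\tau)=\Pi_{\Scn}(\rb_0)$ (your FTC/range argument is the paper's average-Jacobian linearization in disguise), the same joint induction coupling geometric contraction of $\Pi_{\Scp}(\rb_\tau)$ with the telescoped descent-to-displacement bound giving $\tn{\bteta_\tau-\bteta_0}\le 4\tn{\rb_0}/\bn$, and the same final step $\tin{f(\bteta_\tau)-\widetilde{\y}}\le\tn{\rbb_\tau}+\tin{\Pi_{\Scp}(\eb)}$ plus the diffusedness estimate $\tin{\Pi_{\Scp}(\eb)}\le\gamma\sqrt{s}\tn{\eb}/n$. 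The only differences are cosmetic (your explicit second-order error term versus the paper's asymmetric PSD-perturbation lemma for the average-Jacobian product, and slightly different constants in the contraction rate), so no changes are needed.
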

This result shows that when the Jacobian of the nonlinear mapping is low-rank, gradient descent enjoys two intriguing properties. First, gradient descent iterations remain rather close to the initial point. Second, the estimated labels of the algorithm enjoy {\em{sample-wise}} robustness guarantees in the sense that the noise in the estimated labels are gracefully distributed over the dataset and the effects on individual label estimates are negligible. This theorem is the key result that allows us to prove Theorem \ref{main thm robust} when the data points are perfectly clustered ($\epsilon_0=0$). Furthermore, this theorem when combined with a perturbation analysis allows us to deal with data that is not perfectly clustered ($\epsilon_0>0$) and to conclude that with early stopping neural networks are rather robust to label corruption (Theorem \ref{main thm robust2}).

Finally, we note that a few recent publication \cite{shortest, allen2018convergence,
du2018gradient} require the Jacobian to be well-conditioned to fit labels perfectly. In contrast, our low-rank model cannot perfectly fit the corrupted labels. Furthermore, when the Jacobian is bimodal (as seems to be the case for many practical data sets and neural network models) it would take a very long time to perfectly fit the labels and as demonstrated earlier such a model does not generalize and is not robust to corruptions. Instead we focus on proving robustness with early stopping.
\subsection{To (over)fit to corrupted labels requires straying far from initialization}
In this section we state a result that provides further justification as to why early stopping of gradient descent leads to more robust models without overfitting to corrupted labels.
This is based on the observation that while finding an estimate that fits the uncorrupted labels one does not have to move far from the initial estimate in the presence of corruption one has to stray rather far from the initialization with the distance from initialization increasing further in the presence of more corruption. We make this observation rigorous below by showing that it is more difficult to fit to the portion of the residual that lies on the noise space compared to the portion on the signal space (assuming $\bn\gg \be$). 
\begin{theorem} \label{lem how far?}Denote the residual at initialization $\bteta_0$ by $\rb_0=f(\bteta_0)-\y$. Define the residual projection over the signal and noise space as 
\[
E_+=\tn{\Pi_{\Scp}(\rb_0)}\quad\text{and}\quad E_-=\tn{\Pi_{\Scn}(\rb_0)}.
\]
Suppose Assumption \ref{lrank2} holds over an Euclidian ball $\Dc$ of radius $R< \max\left(\frac{E_+}{\bp},\frac{E_-}{\eps}\right)$ around the initial point $\bteta_0$ with $\bn\geq \be$. Then, over $\Dc$ there exists no $\bteta$ that achieves zero training loss. In particular, if $\Dc=\R^p$, any parameter $\bteta$ achieving zero training loss ($f(\bteta)=\y$) satisfies the distance bound
\[
\tn{\bteta-\bteta_0}\geq \max\left(\frac{E_+}{\bp},\frac{E_-}{\eps}\right).
\]
\end{theorem}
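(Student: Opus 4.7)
The statement is a straightforward contrapositive argument built on the fundamental theorem of calculus applied to the nonlinear map $f$. Suppose $\bteta\in\Dc$ satisfies $f(\bteta)=\y$, i.e.\ achieves zero training loss. Writing $\bteta_t = \bteta_0 + t(\bteta-\bteta_0)$ for $t\in[0,1]$ (which lies in the ball $\Dc$ by convexity), the integral representation gives
\[
-\rb_0 = f(\bteta)-f(\bteta_0) = \int_0^1 \Jc(\bteta_t)\,(\bteta-\bteta_0)\,dt.
\]
Thus $\rb_0$ is realized as a line integral of $\Jc(\bteta_t)(\bteta-\bteta_0)$, and the plan is to project this identity onto $\Scp$ and $\Scn$ separately and use the bimodal bounds from Assumption \ref{lrank2} to constrain $\tn{\bteta-\bteta_0}$ from below.

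The key estimate is the operator bound: for any $\bteta_t\in\Dc$ and any $\ub\in\R^p$, duality yields
\[
\tn{\Pi_{\Scp}(\Jc(\bteta_t)\ub)} = \sup_{\vb\in\Scp,\,\tn{\vb}\le 1}\langle \Jc(\bteta_t)^T\vb,\,\ub\rangle \le \sup_{\vb\in\Scp,\,\tn{\vb}\le 1}\tn{\Jc(\bteta_t)^T\vb}\,\tn{\ub} \le \bp\tn{\ub},
\]
and the analogous inequality on $\Scn$ gives $\tn{\Pi_{\Scn}(\Jc(\bteta_t)\ub)}\le \be\tn{\ub}$. Applying $\Pi_{\Scp}$ (resp.\ $\Pi_{\Scn}$) to the integral identity, pulling the projection inside the integral, and using the triangle inequality combined with the above bounds yields
\[
E_+ \;=\; \tn{\Pi_{\Scp}(\rb_0)} \;\le\; \int_0^1 \bp\,\tn{\bteta-\bteta_0}\,dt \;=\; \bp\tn{\bteta-\bteta_0},
\]
and symmetrically $E_- \le \be\tn{\bteta-\bteta_0}$. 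Taking the larger of the two lower bounds gives $\tn{\bteta-\bteta_0}\ge \max(E_+/\bp,\,E_-/\eps)$.

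The two conclusions follow immediately. For the first, if some $\bteta\in\Dc$ satisfied $f(\bteta)=\y$ then $\tn{\bteta-\bteta_0}\le R<\max(E_+/\bp,E_-/\eps)$ would contradict the lower bound just derived, so no such $\bteta$ exists. For the second, taking $\Dc=\R^p$, the same derivation applies to any global zero of the residual and yields the stated distance bound directly.

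There is no real obstacle: the only thing to be careful about is that the line segment $\{\bteta_t\}_{t\in[0,1]}$ remains inside the domain $\Dc$ on which the bimodal Jacobian assumption is posited, which holds because $\Dc$ is a Euclidean ball centered at $\bteta_0$ and hence convex. The projection-pull-through step is justified because $\Pi_{\Scp}$ and $\Pi_{\Scn}$ are bounded linear operators, so they commute with the Bochner integral.
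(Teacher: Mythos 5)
Your proposal is correct and follows essentially the same route as the paper: both rest on the integral (average-Jacobian) representation $f(\bteta)-f(\bteta_0)=\int_0^1\Jc(\bteta_0+t(\bteta-\bteta_0))(\bteta-\bteta_0)\,dt$, projection onto $\Scp$ and $\Scn$, and the bimodal spectral bounds of Assumption \ref{lrank2} to get $E_+\le\bp\tn{\bteta-\bteta_0}$ and $E_-\le\be\tn{\bteta-\bteta_0}$, followed by the contradiction with the radius $R$. The only cosmetic difference is that the paper bounds the projected average Jacobian $\Pi_{\Scp}(\Jb)$ as a matrix while you push the projection inside the integral and bound the integrand pointwise, which is the same estimate.
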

This theorem shows that the higher the corruption (and hence $E_{-}$) the further the iterates need to stray from the initial model to fit the corrupted data.

\section{Proofs}
\subsection{Proofs for General Theory}
We begin by defining the average Jacobian which will be used throughout our analysis.
\begin{definition} [Average Jacobian] \label{avg jacob}We define the average Jacobian along the path connecting two points $\vct{x},\vct{y}\in\R^p$ as
\begin{align}
&\Jc(\y,\x):=\int_0^1 \mathcal{J}(\x+\alpha(\y-\x))d\alpha.
\end{align}
\end{definition}
\begin{lemma} [Linearization of the residual]\label{lin res} Given gradient descent iterate $\hat{\bteta}=\bteta-\eta\grad{\bteta}$, define
\[
\Cb(\bteta)=\Jc(\hat{\bteta},\bteta)\Jc(\bteta)^T.
\]
The residuals $\hat{\rb}=f(\hat{\bteta})-\y$, $\rb=f(\bteta)-\y$ obey the following equation
\[
\hat{\rb}=(\Iden-\eta\Cb(\bteta))\rb.
\]
\end{lemma}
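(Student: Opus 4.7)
The plan is to derive the identity directly from the fundamental theorem of calculus applied to $f$ along the line segment from $\bteta$ to $\hat\bteta$, combined with the explicit form of the gradient descent update.

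First, I would write $\hat\rb - \rb = f(\hat\bteta) - f(\bteta)$, since subtracting $\y$ from each side cancels. Parameterizing the segment between $\bteta$ and $\hat\bteta$ by $\bteta(\alpha) = \bteta + \alpha(\hat\bteta - \bteta)$ for $\alpha \in [0,1]$ and applying the chain rule componentwise to each coordinate of $f$ gives
\[
f(\hat\bteta) - f(\bteta) = \int_0^1 \frac{d}{d\alpha} f(\bteta(\alpha))\, d\alpha = \left(\int_0^1 \Jc(\bteta(\alpha))\, d\alpha\right)(\hat\bteta - \bteta) = \Jc(\hat\bteta,\bteta)(\hat\bteta - \bteta),
\]
where the last equality is just the definition of the average Jacobian from Definition \ref{avg jacob}.

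Next, I would plug in the gradient descent step. Using $\grad{\bteta} = \Jc(\bteta)^T(f(\bteta) - \y) = \Jc(\bteta)^T \rb$ (as stated in \eqref{GD}), the update rule gives $\hat\bteta - \bteta = -\eta\, \Jc(\bteta)^T \rb$. Substituting this into the previous display yields
\[
\hat\rb - \rb = -\eta\, \Jc(\hat\bteta,\bteta)\Jc(\bteta)^T \rb = -\eta\, \Cb(\bteta)\rb,
\]
and rearranging produces $\hat\rb = (\Iden - \eta\Cb(\bteta))\rb$, the desired identity.

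There is no real obstacle here; the only subtlety is ensuring that $f$ is smooth enough along the segment for the fundamental theorem of calculus to apply, which is immediate from Assumption \ref{spert} (the Jacobian is $L$-Lipschitz and in particular continuous along the segment), so the integral defining $\Jc(\hat\bteta,\bteta)$ is well-defined. The entire argument is a two-line computation, so the proposal above is essentially the complete proof.
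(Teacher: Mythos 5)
Your proof is correct and matches the paper's own argument: both write $\hat\rb-\rb=f(\hat\bteta)-f(\bteta)$, apply the fundamental theorem of calculus to express this difference as $\Jc(\hat\bteta,\bteta)(\hat\bteta-\bteta)$ via the average Jacobian, and substitute the gradient step $\hat\bteta-\bteta=-\eta\Jc(\bteta)^T\rb$. Nothing further is needed.
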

\begin{proof}
Following Definition \ref{avg jacob}, denoting $f(\hat{\bteta})-\y=\hat{\rb}$ and $f(\bteta)-\y=\rb$, we find that
\begin{align}
\hat{\rb}=&\rb-f(\bteta)+f(\hat{\bteta})\nn\\
\overset{(a)}{=}&\rb+\Jc(\hat{\bteta},\bteta)(\hat{\bteta}-\bteta)\nn\\
\overset{(b)}{=}&\rb-\eta\Jc(\hat{\bteta},\bteta)\Jc(\bteta)^T\rb\nn\\
=&~(\Iden-\eta\Cb(\bteta))\rb.\label{line 4}
\end{align}
Here (a) uses the fact that Jacobian is the derivative of $f$ and (b) uses the fact that $\grad{\bteta}=\Jc(\bteta)^T\rb$.
\end{proof}
Using Assumption \ref{diff scp}, one can show that sparse vectors have small projection on $\Scp$. 
\begin{lemma} \label{lem sp proj}Suppose Assumption \ref{diff scp} holds. If $\rb\in\R^n$ is a vector with $s$ nonzero entries, we have that
\begin{align}
\tin{\Pi_{\Scp}(\rb)}\leq  \frac{\gamma\sqrt{s}}{n}\tn{\rb}.
\end{align}
\end{lemma}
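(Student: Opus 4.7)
The plan is to combine the diffusedness of $\Scp$ with the sparsity of $\rb$ via a two-step bound, controlling first $\tn{\Pi_{\Scp}(\rb)}$ and then passing to the $\ell_\infty$ norm. Let $\vb := \Pi_{\Scp}(\rb) \in \Scp$ and let $S = \supp{\rb}$ with $|S| \le s$.

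First I would exploit the fact that orthogonal projection is self-adjoint and idempotent to write $\tn{\vb}^2 = \langle \vb, \vb\rangle = \langle \Pi_{\Scp}(\rb), \vb\rangle = \langle \rb, \vb\rangle$, where the last step uses $\vb \in \Scp$. Since $\rb$ is supported on $S$, I can restrict the inner product to $S$ and apply Cauchy--Schwarz:
\[
\tn{\vb}^2 \;=\; \langle \rb, \vb\rangle \;=\; \sum_{i \in S} \rb_i \vb_i \;\le\; \tn{\rb}\,\Bigl(\sum_{i \in S} \vb_i^2\Bigr)^{1/2}.
\]
Next, I would bound the restricted norm by sparsity and diffusedness: $\sum_{i \in S} \vb_i^2 \le s \tin{\vb}^2 \le s \cdot \frac{\gamma}{n} \tn{\vb}^2$, using Definition \ref{diff scp} applied to $\vb \in \Scp$. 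Substituting and dividing by $\tn{\vb}$ (the case $\vb = 0$ is trivial) yields $\tn{\vb} \le \sqrt{s\gamma/n}\,\tn{\rb}$.

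Finally, one more application of $\gamma$-diffusedness gives the $\ell_\infty$ bound:
\[
\tin{\Pi_{\Scp}(\rb)} \;=\; \tin{\vb} \;\le\; \sqrt{\gamma/n}\,\tn{\vb} \;\le\; \sqrt{\gamma/n}\cdot\sqrt{s\gamma/n}\,\tn{\rb} \;=\; \frac{\gamma\sqrt{s}}{n}\,\tn{\rb},
\]
which is exactly the claimed inequality. There is no real obstacle here: the argument is short and the only ingredients are Cauchy--Schwarz, idempotence/self-adjointness of the projection, sparsity of $\rb$, and the defining property of $\gamma$-diffusedness applied twice (once to convert $\ell_\infty$ to $\ell_2$ on $S$, once at the end to convert back). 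The chief thing to be careful about is ensuring that both $\vb$ and its restricted-to-$S$ partial sum are controlled using the diffusedness on the full vector $\vb$ itself, rather than on its restriction (which need not lie in $\Scp$).
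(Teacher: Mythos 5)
Your proof is correct and follows essentially the same route as the paper: both first establish $\tn{\Pi_{\Scp}(\rb)}\le \sqrt{\gamma s/n}\,\tn{\rb}$ by pairing the projection with $\rb$ (the paper via the dual characterization $\sup_{\vb\in\Scp}\vb^T\rb/\tn{\vb}$ with H\"older and $\tone{\rb}\le\sqrt{s}\tn{\rb}$; you by testing against the maximizer $\vb=\Pi_{\Scp}(\rb)$ with Cauchy--Schwarz on the support), and then both apply diffusedness a second time to convert to the $\ell_\infty$ bound. The differences are cosmetic, and your cautionary remark about applying diffusedness to the full vector $\vb$ rather than its restriction is exactly the right point of care.
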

\begin{proof} First, we bound the $\ell_2$ projection of $\rb$ on $\Scp$ as follows
\[
\tn{\Pi_{\Scp}(\rb)}=\sup_{\vb\in \Scp} \frac{\vb^T\rb}{\tn{\vb}}\leq \sqrt{\frac{\gamma}{n}}\tone{\rb}\leq \sqrt{\frac{\gamma s}{n}}\tn{\rb}.
\]
where we used the fact that $|\vb_i|\leq \sqrt{\gamma}\tn{\vb}/\sqrt{n}$. Next, we conclude with
\[
\tin{\Pi_{\Scp}(\rb)}\leq  \sqrt{\frac{\gamma}{n}}\tn{\Pi_{\Scp}(\rb)}\leq \frac{\gamma\sqrt{s}}{n}\tn{\rb}.
\]
\end{proof}
\subsubsection{Proof of Theorem \ref{grad noise}}
\begin{proof} The proof will be done inductively over the properties of gradient descent iterates and is inspired from the recent work \cite{shortest}. In particular, \cite{shortest} requires a well-conditioned Jacobian to fit labels perfectly. In contrast, we have a low-rank Jacobian model which cannot fit the noisy labels (or it would have trouble fitting if the Jacobian was approximately low-rank). Despite this, we wish to prove that gradient descent satisfies desirable properties such as robustness and closeness to initialization. Let us introduce the notation related to the residual. Set $\rb_\tau=f(\bteta_{\tau})-\y$ and let $\rb_0=f(\bteta_0)-\y$ be the initial residual. We keep track of the growth of the residual by partitioning the residual as $\rb_{\tau}=\rbb_\tau+\bar{\eb}_\tau$ where
\[
\bar{\eb}_\tau=\Pi_{\Scn}(\rb_\tau)\quad,\quad \rbb_\tau=\Pi_{\Scp}(\rb_\tau).
\]
We claim that for all iterations $\tau\geq 0$, the following conditions hold.
\begin{align}
\bar{\eb}_\tau=&\bar{\eb}_0\label{indind}\\
\twonorm{\rbb_\tau}^2\le&\left(1-\frac{\eta\bn^2}{2}\right)^\tau\twonorm{\rbb_0}^2,\label{err}\\
\frac{1}{4}\bn\twonorm{\vct{\theta}_\tau-\vct{\theta}_0}+\twonorm{\rbb_\tau}\le&\twonorm{\rbb_0}\leq \tn{\rb_0}.\label{close}
\end{align}
Assuming these conditions hold till some $\tau>0$, inductively, we focus on iteration $\tau+1$. First, note that these conditions imply that for all $\tau\geq i\geq 0$, $\bteta_i\in \Dc$ where $\Dc$ is the Euclidian ball around $\bteta_0$ of radius $\frac{4\tn{\rb_0}}{\bn}$. This directly follows from \eqref{close} induction hypothesis. Next, we claim that $\bteta_{\tau+1}$ is still within the set $\Dc$. This can be seen as follows:
\begin{claim}\label{next inside} Under the induction hypothesis \eqref{indind}, $\bteta_{\tau+1}\in\Dc$.
\end{claim}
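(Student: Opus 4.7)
The plan is to establish the claim by a direct triangle-inequality argument, using the gradient descent update rule, the low-rank structure of the Jacobian from Assumption \ref{lrank}, and the induction hypothesis \eqref{close} (which bounds $\tn{\bteta_\tau-\bteta_0}$ in terms of the residual decrease on $\Scp$). Since $\bteta_\tau\in\Dc$ by the induction hypothesis, Assumption \ref{lrank} applies at $\bteta_\tau$, and the proof reduces to routine estimates.

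First, I would write the single-step displacement using the GD rule as $\bteta_{\tau+1}-\bteta_\tau=-\eta\,\Jc(\bteta_\tau)^T\rb_\tau$, then decompose the residual orthogonally as $\rb_\tau=\rbb_\tau+\bar{\eb}_\tau$ with $\rbb_\tau=\Pi_{\Scp}(\rb_\tau)$ and $\bar{\eb}_\tau=\Pi_{\Scn}(\rb_\tau)$. By Assumption \ref{lrank} applied at $\bteta_\tau\in\Dc$, the noise-space component is annihilated, $\Jc(\bteta_\tau)^T\bar{\eb}_\tau=\vct{0}$, and the signal-space component is controlled as $\tn{\Jc(\bteta_\tau)^T\rbb_\tau}\le\bp\tn{\rbb_\tau}$. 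This yields the per-step bound
\begin{equation*}
\tn{\bteta_{\tau+1}-\bteta_\tau}\;\le\;\eta\bp\,\tn{\rbb_\tau}.
\end{equation*}

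Next, using the step-size assumption $\eta\le\tfrac{1}{2\bp^2}$ together with $\bp\ge\bn$ from Assumption \ref{lrank}, I obtain $\eta\bp\le\tfrac{1}{2\bp}\le\tfrac{1}{2\bn}\le\tfrac{4}{\bn}$, so that $\tn{\bteta_{\tau+1}-\bteta_\tau}\le\tfrac{4\tn{\rbb_\tau}}{\bn}$. Combining with the triangle inequality and the induction hypothesis \eqref{close} (rearranged as $\tn{\bteta_\tau-\bteta_0}\le\tfrac{4(\tn{\rbb_0}-\tn{\rbb_\tau})}{\bn}$) gives
\begin{equation*}
\tn{\bteta_{\tau+1}-\bteta_0}\;\le\;\frac{4\tn{\rbb_\tau}}{\bn}+\frac{4\bigl(\tn{\rbb_0}-\tn{\rbb_\tau}\bigr)}{\bn}\;=\;\frac{4\tn{\rbb_0}}{\bn}\;\le\;\frac{4\tn{\rb_0}}{\bn},
\end{equation*}
where the last step uses $\tn{\rbb_0}\le\tn{\rb_0}$ (orthogonal projection is non-expansive). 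This is exactly the radius defining $\Dc$, so $\bteta_{\tau+1}\in\Dc$.

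There is no real obstacle here; the only thing to watch is that the inductive invocation of Assumption \ref{lrank} at $\bteta_\tau$ is legitimate, which is ensured by \eqref{close} placing $\bteta_\tau$ inside $\Dc$. The crucial structural fact making the bound collapse so cleanly is that gradient descent feels only the signal component $\rbb_\tau$ of the residual (since $\Scn\subset\ker\Jc(\bteta)^T$), and the chosen step size is small enough that its displacement cost $\tfrac{4\tn{\rbb_\tau}}{\bn}$ is exactly compensated by the credit $\tfrac{4(\tn{\rbb_0}-\tn{\rbb_\tau})}{\bn}$ already used up by previous iterates. The remaining inductive statements \eqref{indind} and \eqref{err} will be handled separately via Lemma \ref{lin res} and the spectral bounds on $\Cb(\bteta_\tau)$, but are not needed to conclude Claim \ref{next inside}.
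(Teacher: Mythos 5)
Your proposal is correct and follows essentially the same route as the paper's proof: both use the GD update together with Assumption \ref{lrank} (noise component annihilated, signal component bounded by $\bp$), the step-size bound, and then the triangle inequality combined with the induction hypothesis \eqref{close} to stay within radius $\frac{4\tn{\rb_0}}{\bn}$. The only difference is minor bookkeeping — you use the cruder per-step bound $\frac{4\tn{\rbb_\tau}}{\bn}$ balanced exactly against the budget $\frac{4(\tn{\rbb_0}-\tn{\rbb_\tau})}{\bn}$, while the paper uses the tighter per-step bound $\frac{\tn{\rbb_\tau}}{\bn}$ and the scaled form of \eqref{close} — and both land at the same conclusion.
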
 
\begin{proof} 
 Since range space of Jacobian is in $\Scp$ and $\eta\leq 1/\beta^2$, we begin by noting that
 \begin{align}
 \label{lem85temp2}
\tn{\bteta_{\tau+1}-\bteta_\tau}&=\eta \tn{\Jc^T(\bteta_\tau)\left(f(\vct{\theta_\tau})-\vct{y}\right)}\\
&\overset{(a)}{=}\eta \tn{\Jc^T(\bteta_\tau)\left(\Pi_{\Scp}(f(\vct{\theta_\tau})-\vct{y})\right)}\\
&\overset{(b)}{=}\eta \tn{\Jc^T(\bteta_\tau)\rbb_\tau}\\
&\overset{(c)}{\le} \eta \bp\tn{\rbb_\tau}\\
&\overset{(d)}{\le} \frac{\tn{\rbb_\tau}}{\bp}\\
&\overset{(e)}{\le} \frac{\tn{\rbb_\tau}}{\bn} 
\end{align}
 In the above, (a) follows from the fact that row range space of Jacobian is subset of $\Scp$ via Assumption \ref{lrank}. (b) follows from the definition of $\rbb_\tau$. (c) follows from the upper bound on the spectral norm of the Jacobian over $\mathcal{D}$ per Assumption \ref{lrank}, (d) from the fact that $\eta\le \frac{1}{\bp^2}$, (e) from $\bn\le\bp$. The latter combined with the triangular inequality and induction hypothesis \eqref{close} yields (after scaling \eqref{close} by $4/\alpha$)
\begin{align*}
\tn{\bteta_{\tau+1}-\bteta_0}\leq \tn{\bteta_{\tau+1}-\bteta_\tau}+\tn{\bteta_{0}-\bteta_\tau}\le \tn{\bteta_\tau-\bteta_0}+\frac{\tn{\rbb_\tau}}{\bn}\leq \frac{4\tn{\rb_0}}{\bn},
\end{align*}
concluding the proof of $\vct{\theta}_{\tau+1}\in\mathcal{D}$. 
\end{proof}
To proceed, we shall verify that \eqref{close} holds for $\tau+1$ as well. Note that, following Lemma \ref{lin res}, gradient descent iterate can be written as
\[
\rb_{\tau+1}=(\Iden-\Cb(\bteta_{\tau}))\rb_\tau.
\]
Since both column and row space of $\Cb(\bteta_{\tau})$ is subset of $\Scp$, we have that
\begin{align}
\bar{\eb}_{\tau+1}&=\Pi_{\Scn}((\Iden-\Cb(\bteta_{\tau}))\rb_{\tau})\\
&=\Pi_{\Scn}(\rb_{\tau})\\
&=\bar{\eb}_{\tau},
\end{align}
This shows the first statement of the induction. Next, over $\Scp$, we have
\begin{align}
\rbb_{\tau+1}&=\Pi_{\Scp}((\Iden-\Cb(\bteta_{\tau}))\rb_{\tau})\\
&=\Pi_{\Scp}((\Iden-\Cb(\bteta_{\tau}))\rbb_{\tau})+\Pi_{\Scp}((\Iden-\Cb(\bteta_{\tau}))\bar{\eb}_{\tau})\\
&=\Pi_{\Scp}((\Iden-\Cb(\bteta_{\tau}))\rbb_{\tau})\\
&=(\Iden-\Cb(\bteta_{\tau}))\rbb_{\tau}\label{r tau+1}
\end{align}
where the second line uses the fact that $\bar{\eb}_{\tau}\in\Scn$ and last line uses the fact that $\rbb_{\tau}\in\Scp$. To proceed, we need to prove that $\Cb(\bteta_{\tau})$ has desirable properties over $\Scp$, in particular, it contracts this space.
\begin{claim} let $\Pb_{\Scp}\in\R^{n\times n}$ be the projection matrix to $\Scp$ i.e.~it is a positive semi-definite matrix whose eigenvectors over $\Scp$ is $1$ and its complement is $0$. Under the induction hypothesis and setup of the theorem, we have that\footnote{We say $\A\succeq \B$ if $\A-\B$ is a positive semi-definite matrix in the sense that for any real vector $\vb$, $\vb^T(\A-\B)\vb\geq 0$.}
\begin{align}
{\bp^2} \Pb_{\Scp}\succeq \Cb(\bteta_{\tau})\succeq \frac{1}{2}\Jc(\bteta_\tau)\Jc(\bteta_\tau)^T\succeq \frac{\bn^2}{2} \Pb_{\Scp}.\label{uplow}
\end{align}
\end{claim}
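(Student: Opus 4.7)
The plan is to split the three-part chain $\bp^2\Pb_{\Scp}\succeq \Cb(\bteta_\tau)\succeq \tfrac{1}{2}\Jc(\bteta_\tau)\Jc(\bteta_\tau)^T\succeq \tfrac{\bn^2}{2}\Pb_{\Scp}$ into three separate estimates. The first step is a structural observation: the vanishing condition in Assumption \ref{lrank} forces $\Jc(\bteta)^T$ to annihilate $\Scn$ for every $\bteta\in\Dc$, so the column space of $\Jc(\bteta)$ sits inside $\Scp$ and hence $\Pi_{\Scp}\Jc(\bteta)=\Jc(\bteta)$. The same holds for the averaged Jacobian $\Jc(\hat\bteta_\tau,\bteta_\tau)$, and consequently $\Cb(\bteta_\tau)=\Pi_{\Scp}\Cb(\bteta_\tau)\Pi_{\Scp}$. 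This reduces every quadratic form $\vb^T\Cb(\bteta_\tau)\vb$ to the case $\vb\in\Scp$ and also yields the operator-norm bound $\|\Jc(\bteta)\|\le \bp$ uniformly on $\Dc$ (it is zero on $\Scn$ and bounded by $\bp$ on $\Scp$).

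The outer two inequalities then follow almost immediately. Decomposing $\vb=\vb_++\vb_-$ into its $\Scp$ and $\Scn$ parts, the spectral lower bound gives $\tn{\Jc(\bteta_\tau)^T\vb}^2=\tn{\Jc(\bteta_\tau)^T\vb_+}^2\ge \bn^2\tn{\vb_+}^2=\bn^2\vb^T\Pb_{\Scp}\vb$, which is the bottom inequality. For the top one, $\|\Jc(\bteta)\|\le \bp$ implies $\|\Cb(\bteta_\tau)\|\le \bp^2$, and combined with the support identity $\Cb=\Pi_{\Scp}\Cb\Pi_{\Scp}$ this yields $\vb^T\Cb(\bteta_\tau)\vb\le \bp^2\tn{\Pi_{\Scp}\vb}^2$.

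The heart of the argument is the middle inequality. I would write $\Delta:=\Jc(\hat\bteta_\tau,\bteta_\tau)-\Jc(\bteta_\tau)$ and split
\[
\Cb(\bteta_\tau)=\Jc(\bteta_\tau)\Jc(\bteta_\tau)^T+\Delta\,\Jc(\bteta_\tau)^T.
\]
Expressing $\Delta$ as an integral of Jacobian differences and invoking the smoothness Assumption \ref{spert} yields $\|\Delta\|\le \tfrac{L}{2}\tn{\hat\bteta_\tau-\bteta_\tau}\le \tfrac{L\eta\bp}{2}\tn{\rb_\tau}$, reusing the calculation in Claim \ref{next inside}. The outer induction hypotheses imply $\tn{\rb_\tau}\le \sqrt{2}\,\tn{\rb_0}$, and the step-size constraint $\eta\le \tfrac{1}{2\bp^2}\cdot\tfrac{\bn\bp}{L\tn{\rb_0}}$ in the theorem is tuned precisely so that $\|\Delta\|\le \bn/2$. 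For $\vb\in\Scp$, Cauchy-Schwarz together with the lower bound $\tn{\vb}\le \tn{\Jc(\bteta_\tau)^T\vb}/\bn$ then gives
\[
\vb^T\Cb(\bteta_\tau)\vb\ge \tn{\Jc(\bteta_\tau)^T\vb}^2-\|\Delta\|\tn{\vb}\tn{\Jc(\bteta_\tau)^T\vb}\ge \Bigl(1-\tfrac{\|\Delta\|}{\bn}\Bigr)\tn{\Jc(\bteta_\tau)^T\vb}^2\ge \tfrac{1}{2}\tn{\Jc(\bteta_\tau)^T\vb}^2,
\]
which extends to all $\vb\in\R^n$ via $\Cb=\Pi_{\Scp}\Cb\Pi_{\Scp}$. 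The main obstacle is this perturbation control: one must verify that the trajectory stays inside $\Dc$ where smoothness is valid and that the residual has not blown up, both of which rest on the outer induction hypotheses \eqref{indind}--\eqref{close}, while the rest of the claim is a direct consequence of Assumption \ref{lrank} and the choice of step size.
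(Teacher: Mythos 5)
Your proposal is correct and follows essentially the same route as the paper: control $\|\Jc(\hat\bteta_\tau,\bteta_\tau)-\Jc(\bteta_\tau)\|\le\bn/2$ via smoothness, the step-size bound, and the induction hypotheses, then absorb the cross term with Cauchy--Schwarz and the lower bound $\tn{\Jc(\bteta_\tau)^T\vb}\ge\bn\tn{\vb}$ on $\Scp$, with the range-space observation giving the two outer inequalities. The only differences are cosmetic: you inline what the paper states as a separate asymmetric PSD perturbation lemma, and you bound the step length via $\tn{\rb_\tau}\le\sqrt{2}\,\tn{\rb_0}$ rather than the paper's slightly sharper $\tn{\rbb_\tau}\le\tn{\rbb_0}$, which still fits within the step-size budget.
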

\begin{proof} The proof utilizes the upper bound on the learning rate. The argument is similar to the proof of Lemma 9.7 of \cite{shortest}. Suppose Assumption \ref{spert} holds. Then, for any $\vct{\theta}_1,\vct{\theta}_2\in\mathcal{D}$ we have
\begin{align}
\label{avgsmooth}
\opnorm{\mathcal{J}\left(\vct{\theta}_2,\vct{\theta}_1\right)-\mathcal{J}(\vct{\theta}_1)}=&\opnorm{\int_0^1\left(\mathcal{J}\left(\vct{\theta}_1+t\left(\vct{\theta}_2-\vct{\theta}_1\right)\right)-\mathcal{J}(\vct{\theta}_1)\right)dt},\nn\\
\le&\int_0^1\opnorm{\mathcal{J}\left(\vct{\theta}_1+t\left(\vct{\theta}_2-\vct{\theta}_1\right)\right)-\mathcal{J}(\vct{\theta}_1)} dt,\nn\\
\le&\int_0^1 tL\twonorm{\vct{\theta}_2-\vct{\theta}_1}dt\le\frac{L}{2}\twonorm{\vct{\theta}_2-\vct{\theta}_1}.
\end{align}
Thus, for $\eta\le \frac{\bn}{L\bp\twonorm{\vct{r}_0}}$, 
\begin{align}
\opnorm{\Jc(\bteta_{\tau+1},\bteta_\tau)-\Jc(\bteta_\tau)}&\leq \frac{\el}{2}\twonorm{\bteta_{\tau+1}-\bteta_\tau}\\
&=\frac{\eta\el}{2}\twonorm{\mathcal{J}^T(\bteta_\tau)\left(f(\vct{\theta_\tau})-\y\right)}\leq \frac{\eta \bp\el}{2} \twonorm{\rbb_{\tau}}\\
&\overset{(a)}{\leq} \frac{\eta \bp \el}{2}\tn{\rbb_0}\overset{(b)}{\leq} \frac{\bn}{2}.\label{establish}
\end{align}
where for (a) we utilized the induction hypothesis \eqref{close} and (b) follows from the upper bound on $\eta$. Now that \eqref{establish} is established, using following lemma, we find
\begin{align*}
\mathcal{C}(\bteta_\tau)=&\Jc(\bteta_{\tau+1},\bteta_\tau)\Jc(\bteta_\tau)^T\succeq (1/2)\Jc(\bteta_\tau)\Jc(\bteta_\tau)^T.
\end{align*}
The $\beta^2$ upper bound directly follows from Assumption \ref{lrank} by again noticing range space of Jacobian is subset of $\Scp$.
\begin{lemma} [Asymmetric PSD perturbation]\label{asym pert} Consider the matrices $\A,\Cb\in\R^{n\times p}$ obeying $\|\A-\Cb\|\leq \alpha/2$. Also suppose $\Cb\Cb^T\succeq \alpha^2\Pb_{\Scp}$. Furthermore, assume range spaces of $\A,\Cb$ lies in $\Scp$. Then,
\[
\A\Cb^T\succeq \frac{\Cb\Cb^T}{2}\succeq \frac{\alpha^2}{2}\Pb_{\Scp}.
\]
\end{lemma}
\begin{proof} For $\rb\in\Scp$ with unit Euclidian norm, we have
\begin{align*}
\rb^T\A\Cb^T\rb&=\tn{\Cb^T\rb}^2+\rb^T(\A-\Cb)\Cb^T\rb\geq\tn{\Cb^T\rb}^2-\tn{\Cb^T\rb}\tn{\rb^T(\A-\Cb)}\\
&=(\tn{\Cb^T\rb}-\tn{\rb^T(\A-\Cb)})\tn{\Cb^T\rb}\\
&\geq (\tn{\Cb^T\rb}-\alpha/2)\tn{\Cb^T\rb}\\
&\geq\tn{\Cb^T\rb}^2/2.
\end{align*}
Also, for any $\rb$, by range space assumption $\rb^T\A\Cb^T\rb=\Pi_{\Scp}(\rb)^T\A\Cb^T\Pi_{\Scp}(\rb)$ (same for $\Cb\Cb^T$). Combined with above, this concludes the claim.
\end{proof}
\end{proof}
What remains is proving the final two statements of the induction \eqref{close}. Note that, using the claim above and recalling \eqref{r tau+1} and using the fact that $\opnorm{\Jc(\bteta_{\tau+1},\bteta_\tau)}\leq \bp$, the residual satisfies
\begin{align}
\tn{\rbb_{\tau+1}}^2=\tn{(\Iden-\eta\Cb(\bteta_{\tau}))\rbb_{\tau}}^2&=\tn{\rbb_{\tau}}^2-2\eta\rbb_{\tau}^T\Cb_\tau\rbb_{\tau}+\eta^2\rbb_{\tau}^T\Cb_\tau^T\Cb_\tau\rbb_{\tau}\\
&\leq \tn{\rbb_{\tau}}^2-\eta\rbb_{\tau}^T\Jc(\bteta_\tau)\Jc(\bteta_\tau)^T\rbb_{\tau}+\eta^2\bp^2\rbb_{\tau}^T\Jc(\bteta_\tau)\Jc(\bteta_\tau)^T\rbb_{\tau}\\
&\leq \tn{\rbb_{\tau}}^2-(\eta-\eta^2\bp^2)\tn{\Jc(\bteta_\tau)^T\rbb_{\tau}}^2\\
&\leq \tn{\rbb_{\tau}}^2-\frac{\eta}{2}\tn{\Jc(\bteta_\tau)^T\rbb_{\tau}}^2.\label{contractive C}
\end{align}
where we used the fact that $\eta\leq \frac{1}{2\bp^2}$. Now, using the fact that $\Jc(\bteta_\tau)\Jc(\bteta_\tau)^T\succeq \bn^2\Pb_{\Scp}$, we have
\[
\tn{\rbb_{\tau}}^2-\frac{\eta}{2}\tn{\Jc(\bteta_\tau)^T\rbb_{\tau}}^2\leq (1-\frac{\eta\bn^2}{2})\tn{\rbb_{\tau}}^2\leq (1-\frac{\eta\bn^2}{2})^{\tau+1}\tn{\rbb_{0}}^2,
\]
which establishes the second statement of the induction \eqref{close}. What remains is obtaining the last statement of \eqref{close}. To address this, completing squares, observe that
\[
\tn{\rbb_{\tau+1}}\leq \sqrt{\tn{\rbb_{\tau}}^2-\frac{\eta}{2}\tn{\Jc(\bteta_\tau)^T\rbb_{\tau}}^2}\leq\tn{\rbb_{\tau}}-\frac{\eta}{4}\frac{\tn{\Jc(\bteta_\tau)^T\rbb_{\tau}}^2 }{\tn{\rbb_{\tau}}}.
\]
On the other hand, the distance to initial point satisfies
\[
\tn{\bteta_{\tau+1}-\bteta_0}\leq\tn{\bteta_{\tau+1}-\bteta_\tau}+\tn{\bteta_{\tau}-\bteta_0}\leq \tn{\bteta_{\tau}-\bteta_0}+\eta \tn{\Jc(\bteta_\tau)\rbb_\tau}.
\]
Combining the last two lines (by scaling the second line by $\frac{1}{4}\bn$) and using induction hypothesis \eqref{close}, we find that
\begin{align}
\frac{1}{4}\bn\twonorm{\vct{\theta}_{\tau+1}-\vct{\theta}_0}+\twonorm{\rbb_{\tau+1}}&\leq \frac{1}{4}\bn(\tn{\bteta_{\tau}-\bteta_0}+\eta \tn{\Jc(\bteta_\tau)\rbb_\tau})+\tn{\rbb_{\tau}}-\frac{\eta}{4}\frac{\tn{\Jc(\bteta_\tau)^T\rbb_{\tau}}^2 }{\tn{\rbb_{\tau}}}\\
&\leq \left[\frac{1}{4}\bn\tn{\bteta_{\tau}-\bteta_0}+\tn{\rbb_{\tau}}\right]+\frac{\eta}{4} \left[\bn\tn{\Jc(\bteta_\tau)\rbb_\tau}-\frac{\tn{\Jc(\bteta_\tau)^T\rbb_{\tau}}^2 }{\tn{\rbb_{\tau}}}\right]\\
&\leq \left[\frac{1}{4}\bn\tn{\bteta_{\tau}-\bteta_0}+\tn{\rbb_{\tau}}\right]+\frac{\eta}{4}\tn{\Jc(\bteta_\tau)\rbb_\tau} \left[\bn-\frac{\tn{\Jc(\bteta_\tau)^T\rbb_{\tau}} }{\tn{\rbb_{\tau}}}\right]\\
&\leq \frac{1}{4}\bn\tn{\bteta_{\tau}-\bteta_0}+\tn{\rbb_{\tau}}\\
&\leq \tn{\rbb_0}\leq \tn{\rb_0}.
\end{align}
This establishes the final line of the induction and concludes the proof of the upper bound on $\tn{\bteta_\tau-\bteta_0}$. To proceed, we shall bound the infinity norm of the residual. Using $\Pi_{\Scn}(\eb)=\Pi_{\Scn}(\rb_0)=\bar{\eb}_\tau$, note that
\begin{align}
\tin{f(\bteta_{\tau})-\y-\eb}&=\tin{\rb_{\tau}-\eb}\\
&\leq \tin{\rbb_{\tau}}+\tin{\eb-\bar{\eb}_{\tau}}\\
&= \tin{\rbb_{\tau}}+\tin{\eb-\Pi_{\Scn}(\eb)}\\
&= \tin{\rbb_{\tau}}+\tin{\Pi_{\Scp}(\eb)}.
\end{align}
What remains is controlling $\tin{\rbb_{\tau}}$. For this term, we shall use the naive upper bound $\tn{\rbb_{\tau}}$. Using the rate of convergence of the algorithm \eqref{close}, we have that
\[
\tn{\rbb_{\tau}}\leq (1-\frac{\eta\bn^2}{4})^\tau \tn{\rb_0}.
\]
We wish the right hand side to be at most $\nu>0$ where $\nu \geq\tin{\Pi_{\Scp}(\eb)}$. This implies that we need
\begin{align}
(1-\frac{\eta\bn^2}{4})^\tau \tn{\rb_0}\leq \nu&\iff \tau \log(1-\frac{\eta\bn^2}{4})\leq \log (\frac{\nu}{\tn{\rb_0}})\\
&\iff\tau\log(\frac{1}{1-\frac{\eta\bn^2}{4}})\geq  \log (\frac{\tn{\rb_0}}{\nu})
\end{align}
To conclude, note that since $\frac{\eta\bn^2}{4}\leq 1/8$ (as $\eta\leq 1/2\bp^2$), we have
\[
\log(\frac{1}{1-\frac{\eta\bn^2}{4}})\geq  \log(1+\frac{\eta\bn^2}{4})\geq \frac{\eta\bn^2}{5}.
\]
Consequently, if $\tau\geq \frac{5}{\eta\bn^2}\log (\frac{\tn{\rb_0}}{\nu})$, we find that $\tin{\rbb_{\tau}}\leq \tn{\rbb_{\tau}}\leq \nu$, which guarantees
\[
\tin{\rb_\tau-\eb}\leq 2\nu.
\]
which is the advertised result. If $\eb$ is $s$ sparse and $\Scp$ is diffused, applying Lemma \ref{diff scp} we have
\[
\tin{\Pi_{\Scp}(\eb)}\leq \frac{\gamma\sqrt{s}}{n}\tn{\eb}.
\]
\end{proof}
\subsubsection{Proof of Generic Lower Bound -- Theorem \ref{lem how far?}}
\begin{proof} Suppose $\bteta\in\Dc$ satisfies $\y=f(\bteta)$. Define $\Jb_\tau=\Jc((1-\tau)\bteta+\tau\bteta_0)$ and $\Jb=\Jc(\bteta,\bteta_0)=\int_0^1 \Jb_\tau d\tau$. Since Jacobian is derivative of $f$, we have that
\[
{f(\bteta)-f(\bteta_0)}={\int_0^1 \Jb_\tau(\bteta-\bteta_0) d\tau}= { \Jb(\bteta-\bteta_0)}.
\]
Now, define the matrices $\Jb_+=\Pi_{\Scp}( \Jb)$ and $\Jb_-=\Pi_{\Scn}( \Jb)$. Using Assumption \ref{lrank2}, we bound the spectral norms via
\[
\|\Jb_+\|=\sup_{\vb\in \Scp, \tn{\vb}\leq 1}{\tn{\Jb^T\vb}}\leq \bp\quad,\quad \|\Jb_-\|=\sup_{\vb\in \Scn, \tn{\vb}\leq 1}{\tn{\Jb^T\vb}}\leq \be.
\]
To proceed, projecting the residual on $\Scp$, we find for any $\bteta$ with $f(\bteta)=\y$
\[
\Pi_{\Scp}(f(\bteta)-f(\bteta_0))=\Pi_{\Scp}(\Jb)(\bteta-\bteta_0)\implies \tn{\bteta-\bteta_0}\geq \frac{\tn{\Pi_{\Scp}(f(\bteta)-f(\bteta_0))}}{\bp}\geq \frac{E_+}{\bp}.
\]
The identical argument for $\Scn$ yields $\tn{\bteta-\bteta_0}\geq \frac{E_-}{\be}$. Together this implies 
\begin{align}
\tn{\bteta-\bteta_0}\geq \max(\frac{E_-}{\be},\frac{E_+}{\bp}).\label{fin conc}
\end{align}
If $R$ is strictly smaller than right hand side, we reach a contradiction as $\bteta\not\in\Dc$. If $\Dc=\R^p$, we still find \eqref{fin conc}.
\end{proof}

This shows that if $\be$ is small and $E_-$ is nonzero, gradient descent has to traverse a long distance to find a good model. Intuitively, if the projection over the noise space indeed contains the label noise, we actually don't want to fit that. Algorithmically, our idea fits the residual over the signal space and not worries about fitting over the noise space. Approximately speaking, this intuition corresponds to the $\ell_2$ regularized problem
\[
\min_{\bteta} \Lc(\bteta)\quad\quad \tn{\bteta-\bteta_0}\leq R.
\]
If we set $R=\frac{E_+}{\bp}$, we can hope that solution will learn only the signal and does not overfit to the noise. The next section builds on this intuition and formalizes our algorithmic guarantees.
\subsection{Proofs for Neural Networks}
Throughout, $\smn{\cdot}$ denotes the smallest singular value of a given matrix. We first introduce helpful definitions that will be used in our proofs.
\begin{definition}[Support subspace] \label{supp space}Let $\{\x_i\}_{i=1}^n$ be an input dataset generated according to Definition \ref{cdata}. Also let $\{\widetilde{\x}_i\}_{i=1}^n$ be the associated cluster centers, that is, $\widetilde{\x}_i=\cb_\ell$ iff $\x_i$ is from the $\ell$th cluster. We define the support subspace $\Scp$ as a subspace of dimension $K$, dictated by the cluster membership as follows. Let $\Lambda_{\ell}\subset\{1,\dots,n\}$ be the set of coordinates $i$ such that $\tilde{\x}_i=\cb_{\ell}$. Then, $\Scp$ is characterized by
\[
\Scp=\{\vb\in\R^n\bgl \vb_{i_1}=\vb_{i_2}\quad\text{for all}\quad i_1,i_2\in\Lambda_{\ell}\quad\text{and for all}~1\leq \ell\leq K\}.
\]
\end{definition}
\begin{definition}[Neural Net Jacobian] \label{nnj def}Given input samples $(\x_i)_{i=1}^n$, form the input matrix $\X=[\x_1~\dots~\x_n]^T\in\R^{n\times d}$. The Jacobian of the learning problem \eqref{q loss}, at a matrix $\W$ is denoted by $\Jc(\W,\X)\in\R^{n\times kd}$ and is given by
\[
\Jc(\W,\X)^T=(\text{diag}(\vb)\phi'(\W\X^T))*\X^T.
\]
Here $*$ denotes the Khatri-Rao product.
\end{definition}
The following theorem is borrowed from \cite{anon2019overparam} and characterizes three key properties of the neural network Jacobian. These are smoothness, spectral norm, and minimum singular value at initialization which correspond to Lemmas 6.6, 6.7, and 6.8 in that paper.
\begin{theorem}[Jacobian Properties at Cluster Center]\label{JLlem} Suppose $\X=[\x_1~\dots~\x_n]^T\in\R^{n\times d}$ be an input dataset satisfying $\la(\X)>0$. Suppose $|\phi'|,|\phi''|\leq \Gamma$. The Jacobian mapping with respect to the input-to-hidden weights obey the following properties.
\begin{itemize}
\item Smoothness is bounded by
\begin{align*}
\opnorm{\mathcal{J}(\widetilde{\mtx{W}},\X)-\mathcal{J}(\mtx{W},\X)}\le \frac{\Gamma}{\sqrt{k}}\opnorm{\X}\fronorm{\widetilde{\mtx{W}}-\mtx{W}}\quad\text{for all}\quad \widetilde{\W},\W\in\R^{k\times d}.
\end{align*}
\item Top singular value is bounded by
\begin{align*}
\opnorm{\mathcal{J}(\mtx{W},\X)}\le \Gamma\opnorm{\X}.
\end{align*}
\item Let $C>0$ be an absolute constant. As long as
\begin{align*}
k\ge \frac{C\Gamma^2{\log n\opnorm{\X}^2}}{\la(\X)}
\end{align*}
At random Gaussian initialization $\W_0\sim\Nn(0,1)^{k\times d}$, with probability at least $1-1/K^{100}$, we have
\begin{align*}
\sigma_{\min}\left(\mathcal{J}(\W_0,\X)\right)\ge  \sqrt{\la(\X)/2}.
\end{align*}
\end{itemize}
\end{theorem}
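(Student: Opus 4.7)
The plan is to deduce all three properties from a single Khatri--Rao identity for the Jacobian and then handle the lower bound by matrix concentration. Using Definition \ref{nnj def}, the row of $\mathcal{J}(\W,\X)$ corresponding to sample $\x_i$ is the vector $(\vb\odot\phi'(\W\x_i))\otimes\x_i\in\R^{kd}$, so for any $\rb\in\R^n$,
\[
\mathcal{J}(\W,\X)^T\rb \;=\; \mathrm{vec}\!\Big(\mathrm{diag}(\vb)\,\phi'(\W\X^T)\,\mathrm{diag}(\rb)\,\X\Big).
\]
All three claims will be obtained by applying sub-multiplicativity to the three factors of this $k\times d$ matrix, together with $\|\vb\|_\infty=1/\sqrt{k}$ and the unit-norm assumption $\|\x_i\|_2=1$ inherited from the clusterable setup.

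For the spectral-norm bound I take Frobenius norms inside \text{vec}: the middle block satisfies $\|\phi'(\W\X^T)\mathrm{diag}(\rb)\|_F^2\le \Gamma^2\sum_i r_i^2\cdot k=\Gamma^2 k$, the $\vb$-factor contributes $1/\sqrt{k}$, and right-multiplying by $\X$ contributes $\|\X\|$, so $\|\mathcal{J}(\W,\X)^T\rb\|_2\le \Gamma\|\X\|$ for every unit $\rb$. For smoothness I apply the same identity to $\mathcal{J}(\W,\X)-\mathcal{J}(\widetilde\W,\X)$: the only varying factor is $\phi'(\W\X^T)-\phi'(\widetilde\W\X^T)$, whose entries are bounded using the $\Gamma$-Lipschitzness of $\phi'$ by $\Gamma\,|(\w_j-\widetilde\w_j)^T\x_i|$. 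Cauchy--Schwarz combined with $\|\x_i\|=1$ and $\sum_i r_i^2=1$ then gives $\|[\phi'(\W\X^T)-\phi'(\widetilde\W\X^T)]\mathrm{diag}(\rb)\|_F\le \Gamma\|\W-\widetilde\W\|_F$, and attaching the $\vb$- and $\X$-factors produces $\frac{\Gamma}{\sqrt{k}}\|\X\|\,\|\W-\widetilde\W\|_F$ as claimed.

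The substantive work lies in the lower bound on $\sigma_{\min}(\mathcal{J}(\W_0,\X))$. Squaring the identity and using the Khatri--Rao $=$ Hadamard-of-Grams identity yields
\[
\mathcal{J}(\W_0,\X)\mathcal{J}(\W_0,\X)^T \;=\; (\X\X^T)\odot \Big(\tfrac{1}{k}\phi'(\W_0\X^T)^T\phi'(\W_0\X^T)\Big).
\]
Taking the expectation over the i.i.d.\ rows $\w_\ell\sim\mathcal{N}(0,\Iden_d)$ turns the second factor into $\E_{\g}[\phi'(\X\g)\phi'(\X\g)^T]$, so $\E[\mathcal{J}\mathcal{J}^T]=\bSi(\X)$ in the notation of Definition \ref{clust cov}, with minimum eigenvalue $\la(\X)$. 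The Hadamard factor is an average of $k$ i.i.d.\ rank-one PSD matrices $\phi'(\w_\ell\X^T)\phi'(\w_\ell\X^T)^T$, each of operator norm at most $\Gamma^2 n$, so a matrix Chernoff/Bernstein inequality guarantees that this average sits within $\la(\X)/(2\|\X\|^2)$ of its mean in operator norm once $k\gtrsim \Gamma^2\|\X\|^2\log n/\la(\X)$. The Schur product bound $\|A\odot B\|\le (\max_i A_{ii})\|B\|$ applied with $A=\X\X^T$ (diagonal entries $\le \|\X\|^2$) propagates this to an $\la(\X)/2$ deviation of $\mathcal{J}\mathcal{J}^T$ from $\bSi(\X)$, and taking square roots gives $\sigma_{\min}(\mathcal{J})\ge\sqrt{\la(\X)/2}$ on the high-probability event.

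The main obstacle is the concentration step for the third claim: the summands are bounded but must be controlled after Hadamard perturbation, which is precisely why the overparameterization threshold scales with $\|\X\|^2$ rather than $1$. Since the theorem is quoted verbatim from \cite{anon2019overparam} (their Lemmas 6.6--6.8), the cleanest route in practice is to invoke those lemmas after checking that the Gaussian initialization $\W_0\distas \mathcal{N}(0,1)$ and the unit-norm convention on $\X$ match their setting; the smoothness and operator-norm bounds are then routine Khatri--Rao manipulations of the identity above.
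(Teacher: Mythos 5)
You are right that the paper contains no proof of this theorem: it is imported verbatim from \cite{anon2019overparam} (their Lemmas 6.6--6.8), so your fallback of simply invoking those lemmas is exactly what the paper does. Your Khatri--Rao identity $\Jc(\W,\X)^T\rb=\mathrm{vec}\big(\diag(\vb)\,\phi'(\W\X^T)\,\diag(\rb)\,\X\big)$ is correct, and your derivations of the smoothness and top-singular-value bounds from it (using $\|\vb\|_{\ell_\infty}=1/\sqrt{k}$, $|\phi'|,|\phi''|\le\Gamma$, and unit-norm rows of $\X$, which is indeed the convention in force) are sound and match the standard argument.

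The genuine gap is in your concentration step for $\sigma_{\min}(\Jc(\W_0,\X))$: it does not reach the stated threshold $k\gtrsim \Gamma^2\opnorm{\X}^2\log n/\la(\X)$. You propose to concentrate the Gram factor $\frac1k\phi'(\W_0\X^T)^T\phi'(\W_0\X^T)$ around its mean in operator norm, with i.i.d.\ summands $\phi'(\X\w_\ell)\phi'(\X\w_\ell)^T$ of operator norm as large as $\Gamma^2 n$. Matrix Bernstein/Chernoff with that bound forces $k\gtrsim \Gamma^2 n\log n/\la(\X)$ at the very least (the variance term can be even worse), i.e.\ an extra factor on the order of $n$ relative to the claim --- and note $\opnorm{\X}^2$ can be as small as roughly $n/d$, so this is a real loss, not a constant. (Your transfer step is also mis-scaled: with unit-norm samples the diagonal of $\X\X^T$ is $1$, not $\opnorm{\X}^2$, so the Schur bound would only require deviation $\la(\X)/2$, but even that target cannot be met at the stated $k$ by your route.) The fix is to apply the Hadamard--conjugation identity \emph{before} concentrating: writing $\s_\ell=\phi'(\X\w_\ell)$, one has $(\X\X^T)\odot(\s_\ell\s_\ell^T)=\diag(\s_\ell)\,\X\X^T\,\diag(\s_\ell)$, hence $\Jc(\W_0,\X)\Jc(\W_0,\X)^T=\frac1k\sum_{\ell=1}^k\diag(\s_\ell)\,\X\X^T\,\diag(\s_\ell)$ is an average of i.i.d.\ PSD matrices, each bounded in operator norm by $\Gamma^2\opnorm{\X}^2$, with mean $\bSi(\X)$. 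A matrix Chernoff lower-tail bound on the minimum eigenvalue then yields $\lambda_{\min}\big(\Jc(\W_0,\X)\Jc(\W_0,\X)^T\big)\ge \la(\X)/2$ with failure probability $n\exp\big(-ck\la(\X)/(\Gamma^2\opnorm{\X}^2)\big)$, which is exactly the advertised overparameterization requirement.
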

In our case, the Jacobian is {\bf{not}} well-conditioned. However, it is pretty well-structured as described previously. To proceed, given a matrix $\X\in\R^{n\times d}$ and a subspace $\Sc\subset\R^n$, we define the minimum singular value of the matrix over this subspace by $\smn{\X,\Sc}$ which is defined as
\[
\smn{\X,\Sc}=\sup_{\tn{\vb}=1,\Ub\Ub^T=\Pb_{\Sc}} \tn{\vb^T \Ub^T \X}.
\]
Here, $\Pb_{\Sc}\in\R^{n\times n}$ is the projection operator to the subspace. Hence, this definition essentially projects the matrix on $\Sc$ and then takes the minimum singular value over that projected subspace.
The following theorem states the properties of the Jacobian at a clusterable dataset.
\begin{theorem}[Jacobian Properties at Clusterable Dataset]\label{JLcor} Let input samples $(\x_i)_{i=1}^n$ be generated according to $(\eps_0,\delta)$ clusterable dataset model of Definition \ref{cdata} and define $\X=[\x_1~\dots~\x_n]^T$. Let $\Scp$ be the support space and $(\tilde{\x}_i)_{i=1}^n$ be the associated clean dataset as described by Definition \ref{supp space}. Set $\tilde{\X}=[\tilde{\x}_1~\dots~\tilde{\x}_n]^T$. Assume $|\phi'|,|\phi''|\leq \Gamma$ and $\la(\Cb)>0$. The Jacobian mapping at $\tilde{\X}$ with respect to the input-to-hidden weights obey the following properties.
\begin{itemize}
\item Smoothness is bounded by
\begin{align*}
\opnorm{\mathcal{J}(\widetilde{\mtx{W}},\tilde{\X})-\mathcal{J}(\mtx{W},\tilde{\X})}\le \Gamma\sqrt{\frac{c_{up} n}{{kK}}}\opnorm{\Cb}\fronorm{\widetilde{\mtx{W}}-\mtx{W}}\quad\text{for all}\quad \widetilde{\W},\W\in\R^{k\times d}.
\end{align*}
\item Top singular value is bounded by
\begin{align*}
\opnorm{\mathcal{J}(\mtx{W},\tilde{\X})}\le \sqrt{\frac{c_{up}n}{K}}\Gamma\opnorm{\Cb}.
\end{align*}
\item As long as
\begin{align*}
k\ge \frac{C\Gamma^2{\log K\opnorm{\Cb}}^2}{\la(\Cb)}
\end{align*}
At random Gaussian initialization $\W_0\sim\Nn(0,1)^{k\times d}$, with probability at least $1-1/K^{100}$, we have
\begin{align*}
\sigma_{\min}\left(\mathcal{J}(\W_0,\tilde{\X}),\Scp\right)\ge  \sqrt{\frac{c_{low}n\la(\Cb)}{2K}}
\end{align*}
\item The range space obeys $\text{range}(\mathcal{J}(\W_0,\tilde{\X}))\subset\Scp$ where $\Scp$ is given by Definition \ref{supp space}.
\end{itemize}
\end{theorem}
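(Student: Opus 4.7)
The central observation driving the proof is that $\tilde{\X}$ is obtained by replicating each cluster center: if $\Pb\in\R^{n\times K}$ is the membership matrix with $\Pb_{i,\ell}=1$ exactly when $\tilde{\x}_i=\cb_\ell$ and $0$ otherwise, then $\tilde{\X}=\Pb\Cb$. Each row of $\Pb$ has a single nonzero entry and its columns are mutually orthogonal with squared norms $\{n_\ell\}$, so $\Pb^T\Pb=\diag(n_1,\dots,n_K)$, which immediately gives $\|\Pb\|\le \sqrt{c_{up}n/K}$ and $\sigma_{\min}(\Pb)\ge\sqrt{c_{low}n/K}$. Moreover, $\text{range}(\Pb)$ is precisely the support subspace $\Scp$ of Definition \ref{supp space}, since vectors in the column span of $\Pb$ are exactly those constant on each cluster. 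Because the $i$-th row of $\Jc(\W,\tilde{\X})$ depends only on $\tilde{\x}_i$, row replication propagates cleanly through Definition \ref{nnj def} and yields the key identity
\[
\Jc(\W,\tilde{\X}) \;=\; \Pb\,\Jc(\W,\Cb).
\]
The range claim $\text{range}(\Jc(\W_0,\tilde{\X}))\subseteq\Scp$ is an instant corollary.

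For smoothness and the top singular value I would apply Theorem \ref{JLlem} to the $K\times d$ matrix $\Cb$ (the hypothesis $\la(\Cb)>0$ and the bound on $k$ are exactly what is assumed here). That theorem gives $\opnorm{\Jc(\widetilde{\W},\Cb)-\Jc(\W,\Cb)}\le \frac{\Gamma}{\sqrt{k}}\|\Cb\|\fronorm{\widetilde{\W}-\W}$ and $\opnorm{\Jc(\W,\Cb)}\le \Gamma\|\Cb\|$. Left-multiplying by $\Pb$ and using submultiplicativity of the spectral norm, together with $\|\Pb\|\le\sqrt{c_{up}n/K}$, yields the stated smoothness constant $\Gamma\sqrt{c_{up}n/(kK)}\,\|\Cb\|$ and the stated top singular value bound $\sqrt{c_{up}n/K}\,\Gamma\|\Cb\|$.

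The main obstacle is the minimum singular value restricted to $\Scp$, which I expect to be the only step requiring care. For any unit $\vb\in\Scp$ we have
\[
\tn{\vb^T\Jc(\W_0,\tilde{\X})}=\tn{(\Pb^T\vb)^T\Jc(\W_0,\Cb)}\;\ge\;\sqrt{\la(\Cb)/2}\;\tn{\Pb^T\vb},
\]
by the minimum singular value guarantee of Theorem \ref{JLlem} applied to $\Cb$ (which holds with probability at least $1-1/K^{100}$ under the assumed lower bound on $k$). It then remains to lower bound $\tn{\Pb^T\vb}$. Since $\vb$ is constant on each cluster, writing $\vb_i=a_\ell$ for $i\in\Lambda_\ell$ gives $(\Pb^T\vb)_\ell=n_\ell a_\ell$, hence
\[
\tn{\Pb^T\vb}^2=\sum_\ell n_\ell^2 a_\ell^2 \;\ge\; \Big(\min_\ell n_\ell\Big)\sum_\ell n_\ell a_\ell^2 \;=\;\Big(\min_\ell n_\ell\Big)\tn{\vb}^2 \;\ge\; \frac{c_{low}n}{K}\tn{\vb}^2.
\]
Combining these two bounds gives the advertised $\sqrt{c_{low}n\la(\Cb)/(2K)}$ lower bound and completes the proof. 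The subtle point worth emphasizing is that one must project onto $\Scp$ on the left (via $\Pb^T$) and use the anisotropy of $\Pb^T\Pb$ rather than a naive $\sigma_{\min}(\Pb)$ argument, because blindly invoking $\sigma_{\min}(\Pb)\sigma_{\min}(\Jc(\W_0,\Cb))$ would discard the $\sqrt{n/K}$ amplification that the replication induces.
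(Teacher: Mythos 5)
Your proof is correct and follows essentially the same route as the paper: you formalize the paper's informal ``row duplication'' observation by introducing the membership matrix $\Pb$ with $\Jc(\W,\tilde\X)=\Pb\,\Jc(\W,\Cb)$ and $\Pb^T\Pb=\diag(n_1,\dots,n_K)$, and then all four bullets follow by composing with Theorem \ref{JLlem} applied to $\Cb$. One small quibble with your closing remark: the ``naive'' bound $\sigma_{\min}(\Pb\,\Jc(\W_0,\Cb))\ge \sigma_{\min}(\Pb)\,\sigma_{\min}(\Jc(\W_0,\Cb))$ does \emph{not} discard anything here, since $\Pb$ has full column rank and $\sigma_{\min}(\Pb)=\sqrt{\min_\ell n_\ell}\ge\sqrt{c_{low}n/K}$ already carries exactly the $\sqrt{n/K}$ amplification your anisotropy computation recovers.
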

\begin{proof} Let $\mathcal{J}(\W,\Cb)$ be the Jacobian at the cluster center matrix. Applying Theorem \ref{JLlem}, this matrix already obeys the properties described in the conclusions of this theorem with desired probability (for the last conclusion). We prove our theorem by relating the cluster center Jacobian to the clean dataset Jacobian matrix $\mathcal{J}(\W,\tilde{\X})$.

Note that $\tilde{\X}$ is obtained by duplicating the rows of the cluster center matrix $\Cb$. This implies that $\mathcal{J}(\W,\tilde{\X})$ is obtained by duplicating the rows of the cluster center Jacobian. The critical observation is that, by construction in Definition \ref{cdata}, each row is duplicated somewhere between $c_{low}n/K$ and $c_{up}n/K$. 

To proceed, fix a vector $\vb$ and let $\tilde{\pb}=\mathcal{J}(\W,\tilde{\X})\vb\in\R^n$ and $\pb=\mathcal{J}(\W,\Cb)\vb\in\R^K$. Recall the definition of the support sets $\Lambda_{\ell}$ from Definition \ref{supp space}. We have the identity
\[
\tilde{\pb}_i=\pb_{\ell}\quad\text{for all}\quad i\in \Lambda_{\ell}.
\]
This implies $\tilde{\pb}\in \Scp$ hence $\text{range}(\mathcal{J}(\W,\tilde{\X}))\subset\Scp$. Furthermore, the entries of $\tilde{\pb}$ repeats the entries of ${\pb}$ somewhere between $c_{low}n/K$ and $c_{up}n/K$. This implies that,
\[
\sqrt{\frac{c_{up}n}{K}}\tn{\pb}\geq \tn{\tilde{\pb}}\geq \sqrt{\frac{c_{low}n}{K}}\tn{\pb},
\]
and establishes the upper and lower bounds on the singular values of $\mathcal{J}(\W,\tilde{\X})$ over $\Scp$ in terms of the singular values of $\mathcal{J}(\W,\Cb)$. Finally, the smoothness can be established similarly. Given matrices $\W,\tilde{\W}$, the rows of the difference
\[
\opnorm{\mathcal{J}(\widetilde{\mtx{W}},\tilde{\X})-\mathcal{J}(\mtx{W},\tilde{\X})}
\]
is obtained by duplicating the rows of $\opnorm{\mathcal{J}(\widetilde{\mtx{W}},\Cb)-\mathcal{J}(\mtx{W},\Cb)}$ by at most $c_{up}n/K$ times. Hence the  spectral norm is scaled by at most $\sqrt{c_{up}n/K}$.
\end{proof}
\begin{lemma}[Upper bound on initial misfit]\label{upresz} Consider a one-hidden layer neural network model of the form $\vct{x}\mapsto \vct{v}^T\phi\left(\W\x\right)$ where the activation $\phi$ has bounded derivatives obeying $|\phi(0)|,|\phi'(z)|\le \Gamma$. Suppose entries of $\vb\in\R^k$ are half $1/\sqrt{k}$ and half $-1/\sqrt{k}$ so that $\tn{\vb}=1$. Also assume we have $n$ data points $\vct{x}_1, \vct{x}_2,\ldots,\vct{x}_n\in\R^d$ with unit euclidean norm ($\twonorm{\vct{x}_i}=1$) aggregated as rows of a matrix $\X\in\R^{n\times d}$ and the corresponding labels given by $\vct{y}\in\R^n$ generated accoring to $(\rho,\eps_0=0,\delta)$ noisy dataset (Definition \ref{noisy model}). Then for $\mtx{W}_0\in\R^{k\times d}$ with i.i.d.~$\mathcal{N}(0,1)$ entries 
\begin{align*}
\twonorm{\vct{v}^T\phi\left(\W_0\X^T\right)-\y}\le\order{\Gamma\sqrt{n\log K}},
\end{align*}
holds with probability at least $1-K^{-100}$.
\end{lemma}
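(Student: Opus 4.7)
The plan is to apply the triangle inequality $\twonorm{\vb^T\phi(\W_0\X^T) - \y} \le \twonorm{\vb^T\phi(\W_0\X^T)} + \twonorm{\y}$, observe that $\twonorm{\y}\le\sqrt{n}$ since all labels lie in $[-1,1]$, and then show the nontrivial term $\twonorm{\vb^T\phi(\W_0\X^T)}$ is $O(\Gamma\sqrt{n\log K})$ via a pointwise Gaussian concentration bound together with a union bound over the (at most $K$) distinct cluster centers.

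For a fixed input $\x$ with $\tn{\x}=1$, I would write
\[
\vb^T\phi(\W_0\x) \;=\; \frac{1}{\sqrt{k}}\sum_{j=1}^k \epsilon_j\,\phi(\w_j^T\x),
\]
where $\w_j$ is the $j$-th row of $\W_0$ and $\epsilon_j\in\{\pm 1\}$ is the fixed sign of $\sqrt{k}\,v_j$. Since $\w_j\sim\mathcal{N}(0,I_d)$ and $\tn{\x}=1$, the scalars $\{\phi(\w_j^T\x)\}_{j=1}^k$ are i.i.d., and since the sign pattern is balanced ($\sum_j \epsilon_j=0$), one immediately gets $\E[\vb^T\phi(\W_0\x)] = 0$. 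Using $|\phi(0)|,|\phi'|\le\Gamma$, the map $g\mapsto \phi(g)$ is $\Gamma$-Lipschitz, so by Gaussian Lipschitz concentration each centered summand $\epsilon_j(\phi(\w_j^T\x)-\mu)$ (with $\mu := \E_{g\sim\mathcal{N}(0,1)}[\phi(g)]$) is sub-Gaussian with parameter $O(\Gamma)$. By independence of the rows $\w_j$, the $1/\sqrt{k}$-normalized sum is sub-Gaussian with parameter $O(\Gamma)$, yielding $\P(|\vb^T\phi(\W_0\x)|\ge t) \le 2\exp(-ct^2/\Gamma^2)$.

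Because $\eps_0=0$, the training inputs $\{\x_i\}_{i=1}^n$ take at most $K$ distinct values, namely the cluster centers $\cb_1,\dots,\cb_K$. Applying the tail bound with $t = C\Gamma\sqrt{\log K}$ at each cluster center and union bounding gives, with probability at least $1-K^{-100}$ (for $C$ sufficiently large), $|\vb^T\phi(\W_0\x_i)|\le O(\Gamma\sqrt{\log K})$ for every $i$. Summing squares produces $\twonorm{\vb^T\phi(\W_0\X^T)} \le O(\Gamma\sqrt{n\log K})$; combining with $\twonorm{\y}\le \sqrt{n}$ yields the claim.

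The whole argument is essentially a routine Gaussian concentration computation, and no step should be a serious obstacle. The only subtleties worth emphasizing are (i) that the balanced $\pm 1/\sqrt{k}$ structure of $\vb$ is precisely what centers the prediction at every input, and (ii) that using $\eps_0=0$ to collapse the union bound from $n$ samples down to the $K$ distinct cluster centers is exactly what produces the $\sqrt{\log K}$ factor in the statement, rather than the naive $\sqrt{\log n}$.
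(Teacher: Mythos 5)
Your proposal is correct and follows essentially the same route as the paper's proof: bound $\twonorm{\y}$ by $\sqrt{n}$, use the balanced $\pm 1/\sqrt{k}$ output weights to center $\vb^T\phi(\W_0\x)$, invoke the $\Gamma$-Lipschitzness of $\phi$ and independence of the Gaussian rows to get an $O(\Gamma)$-subgaussian prediction at each input, and union bound over the $K$ distinct cluster centers (valid since $\eps_0=0$) to obtain the $\sqrt{\log K}$ factor. Your handling of the centering via $\mu=\E[\phi(g)]$ is in fact slightly more explicit than the paper's, but the argument is the same.
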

\begin{proof} This lemma is based on a fairly straightforward union bound. First, by construction $\tn{\y}\leq \sqrt{n}$. What remains is bounding $\tn{\vct{v}^T\phi\left(\W_0\X^T\right)}$. Since $\eps_0=0$ there are $K$ unique rows. We will show that each of the unique rows is bounded with probability $1-K^{-101}$ and union bounding will give the final result. Let $\w$ be a row of $\W_0$ and $\x$ be a row of $\X$. Since $\phi$ is $\Gamma$ Lipschitz and $|\phi(0)|\leq \Gamma$, each entry of $\phi\left(\X\w\right)$ is $\order{\Gamma}$-subgaussian. Hence $\vb^T\phi(\W_0\x)$ is weighted average of $k$ i.i.d.~subgaussians which are entries of $\phi(\W_0\x)$. Additionally it is zero mean since $\sum_{i=1}^n\vb_i=0$. This means $\vb^T\phi(\W_0\x)$ is also $\order{\Gamma}$ subgaussian and obeys
\[
\Pro(|\vb^T\phi(\W_0\x)|\geq c\Gamma\sqrt{\log K})\leq K^{-101},
\]
for some constant $c>0$, concluding the proof.
\end{proof}

\subsubsection{Proof of Theorem \ref{main thm robust}}
We first prove a lemma regarding the projection of label noise on the cluster induced subspace.
\begin{lemma}\label{label noise lem} Let $\{(\x_i,y_i)\}_{i=1}^n$ be an $(\rho,\eps_0=0,\delta)$ clusterable noisy dataset as described in Definition \ref{noisy model}. Let $\{\tilde{y}_i\}_{i=1}^n$ be the corresponding noiseless labels. Let $\Jc(\W,\Cb)$ be the Jacobian at the cluster center matrix which is rank $K$ and $\Scp$ be its column space. Then, the difference between noiseless and noisy labels satisfy the bound
\[
\tin{\Pi_{\Scp}(\y-\tilde{\y})}\leq 2\rho.
\]
\end{lemma}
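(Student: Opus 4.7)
The plan is to exploit the explicit description of $\Scp$ from Definition \ref{supp space}, which identifies it as the span of the indicator vectors of the cluster partitions $\Lambda_1,\ldots,\Lambda_K$. In particular, the last bullet of Theorem \ref{JLcor} tells us that the range of the Jacobian at the clean (cluster-center) dataset is contained in this $\Scp$, so working with the combinatorial description of $\Scp$ is exactly what we need. Since $\Scp$ consists of vectors that are constant on each $\Lambda_\ell$, the orthogonal projection $\Pi_{\Scp}$ acts as cluster-wise averaging: for any $\vb\in\R^n$ and any $i\in\Lambda_\ell$,
\[
(\Pi_{\Scp}(\vb))_i = \frac{1}{n_\ell}\sum_{j\in\Lambda_\ell}\vb_j.
\]

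The next step is to control this average when $\vb=\eb:=\y-\tilde{\y}$. By Definition \ref{noisy model}, within each cluster $\Lambda_\ell$ at most $\rho n_\ell$ of the labels $y_j$ differ from $\tilde{y}_j$, so at most $\rho n_\ell$ entries of $\eb$ restricted to $\Lambda_\ell$ are nonzero. Moreover each nonzero entry of $\eb$ is a difference of two class labels in $[-1,1]$ and hence has magnitude at most $2$. Plugging these two bounds into the averaging formula gives
\[
\bigl|(\Pi_{\Scp}(\eb))_i\bigr| \;\leq\; \frac{1}{n_\ell}\cdot(\rho n_\ell)\cdot 2 \;=\; 2\rho
\]
for every $i\in\Lambda_\ell$, and taking a maximum over $i\in\{1,\ldots,n\}$ yields $\tin{\Pi_{\Scp}(\eb)}\leq 2\rho$, as claimed.

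There is no real obstacle: the argument is a direct consequence of the combinatorial description of $\Scp$ together with the per-cluster corruption budget in Definition \ref{noisy model}. The only conceptual point worth highlighting is that the projection is an \emph{averaging} operator rather than a selection, so even though individual entries of $\eb$ can be as large as $2$, the projected vector gets a factor-$\rho$ reduction per cluster because of the sparsity of corruptions within each $\Lambda_\ell$. This is precisely the mechanism that lets the diffusedness-type bound feed into Theorem \ref{grad noise} and drive the final robustness guarantee.
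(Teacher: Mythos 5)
Your proposal is correct and follows essentially the same route as the paper: the paper likewise identifies $\Pi_{\Scp}$ with the cluster-wise averaging operator $\Pb=\sum_{\ell=1}^K\frac{1}{n_\ell}\onebb(\ell)\onebb(\ell)^T$ and bounds each cluster's average using the per-cluster budget of at most $\rho n_\ell$ corrupted entries, each of magnitude at most $2$. The only cosmetic difference is that you phrase the bound entrywise via the averaging formula while the paper phrases it through $\tone{\eb_\ell}\leq 2\rho n_\ell$, which is the same estimate.
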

\begin{proof} Let $\eb=\y-\tilde{\y}$. Observe that by assumption, $\ell$th cluster has at most $s_\ell=\rho n_\ell$ errors. Let $\Ic_\ell$ denote the membership associated with cluster $\ell$ i.e.~$\Ic_\ell\subset\{1,\dots,n\}$ and $i\in \Ic_\ell$ if and only if $\x_i$ belongs to $\ell$th cluster. Let $\onebb(\ell)\in\R^n$ be the indicator function of the $\ell$th class where $i$th entry is $1$ if $i\in\Ic_\ell$ and $0$ else for $1\leq i\leq n$. Then, denoting the size of the $\ell$th cluster by $n_\ell$, the projection to subspace $\Scp$ can be written as the $\Pb$ matrix where
\[
\Pb=\sum_{\ell=1}^K\frac{1}{n_\ell} \onebb(\ell)\onebb(\ell)^T.
\]
Let $\eb_\ell$ be the error pattern associated with $\ell$th cluster i.e.~$\eb_\ell$ is equal to $\eb$ over $\Ic_\ell$ and zero outside. Since cluster membership is non-overlapping, we have that
\[
\Pb\eb=\sum_{\ell=1}^K\frac{1}{n_\ell} \onebb(\ell)\onebb(\ell)^T\eb_\ell.
\]
Similarly since supports of $\onebb(\ell)$ are non-overlapping, we have that
\[
\tin{\Pb\eb}=\max_{1\leq \ell\leq K}\frac{1}{n_\ell} \onebb(\ell)\onebb(\ell)^T\eb_\ell.
\]
Now, using $\tin{\eb}\leq 2$ (max distance between two labels), observe that
\[
\tin{\onebb(\ell)\onebb(\ell)^T\eb_\ell}\leq 2\tin{\onebb(\ell)}\tone{\eb_\ell}=2\tone{\eb_\ell}.
\]
Since number of errors within cluster $\ell$ is at most $n_\ell\rho$, we find that
\[
\tin{\Pb\eb}=\sum_{\ell=1}^K\tin{\frac{1}{n_\ell} \onebb(\ell)\onebb(\ell)^T\eb_\ell}\leq \frac{\tone{\eb_\ell}}{n_\ell}\leq 2\rho.
\]
The final line yields the bound
\[
\tin{\Pc_{\Scp}(\y-\tilde{\y})}=\tin{\Pc_{\Scp}(\eb)}=\tin{\Pb\eb}\leq 2\rho.
\]
\end{proof}
With this, we are ready to state the proof of Theorem \ref{main thm robust}.\\
\begin{proof} The proof is based on the meta Theorem \ref{grad noise}, hence we need to verify its Assumptions \ref{lrank} and \ref{spert} with proper values and apply Lemma \ref{label noise lem} to get $\tin{\Pc_{\Scp}(\eb)}$. We will also make significant use of Corollary \ref{JLcor}.


Using Corollary \ref{JLcor}, Assumption \ref{spert} holds with $L=\Gamma\sqrt{\frac{c_{up} n}{{kK}}}\opnorm{\Cb}$ where $L$ is the Lipschitz constant of Jacobian spectrum. Denote $\rb_\tau=f(\W_{\tau})-\y$. Using Lemma \ref{upresz} with probability $1-K^{-100}$, we have that $\tn{\rb_0}=\tn{\y-f(\W_0)}\le \Gamma{\sqrt{c_0n\log K/128}}$ for some $c_0>0$. Corollary \ref{JLcor} guarantees a uniform bound for $\bp$, hence in Assumption \ref{lrank}, we pick
\[
\bp\leq \sqrt{\frac{c_{up}n}{K}}\Gamma\opnorm{\Cb}.
\]
We shall also pick the minimum singular value over $\Scp$ to be
\[
\bn=\frac{\alpha_0}{2}\quad\text{where}\quad \alpha_0=\sqrt{\frac{c_{low}n\la(\Cb)}{2K}},
\]
We wish to verify Assumption \ref{lrank} over the radius of
\[
R=\frac{4\tn{f(\W_0)-\y}}{\bn}\leq\frac{\Gamma{\sqrt{c_0n\log K/8}}}{\bn}=\Gamma\sqrt{\frac{{{c_0n\log K/2}}}{{\frac{c_{low}n\la(\Cb)}{2K}}}}=\Gamma\sqrt{\frac{c_0K\log K}{c_{low}\la(\Cb)}},
\]
neighborhood of $\W_0$. What remains is ensuring that Jacobian over $\Scp$ is lower bounded by $\bn$. Our choice of $k$ guarantees that at the initialization, with probability $1-K^{-100}$, we have
\[
\smn{\Jc(\W_0,\X),\Scp}\geq \bn_0.
\]
Suppose $LR\leq \bn=\bn_0/2$. Using triangle inequality on Jacobian spectrum, for any $\W\in\Dc$, using $\tf{\W-\W_0}\leq R$, we would have
\[
\smn{\Jc(\W,\X),\Scp}\geq \smn{\Jc(\W_0,\X),\Scp}-LR\geq \bn_0-\bn=\bn.
\]
Now, observe that
\begin{align}
LR=\Gamma\sqrt{\frac{c_{up} n}{{kK}}}\opnorm{\Cb} \Gamma\sqrt{\frac{c_0K\log(K)}{c_{low}\la(\Cb)}}=\Gamma^2\|\Cb\|\sqrt{\frac{c_{up}c_0n\log K}{c_{low}k\la(\Cb)}}\leq \frac{\bn_0}{2}=\sqrt{\frac{c_{low}n\la(\Cb)}{8K}},
\end{align}
as $k$ satisfies
\[
k\geq \order{\Gamma^4\|\Cb\|^2\frac{c_{up}K\log(K)}{c_{low}^2\la(\Cb)^2}}\ge  \order{\frac{\Gamma^4{K\log (K)\opnorm{\Cb}^2}}{\la(\Cb)^2}}.
\]
Finally, since $LR=4L\tn{\rb_0}/\alpha\leq \bn$, the learning rate is
\[
\eta\leq \frac{1}{2\bp^2}\min(1,\frac{\bn\beta}{L\twonorm{\vct{r}_0}})=\frac{1}{2\bp^2}=\frac{K}{2c_{up}n\Gamma^2\opnorm{\Cb}^2}.
\]
Overall, the assumptions of Theorem \ref{grad noise} holds with stated $\bn,\bp,L$ with probability $1-2K^{-100}$ (union bounding initial residual and minimum singular value events). This implies for all $\tau>0$ the distance of current iterate to initial obeys
\[
\tf{\W_\tau-\W_0}\leq R.
\]
 The final step is the properties of the label corruption. Using Lemma \ref{label noise lem}, we find that
\[
\tin{\Pi_{\Scp}(\tilde{\y}-\y)}\leq 2\rho.
\]
Substituting the values corresponding to $\bn,\bp,L$ yields that, for all gradient iterations with 
\[
\frac{5}{\eta\bn^2}\log (\frac{\tn{\rb_0}}{2\rho})\leq\frac{5}{\eta\bn^2}\log (\frac{ \Gamma{\sqrt{c_0n\log K/32}}}{2\rho})=\order{{\frac{K}{\eta n\la(\Cb)}}\log (\frac{\Gamma\sqrt{n\log K}}{\rho}})\leq \tau,
\]
denoting the clean labels by $\tilde{\y}$  and applying Theorem \ref{grad noise}, we have that, the infinity norm of the residual obeys (using $\tin{\Pi_{\Scp}(\eb)}\leq 2\rho$)
\[
\tin{f(\W)-\tilde{\y}}\leq4\rho.
\]
This implies that if $\rho\leq \delta/8$, the network will miss the correct label by at most $\delta/2$, hence all labels (including noisy ones) will be correctly classified.
\end{proof}

\subsubsection{Proof of Theorem \ref{double pert}}
Consider
\begin{align*}
f(\mtx{W},\vct{x})=\vct{v}^T\phi\left(\mtx{W}\vct{x}\right)
\end{align*}
and note that
\begin{align*}
\nabla_{\vct{x}} f(\mtx{W},\vct{x})=\mtx{W}^T\text{diag}\left(\phi'\left(\mtx{W}\vct{x}\right)\right)\vct{v}
\end{align*}
Thus
\begin{align*}
\frac{\partial}{\partial \vct{x}} f(\mtx{W},\vct{x})\vct{u}=&\vct{v}^T\text{diag}\left(\phi'\left(\mtx{W}\vct{x}\right)\right)\mtx{W}\vct{u}\\
=&\sum_{\ell=1}^k \vct{v}_\ell\phi'\left(\langle\vct{w}_\ell,\vct{x}\rangle\right)\vct{w}_\ell^T\vct{u}
\end{align*}
Thus
\begin{align*}
\nabla_{\vct{w}_\ell}\left(\frac{\partial}{\partial \vct{x}} f(\mtx{W},\vct{x})\vct{u}\right)=\vct{v}_\ell\left(\phi''(\vct{w}_\ell^T\vct{x})(\vct{w}_\ell^T\vct{u})\vct{x}+\phi'(\vct{w}_\ell^T\vct{x})\vct{u}\right)
\end{align*}
Thus, denoting vectorization of a matrix by $\text{vect}(\cdot)$
\begin{align*}
\text{vect}(\mtx{U})^T\left(\frac{\partial}{\partial \text{vect}(\mtx{W})} \frac{\partial}{\partial \vct{x}} f(\mtx{W},\vct{x})\right)\vct{u}=&\sum_{\ell=1}^k\vct{v}_\ell\left(\phi''(\vct{w}_\ell^T\vct{x})(\vct{w}_\ell^T\vct{u})(\vct{u}_\ell^T\vct{x})+\phi'(\vct{w}_\ell^T\vct{x})(\vct{u}_\ell^T\vct{u})\right)\\
=&\vct{u}^T\mtx{W}^T\text{diag}\left(\vct{v}\right)\text{diag}\left(\phi''(\mtx{W}\vct{x})\right)\mtx{U}\vct{x}+\vct{v}^T\text{diag}\left(\phi'\left(\mtx{W}\vct{x}\right)\right)\mtx{U}\vct{u}
\end{align*}
Thus by the general mean value theorem there exists a point $(\widetilde{\W},\widetilde{\vct{x}})$ in the square $(\mtx{W}_0,\vct{x}_1), (\mtx{W}_0,\vct{x}_2),(\mtx{W},\vct{x}_1)$ and $(\mtx{W},\vct{x}_2)$ such that
\begin{align*}
&\left(f(\mtx{W},\vct{x}_2)-f(\mtx{W}_0,\vct{x}_2)\right)-\left(f(\mtx{W},\vct{x}_1)-f(\mtx{W}_0,\vct{x}_1)\right)\\
&\quad\quad=(\vct{x}_2-\vct{x}_1)^T\widetilde{\mtx{W}}^T\text{diag}\left(\vct{v}\right)\text{diag}\left(\phi''(\widetilde{\mtx{W}}\widetilde{\vct{x}})\right)(\mtx{W}-\mtx{W}_0)\widetilde{\vct{x}}+\vct{v}^T\text{diag}\left(\phi'\left(\widetilde{\mtx{W}}\widetilde{\vct{x}}\right)\right)(\mtx{W}-\mtx{W}_0)(\vct{x}_2-\vct{x}_1)
\end{align*}
Using the above  we have that
\begin{align}
\label{meanvalineq}
\Big|\left(f(\mtx{W},\vct{x}_2)-f(\mtx{W}_0,\vct{x}_2)\right)&-\left(f(\mtx{W},\vct{x}_1)-f(\mtx{W}_0,\vct{x}_1)\right)\Big|\nonumber\\
\overset{(a)}{\le}&\abs{(\vct{x}_2-\vct{x}_1)^T\widetilde{\mtx{W}}^T\text{diag}\left(\vct{v}\right)\text{diag}\left(\phi''(\widetilde{\mtx{W}}\widetilde{\vct{x}})\right)(\mtx{W}-\mtx{W}_0)\widetilde{\vct{x}}}\nonumber\\
&+\abs{\vct{v}^T\text{diag}\left(\phi'\left(\widetilde{\mtx{W}}\widetilde{\vct{x}}\right)\right)(\mtx{W}-\mtx{W}_0)(\vct{x}_2-\vct{x}_1)}\nonumber\\
\overset{(b)}{\le}&\left(\infnorm{\vct{v}}\twonorm{\widetilde{\vct{x}}}\opnorm{\widetilde{\W}}+\twonorm{\vct{v}}\right)\Gamma\twonorm{\vct{x}_2-\vct{x}_1}\opnorm{\mtx{W}-\mtx{W}_0}\nonumber\\
\overset{(c)}{\le} &\left(\frac{1}{\sqrt{k}}\twonorm{\widetilde{\vct{x}}}\opnorm{\widetilde{\W}}+1\right)\Gamma\twonorm{\vct{x}_2-\vct{x}_1}\opnorm{\mtx{W}-\mtx{W}_0}\nonumber\\
\overset{(d)}{\le}&\left(\frac{1}{\sqrt{k}}\opnorm{\widetilde{\W}}+1\right)\Gamma\twonorm{\vct{x}_2-\vct{x}_1}\opnorm{\mtx{W}-\mtx{W}_0}\nonumber\\
\overset{(e)}{\le} &\left(\frac{1}{\sqrt{k}}\opnorm{\mtx{W}_0}+\frac{1}{\sqrt{k}}\opnorm{\widetilde{\W}-\mtx{W}_0}+1\right)\Gamma\twonorm{\vct{x}_2-\vct{x}_1}\opnorm{\mtx{W}-\mtx{W}_0}\nonumber\\
\overset{(f)}{\le} &\left(\frac{1}{\sqrt{k}}\opnorm{\mtx{W}_0}+\frac{1}{\sqrt{k}}\fronorm{\widetilde{\W}-\mtx{W}_0}+1\right)\Gamma\twonorm{\vct{x}_2-\vct{x}_1}\opnorm{\mtx{W}-\mtx{W}_0}\nonumber\\
\overset{(g)}{\le} &\left(\frac{1}{\sqrt{k}}\fronorm{\widetilde{\W}-\mtx{W}_0}+3+2\sqrt{\frac{d}{k}}\right)\Gamma\twonorm{\vct{x}_2-\vct{x}_1}\opnorm{\mtx{W}-\mtx{W}_0}\nonumber\\
\overset{(h)}{\le} &C\Gamma\twonorm{\vct{x}_2-\vct{x}_1}\opnorm{\mtx{W}-\mtx{W}_0}
\end{align}
Here, (a) follows from the triangle inequality, (b) from simple algebraic manipulations along with the fact that $\abs{\phi'(z)}\le \Gamma$ and $\abs{\phi''(z)}\le \Gamma$, (c) from the fact that $\vct{v}_\ell=\pm \frac{1}{\sqrt{k}}$, (d) from $\twonorm{\vct{x}_2}=\twonorm{\vct{x}_1}=1$ which implies $\twonorm{\widetilde{\vct{x}}}\le 1$, (e) from triangular inequality, (f) from the fact that Frobenius norm dominates the spectral norm, (g) from the fact that with probability at least $1-2e^{-(d+k)}$, $\opnorm{\mtx{W}_0}\le 2(\sqrt{k}+\sqrt{d})$, and (h) from the fact that $\opnorm{\widetilde{\W}-\W_0}\le \fronorm{\W-\W_0}\le \widetilde{c}\sqrt{k}$ and $k\ge cd$.

Next we note that for a Gaussian random vector $\vct{g}\sim\mathcal{N}(\vct{0},\mtx{I}_d)$ we have
\begin{align}
\label{subgauss}
\|\phi(\vct{g}^T\vct{x}_2)-\phi(\vct{g}^T\vct{x}_1)\|_{\psi_2}=&\|\phi(\vct{g}^T\vct{x}_2)-\phi(\vct{g}^T\vct{x}_1)\|_{\psi_2}\nonumber\\
=&\|\phi'\left(t\vct{g}^T\vct{x}_2+(1-t)\vct{g}^T\vct{x}_1\right)\vct{g}^T(\vct{x}_2-\vct{x}_1)\|_{\psi_2}\nonumber\\
\le&\Gamma\|\vct{g}^T(\vct{x}_2-\vct{x}_1)\|_{\psi_2}\nonumber\\
\le& c\Gamma\twonorm{\vct{x}_2-\vct{x}_1}.
\end{align}
Also note that
\begin{align*}
f(\mtx{W}_0,\vct{x}_2)-f(\mtx{W}_0,\vct{x}_1)=&\vct{v}^T\left(\phi\left(\mtx{W}_0\vct{x}_2\right)-\phi\left(\mtx{W}_0\vct{x}_1\right)\right)\\
\sim&\sum_{\ell=1}^k \vct{v}_\ell \left(\phi(\vct{g}_\ell^T\vct{x}_2)-\phi(\vct{g}_\ell^T\vct{x}_1)\right)
\end{align*}
where $\vct{g}_1,\vct{g}_2,\ldots,\vct{g}_k$ are i.i.d.~vectors with $\Nn(0,\Iden_d)$ distribution. Also for $\vct{v}$ obeying $\vct{1}^T\vct{v}=0$ this random variable has mean zero. Hence, using the fact that weighted sum of subGaussian random variables are subgaussian combined with \eqref{subgauss} we conclude that $f(\mtx{W}_0,\vct{x}_2)-f(\mtx{W}_0,\vct{x}_1)$ is also subGaussian obeying $\|f(\mtx{W}_0,\vct{x}_2)-f(\mtx{W}_0,\vct{x}_1)\|_{\psi_2}\le c\Gamma\twonorm{\vct{v}}\twonorm{\vct{x}_2-\vct{x}_1} $. Thus
\begin{align}
\label{w0bnd}
\abs{f(\mtx{W}_0,\vct{x}_2)-f(\mtx{W}_0,\vct{x}_1)}\le c t\Gamma\twonorm{\vct{v}}\twonorm{\vct{x}_2-\vct{x}_1}=c t\Gamma\twonorm{\vct{x}_2-\vct{x}_1},
\end{align}
with probability at least $1-e^{-\frac{t^2}{2}}$.

Now combining \eqref{meanvalineq} and \eqref{w0bnd} we have
\begin{align*}
\delta\le&\abs{y_2-y_2}\\
=&\abs{f(\mtx{W},\vct{x}_1)-f(\mtx{W},\vct{x}_2)}\\
=&\abs{\vct{v}^T\left(\phi(\mtx{W}\vct{x}_2)-\phi(\mtx{W}\vct{x}_1)\right)}\\
\le&\abs{\left(f(\mtx{W},\vct{x}_2)-f(\mtx{W}_0,\vct{x}_2)\right)-\left(f(\mtx{W},\vct{x}_1)-f(\mtx{W}_0,\vct{x}_1)\right)}+\abs{\vct{v}^T\left(\phi(\mtx{W}_0\vct{x}_2)-\phi(\mtx{W}_0\vct{x}_1)\right)}\\
\le&C\Gamma\twonorm{\vct{x}_2-\vct{x}_1}\opnorm{\mtx{W}-\mtx{W}_0}+c t\Gamma\twonorm{\vct{x}_2-\vct{x}_1}\\
\le&C\Gamma\eps_0\left(\opnorm{\mtx{W}-\W_0}+\frac{1}{1000}t\right)
\end{align*}
Thus
\begin{align*}
\opnorm{\W-\W_0}\ge \frac{\delta}{C\Gamma\eps_0}-\frac{t}{1000},
\end{align*}
with high probability.
\subsection{Perturbation analysis for perfectly clustered data (Proof of Theorem \ref{main thm robust2})}\label{sec perturb me}
Denote average neural net Jacobian at data $\X$ via
\[
\Jc(\W_1,\W_2,\X)=\int_{0}^1\Jc(\alpha\W_1+(1-\alpha)\W_2,\X)d\alpha.
\]
\begin{lemma} [Perturbed Jacobian Distance] \label{pert dist lem}Let $\X=[\x_1~\dots~\x_n]^T$ be the input matrix obtained from Definition \ref{cdata}. Let $\tilde{\X}$ be the noiseless inputs where $\tilde{\x}_i$ is the cluster center corresponding to $\x_i$. Given weight matrices $\W_1,\W_2,\tilde{\W}_1,\tilde{\W}_2$, we have that
\[
\|\Jc(\W_1,\W_2,\X)-\Jc(\tilde{\W}_1,\tilde{\W}_2,\tilde{\X})\|\leq \Gamma\sqrt{n}(\frac{\tf{\tilde{\W}_1-\W_1}+\tf{\tilde{\W}_2-\W_2}}{2\sqrt{k}}+\eps_0).
\]
\end{lemma}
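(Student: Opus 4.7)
The plan is to introduce an intermediate term that perturbs only the weights (holding the inputs fixed at the clean $\tilde{\X}$) and then only the inputs (holding the convex-combination weights fixed). Writing
$$\Jc(\W_1,\W_2,\X)-\Jc(\tilde{\W}_1,\tilde{\W}_2,\tilde{\X}) = \bigl[\Jc(\W_1,\W_2,\X)-\Jc(\W_1,\W_2,\tilde{\X})\bigr] + \bigl[\Jc(\W_1,\W_2,\tilde{\X})-\Jc(\tilde{\W}_1,\tilde{\W}_2,\tilde{\X})\bigr]$$
and applying the triangle inequality decouples the two perturbations. The weight piece can be handled by recycling the existing smoothness estimate from Theorem \ref{JLlem}; the input piece is the new ingredient.

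For the weight piece I would unfold the averaged Jacobian as an integral in $\alpha\in[0,1]$, pass the spectral norm inside, and apply Theorem \ref{JLlem}'s smoothness bound $\|\Jc(\W,\tilde{\X})-\Jc(\W',\tilde{\X})\|\leq (\Gamma/\sqrt{k})\|\tilde{\X}\|\tf{\W-\W'}$ pointwise with $\W=\alpha\W_1+(1-\alpha)\W_2$ and $\W'=\alpha\tilde{\W}_1+(1-\alpha)\tilde{\W}_2$. Combining with $\|\tilde{\X}\|\leq\sqrt{n}$ (each row has unit norm) and $\int_0^1[\alpha\tf{\W_1-\tilde{\W}_1}+(1-\alpha)\tf{\W_2-\tilde{\W}_2}]\,d\alpha=\tfrac{1}{2}[\tf{\W_1-\tilde{\W}_1}+\tf{\W_2-\tilde{\W}_2}]$ exactly reproduces the first term on the right-hand side of the claim.

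For the input piece I plan to bound the operator norm by the Frobenius norm and work one row at a time. The $i$th row of $\Jc(\W,\X)$, reshaped to $k\times d$, equals $\text{diag}(\vb)\phi'(\W\x_i)\x_i^T$. Adding and subtracting $\text{diag}(\vb)\phi'(\W\x_i)\tilde{\x}_i^T$ splits the row difference into
$$\text{diag}(\vb)\phi'(\W\x_i)(\x_i-\tilde{\x}_i)^T + \text{diag}(\vb)[\phi'(\W\x_i)-\phi'(\W\tilde{\x}_i)]\tilde{\x}_i^T.$$
The first summand has Frobenius norm at most $\Gamma\eps_0$ from $|\phi'|\leq\Gamma$, $\tn{\vb}=1$, and $\tn{\x_i-\tilde{\x}_i}\leq\eps_0$. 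The second summand uses the entrywise Lipschitz estimate $|\phi'(\w_j^T\x_i)-\phi'(\w_j^T\tilde{\x}_i)|\leq\Gamma|\w_j^T(\x_i-\tilde{\x}_i)|$ together with $|\vb_j|=1/\sqrt{k}$ to convert the $\ell_2$ mass on $\R^k$ into a $1/\sqrt{k}$ factor. Summing squared row norms over $i=1,\ldots,n$ (so one picks up $\sqrt{n}$) and then integrating against $\alpha$ yields the $\Gamma\sqrt{n}\eps_0$ term in the lemma.

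The main obstacle is the second summand of the input piece: the naive estimate produces a factor of $\|\W\|/\sqrt{k}$ beyond the desired $\Gamma\sqrt{n}\eps_0$. Closing the gap requires exploiting the balanced weighting $|\vb_j|=1/\sqrt{k}$ sharply to cancel the $\|\W\|$ factor, using the control $\|\W\|\lesssim\sqrt{k}$ that holds at random Gaussian initialization and propagates through the iterates via the distance-to-initialization bound established elsewhere in the paper. All remaining steps are standard bookkeeping: triangle inequality, passing norms through the $\alpha$-integral, and the elementary identity $\|\cdot\|_{\mathrm{op}}\leq\|\cdot\|_F$.
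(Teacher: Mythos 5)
Your decomposition and your treatment of the weight-perturbation piece are fine and essentially match the paper (the paper bounds that piece at the noisy inputs via Schur's Hadamard-product inequality, you do it at the clean inputs via the smoothness bound of Theorem \ref{JLlem}; either way one gets $\Gamma\sqrt{n}\,\tf{\tilde{\W}_1-\W_1}/(2\sqrt{k})+\Gamma\sqrt{n}\,\tf{\tilde{\W}_2-\W_2}/(2\sqrt{k})$ after the $\alpha$-integral). The genuine gap is the second summand of your input piece: you never close it, and it cannot be closed under the lemma's hypotheses. The term
\[
\bigl(\vb\odot\left[\phi'(\W_\alpha\x_i)-\phi'(\W_\alpha\tilde{\x}_i)\right]\bigr)\tilde{\x}_i^T
\]
is bounded, as you note, by $\frac{\Gamma}{\sqrt{k}}\tn{\W_\alpha(\x_i-\tilde{\x}_i)}\leq \frac{\Gamma\opnorm{\W_\alpha}}{\sqrt{k}}\eps_0$, and the lemma is stated for \emph{arbitrary} weight matrices, so there is no a priori control $\opnorm{\W_\alpha}\lesssim\sqrt{k}$ to invoke. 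Your suggested fix is also misattributed: the balanced structure $|\vb_j|=1/\sqrt{k}$ is already fully spent in producing the $1/\sqrt{k}$; what you actually need is the extra hypothesis $\opnorm{\W}\lesssim\sqrt{k}$, which holds with high probability near the Gaussian initialization but is not part of the lemma's statement (and importing it would make the conclusion probabilistic and conditional on the distance-to-initialization bounds proved elsewhere). Indeed, if $\opnorm{\W}\eps_0\gg 1$ the $\phi'$ factors can change by order $\Gamma$, and the Jacobian difference per sample can be of order $\Gamma$ rather than $\Gamma\eps_0$, so no argument can remove this term without some bound on the weights.

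For comparison, the paper's proof does not confront this term at all: after splitting off the weight perturbation, it writes the input-perturbation piece as $\frac{1}{\sqrt{k}}\|\phi'(\tilde{\W}\X^T)*(\X-\tilde{\X})^T\|$, i.e.\ it varies only the second factor of the Khatri--Rao product and tacitly treats $\phi'(\tilde{\W}\X^T)$ and $\phi'(\tilde{\W}\tilde{\X}^T)$ as identical; the cross term
$\frac{1}{\sqrt{k}}\|(\phi'(\tilde{\W}\X^T)-\phi'(\tilde{\W}\tilde{\X}^T))*\tilde{\X}^T\|\leq \frac{\Gamma\opnorm{\tilde{\W}}\sqrt{n}\eps_0}{\sqrt{k}}$
is exactly the quantity you got stuck on, and it is simply absent from the paper's display. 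So your more careful row-wise accounting has surfaced a real looseness in the stated lemma rather than a defect peculiar to your route; in the regime where the lemma is actually applied (iterates within $O(\sqrt{K\log K/\la(\Cb)})$ of a Gaussian $\W_0$, hence $\opnorm{\W}\lesssim\sqrt{k}+\sqrt{d}\lesssim\sqrt{k}$), your patch recovers the advertised bound up to a constant. But as a proof of the lemma as stated, your proposal is incomplete: the decisive estimate is deferred to facts outside the lemma's hypotheses.
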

\begin{proof} Given $\W,\tilde{\W}$, we write
\[
\|\Jc(\W,\X)-\Jc(\tilde{\W},\tilde{\X})\|\leq \|\Jc(\W,\X)-\Jc(\tilde{\W},{\X})\|+\|\Jc(\tilde{\W},\X)-\Jc(\tilde{\W},\tilde{\X})\|.
\]
We first bound
\begin{align}
\|\Jc(\W,\X)-\Jc(\tilde{\W},{\X})\|&=\|\text{diag}(\vb)\phi'(\W\X^T)*\X^T-\text{diag}(\vb)\phi'(\tilde{\W}\X^T)*\X^T\|\\
&=\frac{1}{\sqrt{k}}\|(\phi'(\W\X^T)-\phi'(\tilde{\W}\X^T))*\X^T\|
\end{align}
To proceed, we use the results on the spectrum of Hadamard product of matrices due to Schur \cite{Schur1911}. Given $\A\in\R^{k\times d},\B\in \R^{n\times d}$ matrices where $\B$ has unit length rows, we have
\[
\|\A*\B\|=\sqrt{\|(\A*\B)^T(\A*\B)\|}=\sqrt{\|(\A^T\A)\odot(\B^T\B)\|}\leq \sqrt{\|\A^T\A\|}=\|\A\|.
\]
Substituting $\A=\phi'(\W\X^T)-\phi'(\tilde{\W}\X^T)$ and $\B=\X^T$, we find
\[
\|(\phi'(\W\X^T)-\phi'(\tilde{\W}\X^T))*\X^T\|\leq \|\phi'(\W\X^T)-\phi'(\tilde{\W}\X^T)\|\leq \Gamma \tf{(\tilde{\W}-\W)\X^T}\leq \Gamma\sqrt{n}\tf{\tilde{\W}-\W}.
\]
Secondly,
\[
\|\Jc(\tilde{\W},\X)-\Jc(\tilde{\W},\tilde{\X})\|=\frac{1}{\sqrt{k}}\|\phi'(\tilde{\W}\X^T)*(\X-\tilde{\X})\|
\]
where reusing Schur's result and boundedness of $|\phi'|\leq \Gamma$
\[
\|\phi'(\tilde{\W}\X^T)*(\X-\tilde{\X})\|\leq \Gamma\sqrt{k}\|\X-\tilde{\X}\|\leq \Gamma \sqrt{kn}\eps_0. 
\]
Combining both estimates yields
\[
\|\Jc(\W,\X)-\Jc(\tilde{\W},\tilde{\X})\|\leq \Gamma\sqrt{n}(\frac{\tf{\tilde{\W}-\W}}{\sqrt{k}}+\eps_0).
\]
To get the result on $\|\Jc(\W_1,\W_2,\X)-\Jc(\tilde{\W}_1,\tilde{\W}_2,\tilde{\X})\|$, we integrate
\begin{align}
\|\Jc(\W_1,\W_2,\X)-\Jc(\tilde{\W}_1,\tilde{\W}_2,\tilde{\X})\|&\leq \int_0^1\Gamma\sqrt{n}(\frac{\tf{\alpha(\tilde{\W}_1-\W_1)+(1-\alpha)(\tilde{\W}_1-\W_1)}}{\sqrt{k}}+\eps_0)d\alpha\\
&\leq \Gamma\sqrt{n}(\frac{\tf{\tilde{\W}_1-\W_1}+\tf{\tilde{\W}_2-\W_2}}{2\sqrt{k}}+\eps_0).
\end{align}
\end{proof}
\begin{theorem} [Robustness of gradient path to perturbation] \label{robust path}Generate samples $(\x_i,y_i)_{i=1}^n$ according to $(\rho,\eps_0,\delta)$ noisy dataset model and form the concatenated input/labels $\X\in\R^{d\times n},\y\in\R^n$. Let $\tilde{\X}$ be the clean input sample matrix obtained by mapping $\x_i$ to its associated cluster center. Set learning rate $\eta\leq \frac{K}{2c_{up}n\Gamma^2\opnorm{\Cb}^2}$ and maximum iterations $\tau_0$ satisfying
\[
\eta\tau_0=C_1\frac{K}{ n\la(\Cb)}\log(\frac{\Gamma\sqrt{n\log K}}{\rho}).
\]
where $C_1\geq 1$ is a constant of our choice. Suppose input noise level $\eps_0$ and number of hidden nodes obey
\[
\eps_0\leq \order{\frac{\la(\Cb)}{\Gamma^2K\log(\frac{\Gamma\sqrt{n\log K}}{\rho})}}\quad\text{and}\quad k\geq \order{\Gamma^{10}{\frac{K^2\|\Cb\|^4}{\la(\Cb)^4}}\log(\frac{\Gamma\sqrt{n\log K}}{\rho})^6}.
\]
Set $\W_0\distas\Nn(0,1)$. Starting from $\W_0=\tilde{\W}_0$ consider the gradient descent iterations over the losses
\begin{align}
&\W_{\tau+1}=\W_{\tau}-\eta\nabla \Lc(\W_{\tau})\quad\text{where}\quad \Lc(\W)=\frac{1}{2}\sum_{i=1}^n (y_i-f(\W,\tilde{\x}_i))^2\\
&\tilde{\W}_{\tau+1}=\tilde{\W}_{\tau}-\nabla\tilde{ \Lc}(\tilde{\W}_{\tau})\quad\text{where}\quad \tilde{\Lc}(\tilde{\W})=\frac{1}{2}\sum_{i=1}^n (y_i-f(\tilde{\W},\tilde{\x}_i))^2
\end{align}
Then, for all gradient descent iterations satisfying $\tau\leq \tau_0$, we have that
\[
\tn{f(\W_{\tau},\X)-f(\tilde{\W}_{\tau},\tilde{\X})}\leq c_0\tau\eta\eps_0 \Gamma^3n^{3/2}\sqrt{\log K},
\]
and
\[
\tf{\W_\tau-\tilde{\W}_\tau} \leq \order{\tau\eta\eps_0 \frac{\Gamma^4Kn}{\la(\Cb)}\log(\frac{\Gamma\sqrt{n\log K}}{\rho})^2}.
\]
\end{theorem}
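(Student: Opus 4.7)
The plan is to couple the two trajectories by tracking the weight discrepancy $\Delta_\tau := \W_\tau - \tilde{\W}_\tau$ and the prediction discrepancy $\vct{D}_\tau := f(\W_\tau,\X) - f(\tilde{\W}_\tau,\tilde{\X})$ through a joint induction on $\tau$. Since $\W_0 = \tilde{\W}_0$ we have $\Delta_0 = 0$, while $\vct{D}_0$ is already of size $\order{\Gamma\sqrt{n}\,\eps_0}$ because it reflects only the input perturbation $\X-\tilde{\X}$. The target inequalities scale linearly in $\tau$ with slope $\order{\eta\eps_0}$, so it suffices to establish a one-step recursion of the form $\tf{\Delta_{\tau+1}} \leq \tf{\Delta_\tau} + C\eta\eps_0$ and $\tn{\vct{D}_{\tau+1}} \leq \tn{\vct{D}_\tau} + C'\eta\eps_0$ for appropriate $C,C'$, and then telescope.

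The single-step calculation begins from $\Delta_{\tau+1}-\Delta_\tau = -\eta\bigl[\Jc(\W_\tau,\X)^T\vct{r}_\tau - \Jc(\tilde{\W}_\tau,\tilde{\X})^T\tilde{\vct{r}}_\tau\bigr]$, where $\vct{r}_\tau$ and $\tilde{\vct{r}}_\tau$ are the two residuals. Adding and subtracting $\Jc(\tilde{\W}_\tau,\tilde{\X})^T\vct{r}_\tau$ splits the right-hand side into a \emph{Jacobian-perturbation} term, bounded by Lemma~\ref{pert dist lem} as $\Gamma\sqrt{n}(\tf{\Delta_\tau}/\sqrt{k}+\eps_0)$ multiplied by $\tn{\vct{r}_\tau}$, and a \emph{residual-transport} term $\Jc(\tilde{\W}_\tau,\tilde{\X})^T\vct{D}_\tau$, controlled by the spectral estimate $\|\Jc(\tilde{\W}_\tau,\tilde{\X})\| \lesssim \sqrt{n/K}\,\Gamma\|\Cb\|$ of Corollary~\ref{JLcor}. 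An analogous decomposition using the average-Jacobian representation of Definition~\ref{avg jacob} yields the recursion for $\vct{D}_{\tau+1}$. The residual magnitudes $\tn{\vct{r}_\tau}$ stay $\order{\Gamma\sqrt{n\log K}}$ throughout $\tau\leq\tau_0$ by combining the initial-residual bound of Lemma~\ref{upresz} with the monotone contraction established inside Theorem~\ref{main thm robust}.

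Feeding the inductive hypothesis back in, each step picks up an $\eps_0$-driven forcing increment of order $\eta\cdot\sqrt{n/K}\,\Gamma\|\Cb\|\cdot\Gamma\sqrt{n\log K}\cdot\eps_0$, a self-correction $\eta\|\Jc\|\cdot\tn{\vct{D}_\tau}$, and a cross term $\eta\|\Jc\|\tn{\vct{r}_\tau}\tf{\Delta_\tau}/\sqrt{k}$. The overparameterization $k\gtrsim \Gamma^{10}K^2\|\Cb\|^4/\la(\Cb)^4\cdot\log(\cdot)^6$ ensures the cross term is dominated by the $\eps_0$ forcing for every $\tau\leq\tau_0$, while the self-correction is absorbed by a discrete Gronwall / compounding-factor argument that only inflates the constant multiplicatively because $\eta\tau_0\|\Jc\|^2$ is logarithmically bounded. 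Balancing these contributions explains the explicit factors $\Gamma^3 n^{3/2}\sqrt{\log K}$ and $\Gamma^4 Kn\log(\cdot)^2/\la(\Cb)$ in the advertised bounds.

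The main obstacle will be guaranteeing that Corollary~\ref{JLcor} and Lemma~\ref{pert dist lem} apply uniformly along the trajectory: both $\tilde{\W}_\tau$ and $\W_\tau$ must remain inside the Euclidean ball of radius $R=\order{\Gamma\sqrt{K\log K/\la(\Cb)}}$ around $\W_0$ where the bimodal Jacobian bounds of Corollary~\ref{JLcor} hold. Theorem~\ref{main thm robust} secures this for $\tilde{\W}_\tau$, and the inductive bound on $\tf{\Delta_\tau}$ transfers it to $\W_\tau$, but only when the smallness hypothesis $\eps_0 \leq \order{\la(\Cb)/(\Gamma^2 K\log(\cdot))}$ and the iteration cap $\eta\tau_0 \lesssim K/(n\la(\Cb))\log(\cdot)$ are both in force so that the trajectory discrepancy does not overwhelm the $\tilde{\W}_\tau$ localization. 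Carefully verifying these compatibility constraints and tracking the logarithmic dependence on $\rho$ throughout the induction is the most delicate bookkeeping in the argument.
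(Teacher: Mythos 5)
Your overall coupling scheme (track $\tf{\W_\tau-\tilde{\W}_\tau}$ and the prediction/residual discrepancy jointly by induction, feed in Lemma \ref{pert dist lem} for the Jacobian perturbation, the spectral/smoothness bounds of Theorem \ref{JLcor}, Lemma \ref{upresz} for the initial residual, and Theorem \ref{main thm robust} to localize the clean trajectory) is exactly the paper's scheme. However, there is a genuine gap in how you handle the ``self-correction'' term, and it is the crux of the whole argument. When you expand the one-step update of the residual discrepancy $\vct{D}_\tau=\rb_\tau-\tilde{\rb}_\tau$ via the average Jacobian, the transport term is $\bigl(\Iden-\eta\,\tilde{\Jc}_{\tau+1,\tau}\tilde{\Jc}_\tau^T\bigr)\vct{D}_\tau$. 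You propose to bound its effect by a discrete Gronwall argument, claiming the compounding is harmless ``because $\eta\tau_0\|\Jc\|^2$ is logarithmically bounded.'' That claim is false in general: with $\eta\tau_0= C_1\frac{K}{n\la(\Cb)}\log(\cdot)$ and $\|\Jc\|^2\leq c_{up}\frac{n}{K}\Gamma^2\opnorm{\Cb}^2$, one gets $\eta\tau_0\|\Jc\|^2=\order{\Gamma^2\opnorm{\Cb}^2\la(\Cb)^{-1}\log(\cdot)}$, which scales with the cluster condition number $\opnorm{\Cb}^2/\la(\Cb)$ (potentially polynomial in $K$), not with a logarithm. A Gronwall bound therefore inflates every $\eps_0$-forcing increment by a factor $\exp\bigl(\order{\Gamma^2\opnorm{\Cb}^2\log(\cdot)/\la(\Cb)}\bigr)$, which is incompatible with the advertised bounds, both of which are linear in $\tau$ with only polynomial prefactors.

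The paper's proof avoids any compounding by proving the transport operator is \emph{non-expansive}: using $\eta\leq 1/(2\beta^2)$ together with the Jacobian smoothness bound along the path, the asymmetric PSD perturbation argument (Lemma \ref{asym pert}, leading to \eqref{contractive C}) gives $\tilde{\Jc}_{\tau+1,\tau}\tilde{\Jc}_\tau^T\succeq \tfrac12 \tilde{\Jc}_\tau\tilde{\Jc}_\tau^T\succeq 0$, hence $\tn{(\Iden-\eta\tilde{\Jc}_{\tau+1,\tau}\tilde{\Jc}_\tau^T)\vct{D}_\tau}\leq\tn{\vct{D}_\tau}$ with no $(1+\eta\beta^2)$ factor. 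The discrepancy recursion \eqref{pp rec} is then purely additive in the $\eps_0$-driven terms, and the induction $p_t\lesssim t\eta\Gamma\sqrt{n}\eps_0\Theta\beta$, $d_t\lesssim t\eta\Gamma\sqrt{n}\eps_0\Theta(1+\eta\tau_0\beta^2)$ closes under the stated conditions on $\eps_0$ and $k$ (note also that the per-step growth of $d_t$ is \emph{not} a constant $C\eta\eps_0$ as you wrote: it inherits the factor $\eta\tau_0\beta^2$ through the $\eta\beta p_\tau$ term, which is where the extra $\Gamma^4 Kn\la(\Cb)^{-1}\log(\cdot)^2$ in the weight bound comes from). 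To repair your proposal you must replace the Gronwall step with this non-expansiveness argument; without it the conclusion of the theorem does not follow.
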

\begin{proof} Since $\tilde{\W}_\tau$ are the noiseless iterations, with probability $1-2K^{-100}$, the statements of Theorem \ref{main thm robust} hold on $\tilde{\W}_\tau$. 
To proceed with proof, we first introduce short hand notations. We use
\begin{align}
&\rb_i=f(\W_i,\X)-\y,~\tilde{\rb}_i=f(\tilde{\W}_i,\tilde{\X}_i)-\y\\
&\Jc_i=\Jc(\W_i,\X),~\Jc_{i+1,i}=\Jc(\W_{i+1},\W_i,\X),~\tilde{\Jc}_i=\Jc(\tilde{\W}_i,\tilde{\X}),~\tilde{\Jc}_{i+1,i}=\Jc(\tilde{\W}_{i+1},\tilde{\W}_i,\tilde{\X})\\
&d_i=\tf{\W_i-\tilde{\W}_i},~p_i=\tf{\rb_i-\tilde{\rb}_i},~\beta=\Gamma\|\Cb\|\sqrt{c_{up}n/K},~L=\Gamma\|\Cb\|\sqrt{c_{up}n/Kk}.
\end{align}
Here $\beta$ is the upper bound on the Jacobian spectrum and $L$ is the spectral norm Lipschitz constant as in Theorem \ref{JLcor}. Applying Lemma \ref{pert dist lem}, note that
\begin{align}
&\|\Jc(\W_\tau,\X)-\Jc(\tilde{\W}_\tau,\tilde{\X})\|\leq L{\tf{\tilde{\W}-\W}}+\Gamma\sqrt{n}\eps_0\leq Ld_\tau+\Gamma\sqrt{n}\eps_0\\
&\|\Jc(\W_{\tau+1},\W_\tau,\X)-\Jc(\tilde{\W}_{\tau+1},\tilde{\W}_\tau,\tilde{\X})\|\leq L(d_\tau+d_{\tau+1})/2+\Gamma\sqrt{n}\eps_0.
\end{align}
Following this and using that noiseless residual is non-increasing and satisfies $\tn{\tilde{\rb}_\tau}\leq \tn{\tilde{\rb}_0}$, note that parameter satisfies
\begin{align}
&\W_{i+1}=\W_i-\eta \Jc_i\rb_i\quad,\quad \tilde{\W}_{i+1}=\tilde{\W}_i-\eta \tilde{\Jc}_i^T\tilde{\rb}_i\\
&\tf{\W_{i+1}-\tilde{\W}_{i+1}}\leq \tf{\W_i-\tilde{\W}_i}+\eta \|\Jc_i-\tilde{\Jc}_i\|\tn{\tilde{\rb}_i}+\eta \|{\Jc}_i\|\tn{\rb_i-\tilde{\rb}_i}\\
&d_{i+1}\leq d_i+\eta ((Ld_i+\Gamma\sqrt{n}\eps_0)\tn{\tilde{\rb}_0}+\beta p_i),\label{dd rec}
\end{align}
and residual satisfies (using $\Iden\succeq\tilde{\Jc}_{i+1,i}\tilde{\Jc}_i^T/\beta^2\succeq0$)
\begin{align}
\rb_{i+1}&=\rb_i-\eta \Jc_{i+1,i}\Jc_i^T\rb_i\implies\\
\rb_{i+1}-\rt_{i+1}&=(\rb_i-\rt_i)-\eta( \Jc_{i+1,i}-\tilde{\Jc}_{i+1,i})\Jc_i^T\rb_i-\eta\tilde{\Jc}_{i+1,i}(\Jc_i^T-\tilde{\Jc}_i^T)\rb_i-\eta\tilde{\Jc}_{i+1,i}\tilde{\Jc}_i^T(\rb_i-\tilde{\rb}_i).\\
\rb_{i+1}-\rt_{i+1}&=(\Iden-\eta\tilde{\Jc}_{i+1,i}\tilde{\Jc}_i^T)(\rb_i-\rt_i)-\eta( \Jc_{i+1,i}-\tilde{\Jc}_{i+1,i})\Jc_i^T\rb_i-\eta\tilde{\Jc}_{i+1,i}(\Jc_i^T-\tilde{\Jc}_i^T)\rb_i.\\
\tn{\rb_{i+1}-\rt_{i+1}}&\leq \tn{\rb_i-\rt_i}+\eta\beta\tn{\rb_i}( L(3d_\tau+d_{\tau+1})/2+2\Gamma\sqrt{n}\eps_0).\\
\tn{\rb_{i+1}-\rt_{i+1}}&\leq \tn{\rb_i-\rt_i}+\eta\beta(\tn{\tilde{\rb}_0}+p_i)( L(3d_\tau+d_{\tau+1})/2+2\Gamma\sqrt{n}\eps_0).\label{pp rec}
\end{align}
where we used $\tn{\rb_i}\leq p_i+\tn{\tilde{\rb}_0}$ and $\tn{(\Iden-\eta\tilde{\Jc}_{i+1,i}\tilde{\Jc}_i^T)\vb}\leq \tn{\vb}$ which follows from \eqref{contractive C}. This implies
\begin{align}
p_{i+1}\leq p_i+\eta\beta(\tn{\tilde{\rb}_0}+p_i)( L(3d_\tau+d_{\tau+1})/2+2\Gamma\sqrt{n}\eps_0).\label{perturbme}
\end{align}
\noindent {\bf{Finalizing proof:}} Next, using Lemma \ref{upresz}, we have $\tn{\tilde{\rb}_0}\leq \Theta:=C_0\Gamma \sqrt{n\log K}$. We claim that if 
\begin{align}
\boxed{\eps_0\leq  \order{\frac{1}{\tau_0\eta\Gamma^2n}}\leq\frac{1}{8\tau_0\eta\beta\Gamma\sqrt{n}}\quad\text{and}\quad L\leq\frac{2}{5\tau_0\eta\Theta(1+8\eta\tau_0\beta^2)}\leq\frac{1}{30(\tau_0\eta\beta)^2\Theta},}
\end{align}
(where we used $\eta\tau_0\beta^2\geq 1$), for all $t\leq \tau_0$, we have that
\begin{align}
p_t\leq 8t\eta\Gamma\sqrt{n}\eps_0\Theta\beta \leq \Theta\quad,\quad d_t\leq 2t\eta \Gamma\sqrt{n}\eps_0\Theta(1+8\eta\tau_0\beta^2).\label{induct induct}
\end{align}
The proof is by induction. Suppose it holds until $t\leq \tau_0-1$. At $t+1$, via \eqref{dd rec} we have that
\[
\frac{d_{t+1}-d_t}{\eta}\leq Ld_t\Theta+\Gamma\sqrt{n}\eps_0\Theta+ 8\tau_0\eta\beta^2\Gamma\sqrt{n}\eps_0\Theta\overset{?}{\leq}2 \Gamma\sqrt{n}\eps_0\Theta(1+8\eta\tau_0\beta^2).
\]
Right hand side holds since $L\leq \frac{1}{2\eta\tau_0\Theta}$. This establishes the induction for $d_{t+1}$.

Next, we show the induction on $p_t$. Observe that $3d_t+d_{t+1}\leq 10\tau_0\eta \Gamma\sqrt{n}\eps_0\Theta(1+8\eta\tau_0\beta^2) $. Following \eqref{perturbme} and using $p_t\leq \Theta$, we need
\begin{align}
\frac{p_{t+1}-p_t}{\eta}\leq \beta\Theta( L(3d_\tau+d_{\tau+1})+4\Gamma\sqrt{n}\eps_0)& \overset{?}{\leq}8\Gamma\sqrt{n}\eps_0\Theta\beta\iff\\
L(3d_\tau+d_{\tau+1})+4\Gamma\sqrt{n}\eps_0&\overset{?}{\leq}8\Gamma\sqrt{n}\eps_0\iff\\
L(3d_\tau+d_{\tau+1})&\overset{?}{\leq}4\Gamma\sqrt{n}\eps_0\iff\\
10L\tau_0\eta(1+8\eta\tau_0\beta^2)\Theta&\overset{?}{\leq}4\iff\\
L&\overset{?}{\leq}\frac{2}{5\tau_0\eta(1+8\eta\tau_0\beta^2)\Theta}.
\end{align}
Concluding the induction since $L$ satisfies the final line. Consequently, for all $0\leq t\leq \tau_0$, we have that
\[
p_t\leq 8t\eta\Gamma\sqrt{n}\eps_0\Theta\beta=c_0t\eta\eps_0 \Gamma^3n^{3/2}\sqrt{\log K}.
\]
Next, note that, condition on $L$ is implied by
\begin{align}
k&\geq 1000\Gamma^2n(\tau_0\eta\beta)^4\Theta^2\\
&=\order{\Gamma^4n{\frac{K^4}{ n^4\la(\Cb)^4}}\log(\frac{\Gamma\sqrt{n\log K}}{\rho})^4(\|\Cb\|\Gamma\sqrt{n/K})^4(\Gamma \sqrt{n\log K})^2}\\
&=\order{\Gamma^{10}{\frac{K^2\|\Cb\|^4}{\la(\Cb)^4}}\log(\frac{\Gamma\sqrt{n\log K}}{\rho})^4\log^2(K)}
\end{align}
which is implied by $k\geq \order{\Gamma^{10}{\frac{K^2\|\Cb\|^4}{\la(\Cb)^4}}\log(\frac{\Gamma\sqrt{n\log K}}{\rho})^6}$.

Finally, following \eqref{induct induct}, distance satisfies
\[
d_t\leq 20t\eta^2\tau_0 \Gamma\sqrt{n}\eps_0\Theta\beta^2\leq \order{t\eta\eps_0 \frac{\Gamma^4Kn}{\la(\Cb)}\log(\frac{\Gamma\sqrt{n\log K}}{\rho})^2}.
\]
\end{proof}
\subsubsection{Completing the Proof of Theorem \ref{main thm robust2}}
Theorem \ref{main thm robust2} is obtained by the theorem below when we ignore the log terms, and treating $\Gamma$, $\la(\Cb)$ as constants. We also plug in $\eta=\frac{K}{2c_{up}n\Gamma^2\opnorm{\Cb}^2}$.
\begin{theorem} [Training neural nets with corrupted labels] \label{main thm robust22}Let $\{(\x_i,y_i)\}_{i=1}^n$ be an $(s,\eps_0,\delta)$ clusterable noisy dataset as described in Definition \ref{noisy model}. Let $\{\tilde{y}_i\}_{i=1}^n$ be the corresponding noiseless labels. Suppose $|\phi(0)|,|\phi'|,|\phi''|\leq \Gamma$ for some $\Gamma\geq 1$, input noise and the number of hidden nodes satisfy 
\[
\eps_0\leq \order{\frac{\la(\Cb)}{\Gamma^2K\log(\frac{\Gamma\sqrt{n\log K}}{\rho})}}\quad\text{and}\quad k\geq \order{\Gamma^{10}{\frac{K^2\|\Cb\|^4}{\la(\Cb)^4}}\log(\frac{\Gamma\sqrt{n\log K}}{\rho})^6}.
\]
where $\Cb\in\R^{K\times d}$ is the matrix of cluster centers. Set learning rate $\eta\leq \frac{K}{2c_{up}n\Gamma^2\opnorm{\Cb}^2}$ and randomly initialize $\W_0\distas\Nn(0,1)$. With probability $1-3/K^{100}-K\exp(-100d)$, after $\tau= \order{\frac{K}{\eta n\la(\Cb)}}\log(\frac{\Gamma\sqrt{n\log K}}{\rho})$ iterations, for all $1\leq i\leq n$, we have that
\begin{itemize}

\item The per sample normalized $\ell_2$ norm bound satisfies
\[
\frac{\tn{f(\W_{\tau},\X)-\tilde{\y}}}{\sqrt{n}}\leq 4\rho+c\frac{\eps_0\Gamma^3K\sqrt{\log K}}{\la(\Cb)}\log(\frac{\Gamma\sqrt{n\log K}}{\rho}).
\]
\item Suppose $\rho\leq\delta/8$. Denote the total number of prediction errors with respect to true labels (i.e.~not satisfying \eqref{pls satisfy this eq}) by $\text{err}(\W)$. With same probability, $\text{err}(\W_\tau)$ obeys
\[
\frac{\text{err}(\W_\tau)}{n}\leq c\frac{\eps_0K}{\delta}\frac{\Gamma^3\sqrt{\log K}}{\la(\Cb)}\log(\frac{\Gamma\sqrt{n\log K}}{\rho}).
\]
\item Suppose $\rho\leq\delta/8$ and $\eps_0\leq c'\delta\min(\frac{ \la(\Cb)^2}{{\Gamma^5K^2}\log(\frac{\Gamma\sqrt{n\log K}}{\rho})^3},\frac{1}{\Gamma\sqrt{d}})$, then, $\W_\tau$ assigns all inputs in the $\eps_0$ neighborhood of cluster centers to the correct labels i.e.~for any cluster center $\cb_\ell$ and $\x$ obeying $\tn{\x-\cb_\ell}\leq \eps_0$, $\x$ receives the ground truth label of $\cb_\ell$.
\item Finally, for any iteration count $0\leq t\leq \tau$ the total distance to initialization is bounded as
\begin{align}
\tf{\W_t-{\W}_0} \leq  \order{\Gamma\sqrt{\frac{K\log K}{\la(\Cb)}}+t\eta\eps_0 \frac{\Gamma^4Kn}{\la(\Cb)}\log(\frac{\Gamma\sqrt{n\log K}}{\rho})^2}.\label{dst W bound}
\end{align}
\end{itemize}
\end{theorem}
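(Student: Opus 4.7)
The plan is to couple the noisy gradient descent trajectory $\W_\tau$ (trained on $(\X,\y)$) with a fictitious clean trajectory $\tilde\W_\tau$ (trained on $(\tilde\X,\y)$, i.e.\ cluster-center inputs but identical noisy labels) started from the same Gaussian initialization $\tilde\W_0=\W_0$. The clean trajectory is exactly the perfectly clustered case of Theorem~\ref{main thm robust}, so for the stopping time $\tau\asymp \frac{K}{\eta n\,\la(\Cb)}\log(\Gamma\sqrt{n\log K}/\rho)$ it supplies, with probability $1-2/K^{100}$, the entrywise residual bound $\tin{f(\tilde\W_\tau,\tilde\X)-\tilde\y}\le 4\rho$ together with $\tf{\tilde\W_\tau-\tilde\W_0}\lesssim \Gamma\sqrt{K\log K/\la(\Cb)}$. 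All the remaining work is to transfer these guarantees from $\tilde\W_\tau$ to $\W_\tau$, which is exactly what the perturbation Theorem~\ref{robust path} delivers once its hypotheses on $\eps_0$ and $k$ are verified; that verification is routine bookkeeping against the explicit hypotheses stated here.

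For the first bullet I apply the triangle inequality
\[
\tn{f(\W_\tau,\X)-\tilde\y}\le \tn{f(\W_\tau,\X)-f(\tilde\W_\tau,\tilde\X)}+\tn{f(\tilde\W_\tau,\tilde\X)-\tilde\y},
\]
bound the first summand by $c_0\,\tau\eta\eps_0\Gamma^3 n^{3/2}\sqrt{\log K}$ via Theorem~\ref{robust path}, and the second by $4\rho\sqrt{n}$ using the $\ell_\infty$ bound above. Dividing by $\sqrt n$ and substituting $\tau\eta\asymp \frac{K}{n\la(\Cb)}\log(\cdot)$ gives the stated bound. For bullet~2, I note that any training index $i$ violating \eqref{pls satisfy this eq} must have $|f(\W_\tau,\x_i)-\tilde y_i|\ge \delta/2$ (since $\rho\le\delta/8$), so a Markov-type counting argument gives $\mathrm{err}(\W_\tau)\le \tfrac{4}{\delta^2}\tn{f(\W_\tau,\X)-\tilde\y}^2$. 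Combining with bullet~1 and absorbing the $\rho^2$ contribution (dominated by the $\eps_0$-term under the theorem's hypotheses) produces the advertised error-rate bound.

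The third bullet is the main obstacle because $\ell_2$ control does not translate into a uniform pointwise guarantee over a continuous $\eps_0$-ball. My plan is to decompose, for any $\x$ within $\eps_0$ of a cluster center $\cb_\ell$,
\[
|f(\W_\tau,\x)-\tilde y(\cb_\ell)|\le \underbrace{|f(\W_\tau,\x)-f(\W_\tau,\cb_\ell)|}_{(A)}+\underbrace{|f(\W_\tau,\cb_\ell)-f(\tilde\W_\tau,\cb_\ell)|}_{(B)}+\underbrace{|f(\tilde\W_\tau,\cb_\ell)-\tilde y(\cb_\ell)|}_{(C)}.
\]
Term $(C)\le 4\rho$ by Theorem~\ref{main thm robust} evaluated at any training point whose cluster center is $\cb_\ell$, since the corresponding $\tilde\x_i=\cb_\ell$ has clean label $\tilde y(\cb_\ell)$. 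Term $(B)$ is handled by a Lipschitz-in-$\W$ estimate using $|\phi'|\le\Gamma$, giving a bound proportional to $\Gamma\,\tf{\W_\tau-\tilde\W_\tau}/\sqrt{k}$, which is small by the perturbation bound of Theorem~\ref{robust path}. Term $(A)$ is the subtle one: I would reuse the mean-value expansion from the proof of Theorem~\ref{double pert}, obtaining $(A)\lesssim C\Gamma\eps_0\,\opnorm{\W_\tau-\W_0}+c\Gamma\eps_0\sqrt{d}$ with probability $1-Ke^{-100d}$ after union-bounding the sub-Gaussian concentration in~\eqref{w0bnd} over the $K$ cluster centers (and extending from centers to the continuous $\eps_0$-ball by a second Lipschitz step). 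The two regimes for $\eps_0$ in the hypothesis, namely $\eps_0\lesssim \delta\la(\Cb)^2/(\Gamma^5K^2\log^3(\cdot))$ and $\eps_0\lesssim \delta/(\Gamma\sqrt d)$, are exactly what is needed to drive the total below $\delta/2-4\rho\ge \delta/4$, which places $f(\W_\tau,\x)$ strictly closer to $\tilde y(\cb_\ell)$ than to any other class label.

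Bullet~4 is then nearly free: a final triangle inequality $\tf{\W_t-\W_0}\le \tf{\W_t-\tilde\W_t}+\tf{\tilde\W_t-\tilde\W_0}$ combined with Theorem~\ref{robust path} for the first summand and Theorem~\ref{main thm robust} for the second matches \eqref{dst W bound} exactly. The overall probability budget $1-3/K^{100}-Ke^{-100d}$ splits cleanly as $2/K^{100}$ from Theorem~\ref{main thm robust}, $1/K^{100}$ from the initial residual bound of Lemma~\ref{upresz}, and $Ke^{-100d}$ from the union-bounded sub-Gaussian concentration used in bullet~3.
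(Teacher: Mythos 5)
Your overall strategy is the same as the paper's: couple the noisy trajectory with a clean trajectory started from the same $\W_0$, invoke Theorem \ref{main thm robust} for the clean iterates and Theorem \ref{robust path} for the coupling error, and finish bullets 1 and 4 by triangle inequalities. Those two bullets, and the general architecture of bullet 3, are fine. However, two steps as you describe them do not go through.

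First, your bullet-2 argument is genuinely flawed. The Markov-type count $\text{err}(\W_\tau)\le \frac{4}{\delta^2}\tn{f(\W_\tau,\X)-\tilde\y}^2$ applied to the \emph{full} residual leaves a term of order $\rho^2/\delta^2$, and nothing in the hypotheses lets you absorb it into the $\eps_0$-term: $\rho$ may be as large as $\delta/8$ while $\eps_0$ may be arbitrarily small, so $\rho^2/\delta^2$ can be a constant whereas the advertised bound has no $\rho$-dependence at all. The paper avoids this by counting only against the perturbation $\rb_\tau-\tilde\rb_\tau$: since $\tin{\tilde\rb_\tau}\le 4\rho\le\delta/4$ entrywise, any index with $|r_{\tau,i}|\ge\delta/2$ forces $|r_{\tau,i}-\tilde r_{\tau,i}|\ge\delta/4$, whence $|\Ic|\,\delta/4\le\tone{\rb_\tau-\tilde\rb_\tau}\le\sqrt{n}\,\tn{\rb_\tau-\tilde\rb_\tau}$, which is $O(\eps_0\cdot)$ with no $\rho$ term and is linear (not quadratic) in $\eps_0$, matching the statement. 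Second, in bullet 3 the passage from the $K$ cluster centers to the continuous $\eps_0$-ball cannot be done by ``a second Lipschitz step'': the deterministic Lipschitz constant of $\x\mapsto f(\W_0,\x)$ is governed by $\opnorm{\W_0}\asymp\sqrt{k}+\sqrt{d}$, which would give $\Gamma\eps_0\sqrt{k}$, far too large since the hypothesis only provides $\eps_0\lesssim\delta/(\Gamma\sqrt{d})$. The paper gets the needed $\sup_{\tn{\x-\cb}\le\eps_0}|f(\W_0,\x)-f(\W_0,\cb)|\lesssim\Gamma\eps_0\sqrt{d}$ via a chaining (Dudley/$\gamma_2$) bound for the sub-Gaussian process, i.e.\ Theorem \ref{double pert2}; some such uniform-process argument is required, not just a union bound over centers plus Lipschitzness. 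Two smaller points: your claim that term $(B)$ is $\Gamma\tf{\W_\tau-\tilde\W_\tau}/\sqrt{k}$ is unjustified (Cauchy--Schwarz with $\tn{\vb}=1$ gives $\Gamma\tf{\W_\tau-\tilde\W_\tau}$, no $1/\sqrt{k}$ gain; fortunately the weaker bound still suffices under the stated $\eps_0$ condition), and the mean-value expansion you borrow from Theorem \ref{double pert} requires verifying $\tf{\W_\tau-\W_0}\lesssim\sqrt{k}$, which should be checked against the distance bound of bullet 4 and the assumed lower bound on $k$.
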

\begin{proof} Note that proposed number of iterations $\tau$ is set so that it is large enough for Theorem \ref{main thm robust} to achieve small error in the clean input model ($\eps_0=0$) and it is small enough so that Theorem \ref{robust path} is applicable. In light of Theorems \ref{robust path} and \ref{main thm robust} consider two gradient descent iterations starting from $\W_0$ where one uses clean dataset (as if input vectors are perfectly cluster centers) $\tilde{\X}$ and other uses the original dataset $\X$. Denote the prediction residual vectors of the noiseless and original problems at time $\tau$ with respect true ground truth labels $\tilde{\y}$ by $\tilde{\rb}_\tau=f(\tilde{\W}_{\tau},\tilde{\X})-\tilde{\y}$ and $\rb_\tau=f(\W_{\tau},\X)-\tilde{\y}$ respectively. Applying Theorems \ref{robust path} and \ref{main thm robust}, under the stated conditions, we have that
\begin{align}
\tin{\tilde{\rb}_\tau}&\leq 4\rho\quad\text{and}\quad\\
\tn{\rb_\tau-\tilde{\rb}_\tau}&\leq c\eps_0\frac{K}{ n\la(\Cb)}\log(\frac{\Gamma\sqrt{n\log K}}{\rho})\Gamma^3n^{3/2}\sqrt{\log K}\\
&=c\frac{\eps_0\Gamma^3K\sqrt{n\log K}}{\la(\Cb)}\log(\frac{\Gamma\sqrt{n\log K}}{\rho})
\end{align}
\noindent {\bf{First statement:}} The latter two results imply the $\ell_2$ error bounds on $\rb_\tau=f(\W_{\tau},\X)-\tilde{\y}$. 

\noindent {\bf{Second statement:}} To assess the classification rate we count the number of entries of $\rb_\tau=f(\W_{\tau},\X)-\tilde{\y}$ that is larger than the class margin $\delta/2$ in absolute value. Suppose $\rho\leq \delta/8$. Let $\Ic$ be the set of entries obeying this. For $i\in\Ic$ using $\tin{\tilde{\rb}_\tau}\leq 4\rho\leq \delta/4$, we have
\[
|r_{\tau,i}|\geq \delta/2\implies |r_{\tau,i}|+|r_{\tau,i}-\bar{r}_{\tau,i}|\geq \delta/2\implies |r_{\tau,i}-\bar{r}_{\tau,i}|\geq \delta/4.
\]
Consequently, we find that
\[
\tone{\rb_\tau-\rbb_\tau}\geq |\Ic|\delta/4.
\]
Converting $\ell_2$ upper bound on the left hand side to $\ell_1$, we obtain
\[
c\sqrt{n}\frac{\eps_0\Gamma^3K\sqrt{n\log K}}{\la(\Cb)}\log(\frac{\Gamma\sqrt{n\log K}}{\rho})\geq |\Ic|\delta/4.
\]
Hence, the total number of errors is at most
\[
|\Ic|\leq c'\frac{\eps_0nK}{\delta}\frac{\Gamma^3\sqrt{\log K}}{\la(\Cb)}\log(\frac{\Gamma\sqrt{n\log K}}{\rho})
\]
\noindent{\bf{Third statement -- Showing zero error:}} Pick an input $\x$ within $\eps_0$ neighborhood of one of the cluster centers $\cb\in (\cb_\ell)_{\ell=1}^K$. We will argue that $f(\W_\tau,\x)-f(\tilde{\W}_\tau,\cb)$ is smaller than $\delta/4$ when $\eps_0$ is small enough. We again write
\[
|f(\W_\tau,\x)-f(\tilde{\W}_\tau,\cb)|\leq |f(\W_\tau,\x)-f(\tilde{\W}_\tau,{\x})|+|f(\tilde{\W}_\tau,{\x})-f(\tilde{\W}_\tau,\cb)|
\]
The first term can be bounded via
\begin{align}
|f(\W_\tau,\x)-f(\tilde{\W}_\tau,{\x})|&=|\vb^T\phi(\W_\tau\x)-\vb^T\phi(\tilde{\W}_\tau\x)|\leq \tn{\vb}\tn{\phi({\W}_\tau\x)-\phi(\tilde{\W}_\tau\x)}\\
&\leq \Gamma \tf{\W_{\tau}-\tilde{\W}_{\tau}}\\
&\leq \order{\eps_0 \frac{\Gamma^5K^2}{\la(\Cb)^2}\log(\frac{\Gamma\sqrt{n\log K}}{\rho})^3}
\end{align}
Next, we need to bound
\begin{align}
|f(\tilde{\W}_\tau,{\x})-f(\tilde{\W}_\tau,\cb)|&\leq |\vb^T\phi(\tilde{\W}_\tau\x)-\vb^T\phi(\tilde{\W}_\tau\cb)|
\end{align}
where $\tf{\tilde{\W}_\tau-\W_0}\leq \order{\Gamma\sqrt{\frac{K\log K}{\la(\Cb)}}}$, $\tn{\x-\cb}\leq \eps_0$ and $\W_0\distas\Nn(0,\Iden)$. Consequently, using by assumption we have
\[
k\geq \order{\tf{\tilde{\W}-\W_0}^2}=\order{\Gamma^2\frac{K\log K}{\la(\Cb)}},
\]
and applying Theorem \ref{double pert2} (which is a variation of Theorem \ref{double pert}), with probability at $1-K\exp(-100d)$, for all inputs $\x$ lying $\eps_0$ neighborhood of cluster centers, we find that
\begin{align}
|f(\tilde{\W}_\tau,{\x})-f(\tilde{\W}_\tau,\cb)|&\leq C'\Gamma \eps_0(\tf{\tilde{\W}_\tau-\W_0}+\sqrt{d})\\
&C\Gamma \eps_0(\Gamma\sqrt{\frac{K \log K}{\la(\Cb)}}+\sqrt{d}).
\end{align}
Combining the two bounds above we get
\begin{align}
|f(\W_\tau,\x)-f(\tilde{\W}_\tau,\cb)|&\leq \eps_0\order{ \frac{\Gamma^5K^2}{\la(\Cb)^2}\log(\frac{\Gamma\sqrt{n\log K}}{\rho})^3+\Gamma (\Gamma\sqrt{\frac{K \log K}{\la(\Cb)}}+\sqrt{d})}\\
&\leq \eps_0\order{ \frac{\Gamma^5K^2}{\la(\Cb)^2}\log(\frac{\Gamma\sqrt{n\log K}}{\rho})^3}.
\end{align}
Hence, if $\eps_0\leq c'\delta\min(\frac{ \la(\Cb)^2}{{\Gamma^5K^2}\log(\frac{\Gamma\sqrt{n\log K}}{\rho})^3},\frac{1}{\Gamma\sqrt{d}})$, we obtain that, for all $\x$, the associated cluster $\cb$ and true label assigned to cluster $\tilde{y}=\tilde{y}(\cb)$, we have that
\[
|f(\W_\tau,\x)-\tilde{y}|< |f(\tilde{\W}_\tau,\cb)-f(\W_\tau,\x)|+|f(\tilde{\W}_\tau,\cb)-\tilde{y}|\leq 4\rho + \frac{\delta}{4}.
\]
If $\rho\leq \delta/8$, we obtain
\[
|f(\W_\tau,\x)-\tilde{y}|< \delta/2
\]
hence, $\W_\tau$ outputs the correct decision for all samples.

\noindent{\bf{Fourth statement -- Distance:}} This follows from the triangle inequality
\[
\tf{\W_\tau-\W_0}\leq \tf{\W_\tau-\tilde{\W}_\tau}+\tf{\tilde{\W}_\tau-\W_0}
\]
We have that right hand side terms are at most $ \order{\Gamma\sqrt{\frac{K\log K}{\la(\Cb)}}}$ and $\order{t\eta\eps_0 \frac{\Gamma^4Kn}{\la(\Cb)}\log(\frac{\Gamma\sqrt{n\log K}}{\rho})^2}$ from Theorems \ref{robust path} and \ref{main thm robust} respectively. This implies \eqref{dst W bound}.
\end{proof}

\section{Proof of Lemma \ref{simple pert}}
Create two matrices $\X\in\R^{s\times d}$ and $\tilde{\X}\in\R^{s\times d}$ by concatenating the input samples. Note that the matrix $\mtx{X}-\tilde{\mtx{X}}$ has i.i.d.~$\Nn(0,2\eps_0^2/d)$ entries. Thus, using standard results regarding the concentration of the spectral norm with probability at least $1-e^{-d/2}$, we have
\[
\|\X-\tilde{\X}\|\leq \sqrt{2}\left(\sqrt{\frac{s}{d}}+2\right)\eps_0\leq 5\eps_0.
\]
Define the vectors $\y,\tilde{\y}\in\R^s$ with entries given by $y_i$ and $\tilde{y}_i$, respectively. Suppose $\W$ fits these labels perfectly. Using the fact that $\tn{\vb}=1$, we can conclude that
\begin{align*}
\sqrt{s}\delta&\leq \tn{\y-\tilde{\y}}=\tn{f(\W,\X)-f(\W,\tilde{\X})},\\
&=\tn{\vb^T(\phi(\W\X)-\phi(\W\tilde{\X}))},\\
&\leq\Gamma\tn{\vb}\tf{\W(\X-\tilde{\X})},\\
&\leq \Gamma\|\X-\tilde{\X}\|\tf{\W}\leq 5\Gamma\eps_0 \tf{\W}.
\end{align*}
This implies the desired lower bound on $\tf{\W}$.
\section{Uniform guarantee for minimum distance}\label{singlabpert}
\begin{theorem}\label{double pert2}
Assume $\abs{\phi'},\abs{\phi''}\le \Gamma$ and $k\gtrsim d$. Suppose $\W_0\distas\Nn(0,1)$. Let $\cb_1,\dots,\cb_K$ be cluster centers. Then, with probability at least $1-2e^{-(k+d)}-Ke^{-100d}$ over $\W_0$, any matrix $\W$ satisfying $\fronorm{\mtx{W}-\mtx{W}_0}\lesssim\sqrt{k}$ satisfies the following. For all $1\leq i\leq K$,
\begin{align*}
\sup_{\tn{\x-\cb_i},\tn{\tilde{\x}-\cb_i}\leq \eps_0}|f(\mtx{W},\vct{x})- f(\mtx{W},\tilde{\vct{x}})|\leq C\Gamma\eps_0(\opnorm{\mtx{W}-\mtx{W}_0}+\sqrt{d}).
\end{align*}
\end{theorem}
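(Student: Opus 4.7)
\begin{proofsk}
The plan is to upgrade Theorem~\ref{double pert} from a single pair to a uniform bound over the $\eps_0$-neighbourhood of each $\cb_i$, and then union-bound over $i=1,\dots,K$. First, inequality \eqref{meanvalineq} from the proof of Theorem~\ref{double pert} is purely deterministic on the event $\opnorm{\W_0}\le 2(\sqrt{k}+\sqrt{d})$ (which holds with probability $\ge 1-2e^{-(k+d)}$) and yields, for every $\W$ with $\fronorm{\W-\W_0}\lesssim\sqrt{k}$ and every unit-norm $\x,\tilde{\x}$,
\[
\bigl|[f(\W,\x)-f(\W,\tilde{\x})]-[f(\W_0,\x)-f(\W_0,\tilde{\x})]\bigr|\;\le\; C\Gamma\,\|\x-\tilde{\x}\|\,\opnorm{\W-\W_0}.
\]
Specialized to $\x,\tilde{\x}$ in an $\eps_0$-ball around $\cb_i$ this produces the $C\Gamma\eps_0\opnorm{\W-\W_0}$ contribution, so it remains to show, with probability at least $1-Ke^{-100d}$,
\[
\sup_{1\le i\le K}\;\sup_{\|\x-\cb_i\|,\|\tilde{\x}-\cb_i\|\le\eps_0}\bigl|f(\W_0,\x)-f(\W_0,\tilde{\x})\bigr|\;\le\;C\Gamma\eps_0\sqrt{d}.
\]

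For the remaining $\W_0$-only inequality, the mean value theorem bounds the target by $2\eps_0\sup_{\|\hat{\x}-\cb_i\|\le\eps_0}\|\nabla_\x f(\W_0,\hat{\x})\|$ with $\nabla_\x f(\W_0,\hat{\x})=\sum_\ell\vb_\ell\,\phi'(\w_\ell^T\hat{\x})\,\w_\ell$. The key cancellation is $\sum_\ell\vb_\ell=0$; combined with Stein's identity it makes $\vct{u}^T\nabla_\x f(\W_0,\hat{\x})$ mean-zero for any fixed unit $\vct{u}$, so it is a sum of $k$ independent $O(\Gamma/\sqrt{k})$-sub-Gaussian summands and hence an $O(\Gamma)$-sub-Gaussian scalar. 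Sweeping $\vct{u}$ over a $1/2$-net of the unit sphere (cardinality $5^d$) gives, at fixed $\hat{\x}$,
\[
\Pro\bigl(\|\nabla_\x f(\W_0,\hat{\x})\|\ge C_1\Gamma\sqrt{d}\bigr)\le e^{-c_1 d},
\]
where $c_1$ can be made arbitrarily large by enlarging $C_1$.

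To make this uniform in $\hat{\x}$, note that the Hessian $\nabla_\x^2 f(\W_0,\hat{\x})=\W_0^T\diag(\vb\odot\phi''(\W_0\hat{\x}))\W_0$ has operator norm at most $(\Gamma/\sqrt{k})\opnorm{\W_0}^2=O(\Gamma\sqrt{k})$ on the event of the first step (using $k\gtrsim d$), so $\hat{\x}\mapsto\nabla_\x f(\W_0,\hat{\x})$ is $O(\Gamma\sqrt{k})$-Lipschitz. I would cover the $\eps_0$-ball around $\cb_i$ by an $\eta$-net with $\eta\asymp\sqrt{d/k}$, which absorbs the Lipschitz residual into $C\Gamma\sqrt{d}$; the net has cardinality $\le(1+3\eps_0/\eta)^d$. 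Choosing $C_1$ large enough that $c_1$ dominates $100+\log(\text{net size})/d$, and union-bounding over this net, the sphere net, and the $K$ cluster centers, fits the failure probability into the $Ke^{-100d}$ budget. Combining with the first step completes the proof.

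The hard part is this last step: absent the $\sum_\ell\vb_\ell=0$ cancellation, $\nabla_\x f(\W_0,\hat{\x})$ would scale like $\Gamma\sqrt{k}$ and the claim would fail. The proof must therefore exploit this cancellation simultaneously with the $k\gtrsim d$ hypothesis to keep the $\hat{\x}$-net cardinality moderate, so that a sub-Gaussian exponent of order $d$ controls every point uniformly; any residual logarithmic factor in $k/d$ is absorbed into the constant $C$.
\end{proofsk}
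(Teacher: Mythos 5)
Your first step coincides with the paper's: inequality \eqref{meanvalineq} is indeed deterministic on the event $\opnorm{\W_0}\le 2(\sqrt{k}+\sqrt{d})$ and handles the $C\Gamma\eps_0\opnorm{\W-\W_0}$ term, and your use of the cancellation $\onebb^T\vb=0$ to make the $\W_0$-only quantity a centered sub-Gaussian object is exactly the mechanism the paper exploits (a small quibble: the individual summands $\vb_\ell\phi'(\w_\ell^T\hat{\x})\w_\ell^T\vct{u}$ are \emph{not} mean zero -- Stein's identity gives each the mean $\vb_\ell\,\E[\phi''(\w^T\hat{\x})]\,\hat{\x}^T\vct{u}$ -- only their sum is, because the common mean is multiplied by $\sum_\ell\vb_\ell=0$; this is harmless since centering each summand changes nothing after summation).

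The genuine gap is in your final uniformization step. Because the gradient field $\hat{\x}\mapsto\nabla_\x f(\W_0,\hat{\x})$ is only $O(\Gamma\sqrt{k})$-Lipschitz, your single-scale net over the $\eps_0$-ball must have mesh $\eta\asymp\sqrt{d/k}$, hence cardinality roughly $(\eps_0\sqrt{k/d})^d$, and the union bound then forces the threshold $C_1\Gamma\sqrt{d}$ up to $C_1\gtrsim\sqrt{\log\bigl(e+\eps_0\sqrt{k/d}\bigr)}$ in order to stay within the $Ke^{-100d}$ budget. The resulting bound is $\Gamma\eps_0\sqrt{d\,\log(k/d)}$ up to constants, and this logarithm \emph{cannot} be absorbed into $C$: the constant in the statement must be independent of $k$ and $d$, while in the regime of Theorem \ref{main thm robust22} the width $k$ is polynomially large in $K,\opnorm{\Cb},1/\la(\Cb)$, so $\log(k/d)$ is unbounded. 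No single-scale discretization fixes this, since any net fine enough to beat the $\Gamma\sqrt{k}$ Lipschitz constant is exponentially large in exactly the way that reintroduces the logarithm. The missing idea is a multi-scale (chaining) argument: the paper observes that the process $X_{\x}=f(\W_0,\x)$ has sub-Gaussian increments $\|X_{\x}-X_{\x'}\|_{\psi_2}\le c\Gamma\twonorm{\x-\x'}$ (its display \eqref{subgauss}, again via $\onebb^T\vb=0$) and invokes generic chaining/Dudley on the $\eps_0$-ball, whose $\gamma_2$ functional is $\asymp\eps_0\sqrt{d}$, yielding $\sup_{\tn{\x-\cb_i}\le\eps_0}|f(\W_0,\x)-f(\W_0,\cb_i)|\le c'\Gamma\eps_0\sqrt{d}$ with probability $1-e^{-100d}$, with no dependence on $k$; a union bound over the $K$ centers and the triangle inequality through $\cb_i$ then finish the proof. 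Replacing your net-plus-union-bound step by this chaining bound (applied either to $f(\W_0,\cdot)$ directly or to the gradient field) repairs the argument; everything else you wrote is sound.
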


\begin{proof}
Note that
\begin{align*}
\abs{f(\W,\vct{x})-f(\W,\widetilde{\vct{x}})}=&\abs{\vct{v}^T\left(\phi\left(\W\vct{x}\right)-\phi\left(\W\widetilde{\vct{x}}\right)\right)}\\
\le&\abs{\vct{v}^T\left(\phi\left(\W\vct{x}\right)-\phi\left(\W\widetilde{\vct{x}}\right)\right)-\vct{v}^T\left(\phi\left(\W_0\vct{x}\right)-\phi\left(\W_0\widetilde{\vct{x}}\right)\right)}+\abs{\vct{v}^T\left(\phi\left(\W_0\vct{x}\right)-\phi\left(\W_0\widetilde{\vct{x}}\right)\right)}
\end{align*}
To continue note that by the general mean value theorem there exists a point $(\overline{\W},\overline{\vct{x}})$ in the square $(\mtx{W}_0,\vct{x}), (\mtx{W}_0,\widetilde{\vct{x}}),(\mtx{W},\vct{x}),$ and $(\mtx{W}, \widetilde{\vct{x}})$ such that
\begin{align*}
&\left(f(\mtx{W},\vct{x})-f(\mtx{W}_0,\vct{x})\right)-\left(f(\mtx{W},\widetilde{\vct{x}})-f(\mtx{W}_0,\widetilde{\vct{x}})\right)\\
&\quad\quad=(\vct{x}-\widetilde{\vct{x}})^T\overline{\mtx{W}}^T\text{diag}\left(\vct{v}\right)\text{diag}\left(\phi''(\overline{\mtx{W}}\overline{\vct{x}})\right)(\mtx{W}-\mtx{W}_0)\overline{\vct{x}}+\vct{v}^T\text{diag}\left(\phi'\left(\overline{\mtx{W}}\overline{\vct{x}}\right)\right)(\mtx{W}-\mtx{W}_0)(\vct{x}-\widetilde{\vct{x}})
\end{align*}
Using the above  we have that
\begin{align}
\label{meanvalineq}
\Big|\left(f(\mtx{W},\vct{x})-f(\mtx{W}_0,\vct{x})\right)-&\left(f(\mtx{W},\widetilde{\vct{x}})-f(\mtx{W}_0,\widetilde{\vct{x}})\right)\Big|\\
\overset{(a)}{\le}&\abs{(\vct{x}-\widetilde{\vct{x}})^T\overline{\mtx{W}}^T\text{diag}\left(\vct{v}\right)\text{diag}\left(\phi''(\overline{\mtx{W}}\overline{\vct{x}})\right)(\mtx{W}-\mtx{W}_0)\overline{\vct{x}}}\nonumber\\
&+\abs{\vct{v}^T\text{diag}\left(\phi'\left(\overline{\mtx{W}}\overline{\vct{x}}\right)\right)(\mtx{W}-\mtx{W}_0)(\vct{x}-\widetilde{\vct{x}})}\nonumber\\
\overset{(b)}{\le}&\left(\infnorm{\vct{v}}\twonorm{\overline{\vct{x}}}\opnorm{\overline{\W}}+\twonorm{\vct{v}}\right)\Gamma\twonorm{\vct{x}-\widetilde{\vct{x}}}\opnorm{\mtx{W}-\mtx{W}_0}\nonumber\\
\overset{(c)}{\le} &\left(\frac{1}{\sqrt{k}}\twonorm{\overline{\vct{x}}}\opnorm{\overline{\W}}+1\right)\Gamma\twonorm{\vct{x}-\widetilde{\vct{x}}}\opnorm{\mtx{W}-\mtx{W}_0}\nonumber\\
\overset{(d)}{\le}&\left(\frac{1}{\sqrt{k}}\opnorm{\overline{\W}}+1\right)\Gamma\twonorm{\vct{x}-\widetilde{\vct{x}}}\opnorm{\mtx{W}-\mtx{W}_0}\nonumber\\
\overset{(e)}{\le} &\left(\frac{1}{\sqrt{k}}\opnorm{\mtx{W}_0}+\frac{1}{\sqrt{k}}\opnorm{\overline{\W}-\mtx{W}_0}+1\right)\Gamma\twonorm{\vct{x}-\widetilde{\vct{x}}}\opnorm{\mtx{W}-\mtx{W}_0}\nonumber\\
\overset{(f)}{\le} &\left(\frac{1}{\sqrt{k}}\opnorm{\mtx{W}_0}+\frac{1}{\sqrt{k}}\fronorm{\overline{\W}-\mtx{W}_0}+1\right)\Gamma\twonorm{\vct{x}-\widetilde{\vct{x}}}\opnorm{\mtx{W}-\mtx{W}_0}\nonumber\\
\overset{(g)}{\le} &\left(\frac{1}{\sqrt{k}}\fronorm{\overline{\W}-\mtx{W}_0}+3+2\sqrt{\frac{d}{k}}\right)\Gamma\twonorm{\vct{x}-\widetilde{\vct{x}}}\opnorm{\mtx{W}-\mtx{W}_0}\nonumber\\
\overset{(h)}{\le} &C\Gamma\twonorm{\vct{x}-\widetilde{\vct{x}}}\opnorm{\mtx{W}-\mtx{W}_0}\label{final dist bound}
\end{align}
Here, (a) follows from the triangle inequality, (b) from simple algebraic manipulations along with the fact that $\abs{\phi'(z)}\le \Gamma$ and $\abs{\phi''(z)}\le \Gamma$, (c) from the fact that $\vct{v}_\ell=\pm \frac{1}{\sqrt{k}}$, (d) from $\twonorm{\vct{x}}=\twonorm{\widetilde{\vct{x}}}=1$ which implies $\twonorm{\overline{\vct{x}}}\le 1$, (e) from triangular inequality, (f) from the fact that Frobenius norm dominates the spectral norm, (g) from the fact that with probability at least $1-2e^{-(d+k)}$, $\opnorm{\mtx{W}_0}\le 2(\sqrt{k}+\sqrt{d})$, and (h) from the fact that $\opnorm{\overline{\W}-\W_0}\le \fronorm{\W-\W_0}\le \widetilde{c}\sqrt{k}$ and $k\ge cd$.

Next we note that for a Gaussian random vector $\vct{g}\sim\mathcal{N}(\vct{0},\mtx{I}_d)$ we have
\begin{align}
\label{subgauss}
\|\phi(\vct{g}^T\vct{x})-\phi(\vct{g}^T\widetilde{\vct{x}})\|_{\psi_2}=&\|\phi(\vct{g}^T\vct{x})-\phi(\vct{g}^T\widetilde{\vct{x}})\|_{\psi_2}\nonumber\\
=&\|\phi'\left(t\vct{g}^T\vct{x}+(1-t)\vct{g}^T\widetilde{\vct{x}}\right)\vct{g}^T(\vct{x}-\widetilde{\vct{x}})\|_{\psi_2}\nonumber\\
\le&\Gamma\|\vct{g}^T(\vct{x}-\widetilde{\vct{x}})\|_{\psi_2}\nonumber\\
\le& c\Gamma\twonorm{\vct{x}-\widetilde{\vct{x}}}.
\end{align}
Also note that
\begin{align*}
f(\mtx{W}_0,\vct{x})-f(\mtx{W}_0,\widetilde{\vct{x}})=&\vct{v}^T\left(\phi\left(\mtx{W}_0\vct{x}\right)-\phi\left(\mtx{W}_0\widetilde{\vct{x}}\right)\right)\\
\sim&\sum_{\ell=1}^k \vct{v}_\ell \left(\phi(\vct{g}_\ell^T\vct{x})-\phi(\vct{g}_\ell^T\widetilde{\vct{x}})\right)
\end{align*}
where $\vct{g}_1,\vct{g}_2,\ldots,\vct{g}_k$ are i.i.d.~vectors with $\Nn(0,\Iden_d)$ distribution. Also for $\vct{v}$ obeying $\vct{1}^T\vct{v}=0$ this random variable has mean zero. Hence, using the fact that weighted sum of subGaussian random variables are subgaussian combined with \eqref{subgauss} we conclude that $f(\mtx{W}_0,\vct{x})-f(\mtx{W}_0,\widetilde{\vct{x}})$ is also subGaussian with Orlicz norm obeying $\|f(\mtx{W}_0,\vct{x})-f(\mtx{W}_0,\widetilde{\vct{x}})\|_{\psi_2}\le c\Gamma\twonorm{\vct{v}}\twonorm{\vct{x}-\widetilde{\vct{x}}} $. Now, suppose $\x,\tilde{\x}$ be within $\eps_0$ neighborhood of a cluster center $\cb$. We write
\[
|f(\mtx{W}_0,\widetilde{\vct{x}})-f(\mtx{W}_0,{\vct{x}})|\leq |f(\mtx{W}_0,\cb)-f(\mtx{W}_0,{\vct{x}})|+|f(\mtx{W}_0,\widetilde{\vct{x}})-f(\mtx{W}_0,\cb)|
\]
To proceed, since $X_{\x}=f(\mtx{W}_0,\x)$ is a Gaussian process, applying standard chaining bounds \cite{talagrand2006generic}, we find
\begin{align}
\sup_{\tn{\x-\cb}\leq \eps_0} |f(\mtx{W}_0,\cb)-f(\mtx{W}_0,{\vct{x}})|\leq c'\Gamma \eps_0\sqrt{d}\label{clust bound}
\end{align}
with probability $1-\exp(-100d)$. Here $\eps_0\sqrt{d}$ comes from the $\gamma_2$ functional of the scaled ball around the cluster. Applying a union bound over all clusters $\cb_1$ to $\cb_K$, we find that, with $1-\exp(-d)$ probability, \eqref{clust bound} holds uniformly which implies that for all $\x,\tilde{\x}$ pairs of interest
\[
\sup_{\x,\tilde{\x}~\text{within cluster}}|f(\mtx{W}_0,\widetilde{\vct{x}})-f(\mtx{W}_0,{\vct{x}})|\leq 2c'\Gamma \eps_0\sqrt{d}.
\]
Combining this with \eqref{final dist bound}, we conclude with the advertised bound.
\end{proof}

\end{document}